\newcommand{\blind}{1}
\newcommand{\indep}{\perp\!\!\!\!\perp} 
\definecolor{cadmiumgreen}{rgb}{0.0, 0.42, 0.24}
\newcommand{\ber}{\mathrm{Ber}}
\begin{document}

\def\spacingset#1{\renewcommand{\baselinestretch}%
{#1}\small\normalsize} \spacingset{1}


\if1\blind
{
  \title{\bf Transfer Learning Under High-Dimensional Network Convolutional Regression Model}
\author{ \small
Liyuan Wang$^{1\dagger}$, Jiachen Chen$^{2\dagger}$, Kathryn L. Lunetta$^{2}$,\\
\small Danyang Huang$^{1\ast}$,  Huimin Cheng$^{2\ast}$, Debarghya Mukherjee$^{3\ast}$ \\
\small $^{1}$Center for Applied Statistics and School of Statistics, Renmin University of China \\
\small $^{2}$Department of Biostatistics, Boston University, Boston, MA, USA \\
\small $^{3}$Department of Mathematics and Statistics, Boston University, Boston, MA, USA 
}   
\date{}
  \maketitle
  
} \fi

\if0\blind
{
  \bigskip
  \bigskip
  \bigskip
  \begin{center}
    {\LARGE\bf  Transfer Learning Under High-Dimensional  Network Convolutional Regression Model}
\end{center}
  \medskip
} \fi

\begingroup
\renewcommand\thefootnote{}
\footnotetext{
	$^{\dagger}$ These authors contributed equally to this work.
	
	\quad $^{\ast}$ Co-Corresponding authors: Danyang Huang, Huimin Cheng, Debarghya Mukherjee.
}
\endgroup
\bigskip
\begin{abstract}

Transfer learning enhances model performance by utilizing knowledge from related domains, particularly when labeled data is scarce. While existing research addresses transfer learning under various distribution shifts in independent settings, handling dependencies in networked data remains challenging. To address this challenge, we propose a high-dimensional transfer learning framework based on network convolutional regression (NCR), inspired by the success of graph convolutional networks (GCNs). The NCR model incorporates random network structure by allowing each node's response to depend on its features and the aggregated features of its neighbors, capturing local dependencies effectively. Our methodology includes a two-step transfer learning algorithm that addresses domain shift between source and target networks, along with a source detection mechanism to identify informative domains. Theoretically,  we analyze the lasso estimator in the context of a random graph based on the Erdős-Rényi model assumption, demonstrating that transfer learning improves convergence rates when informative sources are present. Empirical evaluations, including simulations and a real-world application using Sina Weibo data,  demonstrate substantial improvements in prediction accuracy, particularly when labeled data in the target domain is limited. 


\end{abstract}

\noindent%
{\it Keywords:}  Network convolution; Neighborhood aggregation; Transfer learning; Domain shift
\vfill

\newpage
\spacingset{1.9} 
\section{Introduction}

Transfer learning has emerged as a powerful tool in modern data analysis, enabling models to leverage knowledge gained from one task or domain to enhance performance in another, particularly when labeled data is limited \citep{pan2009survey,olivas2009handbook}.  The remarkable success of Large Language Models (LLMs) exemplifies this approach, where models like GPT-3 are first pre-trained on vast amounts of unlabeled text data to learn general language representations, then fine-tuned on specific tasks with smaller datasets to achieve state-of-the-art performance across various natural language processing applications.  The efficacy of transfer learning in NLP \cite{daume2009frustratingly,pan2013transfer}, computer vision \citep{ganin2015unsupervised,zoph2018learning}, and medical diagnosis \citep{shin2016deep, zhernakova2009detecting}, further highlights its potential to address challenges posed by limited data availability in various fields.  


In statistical learning theory, transfer learning problems are often categorized based on the relationships between the source and target domains. Two primary scenarios are commonly considered:
a) \emph{Covariate shift}: when the input distribution changes between the source and target domains, but the conditional distribution of the output given the input remains the same, i.e.,  $P_S(X) \neq P_T(X)$ and $P_S(Y|X) = P_T(Y|X)$. 
b) \emph{Posterior drift}: when the conditional distribution of the output given the input changes between the source and target domains, whereas the marginal distribution remains the same, i.e.,  $P_S(Y|X) \neq P_T(Y|X)$ and $P_S(X) = P_T(X)$.  
In practical applications,  covariate shift and posterior drift can occur simultaneously, i.e., the joint distribution of the input and the output variable changes across the domains, often termed as \emph{distribution shift}.

Extensive research has addressed transfer learning under various distribution shift scenarios. In fixed-dimensional  settings,
\citet{cai2021transfer} and \citet{reeve2021adaptive} developed minimax rate-optimal classifiers for nonparametric classification under posterior drift, while \cite{kpotufe2018marginal} focused on covariate shift. Building on these efforts, \citet{pathak2022new, maitylabelshift2022, cai2024transfer}  extended to nonparametric regression. 
In high-dimensional settings, transfer learning poses additional challenges due to the curse of dimensionality and the need for regularization to handle sparsity. \citet{li2022transfer} analyzed high-dimensional sparse linear regression under distribution shift, establishing minimax-optimal estimators for the target domain parameters, 
This framework was further extended to generalized linear models \citep{tian2023transfer,li2023estimation}, gaussian graphical models \citep{li2023transfer}, and unified through a performance gap framework \citep{wang2023gap}.

While these studies have advanced the theoretical understanding of transfer learning, they primarily focus on settings where observations are assumed to be independent and identically distributed (i.i.d.). However, various applications frequently encounter data exhibiting complex dependencies induced by network structures, where interconnected nodes demonstrate mutual influence. 
For instance, in social networks, users' decisions to adopt new technologies, share content, or make purchases can be significantly influenced by their friends \citep{bakshy2012role, chen2004impact}. These networked structures introduce three critical challenges: (1) Dependence Structure: Unlike i.i.d. data, nodes in a network are influenced by their neighbors, creating dependencies that violate the independence assumptions.
(2) Dependence Shift: When source and target networks differ, the relationships among nodes may change, leading to shifts in the network structure itself. (3) High Dimensionality: Networked data often involve high-dimensional features where the number of covariates exceeds the number of nodes, necessitating regularization techniques to manage sparsity. To the best of our knowledge, there is no existing work that can address all of these challenges.

{\bf Our contributions. } In this work, we address these challenges by developing a high-dimensional transfer learning framework under a network convolutional regression (NCR) framework (see Section \ref{sec:ncr} for details). Specifically, we propose a high-dimensional NCR model inspired by the success of graph convolutional networks (GCNs) \citep{kipf2017semi, wu2019simplifying}.  The fundamental insight of GCNs, as elucidated by \cite{wu2019simplifying}, lies in their graph convolution operations, which aggregate information from neighboring nodes to capture structural dependencies. 
Motivated by GCN, our proposed NCR model incorporates network structure by allowing each node’s response  $Y_i$ to depend not only on its features $X_i$, but also on an aggregated representation of its neighbors’ features.  
This model captures the local dependencies via neighborhood aggregation, bridging the gap between traditional regression and graph-based learning.

Under the NCR model, we propose a transfer learning algorithm to address both posterior drift (differences in conditional distributions) and dependence shift (variations in network structure) between source and target domains. Specifically, our algorithm includes two steps: (1) a transferring step that efficiently combines information across source and target domains through careful parameter estimation and (2) a debiasing step that leverages target-specific data to correct for potential bias induced by domain differences. We further enhance this framework with a source detection algorithm that automatically identifies informative source networks. This step ensures that only relevant information is transferred, avoiding the possibility of negative transfer.

We establish theoretical guarantees for our methodology in the challenging setting of high-dimensional network data. 
(1) Theoretical properties of the lasso estimator under network dependencies: We initiate our theoretical analysis by assuming an Erdős–Rényi (ER) model for network generation. Under this framework, we derive the theoretical properties of the lasso estimator for the high-dimensional NCR model. Our results characterize how network randomness influences estimation error and convergence rates, offering a rigorous foundation for regression with networked data.
(2) Transfer learning improves convergence rates: When informative source networks are available, we show that transfer learning significantly improves the convergence rates of the estimator. By leveraging shared patterns between source and target domains, the proposed algorithm achieves faster parameter recovery and enhanced predictive accuracy, providing theoretical justification for the benefits of knowledge transfer in network settings.
To empirically demonstrate the advantages of our method, we conducted extensive experiments, including simulations and a real-world application using Sina Weibo, China's largest Twitter-like platform. 
Results show that the NCR model and transfer learning algorithm effectively capture network dependencies and improve prediction accuracy.

The remainder of this paper is organized as follows. Section 2 establishes notation and formally develops the NCR model in the high-dimensional setting. Section 3 describes the proposed transfer learning framework with known and unknown transferable sources. 
Section 3 presents our transfer learning framework, including both the estimation procedure and source detection algorithm. Section 4 provides detailed theoretical analysis. Sections 5 and 6 present simulation studies and real data analysis, respectively. We conclude with a discussion of future research directions.

\section{Model and Notations}
\label{sec:meth}

\subsection{Basic Notations}\label{sec:notations}
We begin by introducing the notations that will be utilized throughout this article. Consider a network consisting of $n_0$ nodes/individuals. For each node $i$ ($1 \leq i \leq n_0$), let $Y_i \in \mathbb{R}$ denote the response, and let $X_i \in \mathbb{R}^d$ represent the covariates, which are independently and identically drawn from an unknown distribution $P_X$ with  mean $\mathbf{0}\in\mR^{d}$ and covariance matrix $\Sigma_X\in \mR^{d\times d}$, where $d$ represents the feature dimension.
We investigate the model within the \emph{high dimensional} framework, where $d$ is significantly larger than $n_0$. Accordingly, let $\by=(Y_1,\cdots,Y_{n_0})^\top\in\mathbb{R}^{n_0}$ represent the vector of responses from all individuals, and let $\bX=(X_1^\top,\cdots,X_{n_0}^\top)^\top\in\mathbb{R}^{{n_0}\times d}$ denote the matrix aggregating all covariate information. Let $A=(A_{ii_1})\in\mathbb{R}^{{n_0}\times {n_0}}$ denote the adjacency matrix  which captures the network dependence among individuals, where $A_{ii_1}=1$ if there is an edge from node $i$ to node $i_1$, and $A_{ii_1}=0$ otherwise. Although our methodology is agnostic to the specific structure of the distribution of $A_{ii_1}$, we assume $A$ is generated from an Erd{\"o}s-R{\'e}nyi (ER) model \citep{erdds1959random} to derive theoretical guarantees. Specifically, we assume $A_{ii_1} \overset{i.i.d.}{\sim}\mathrm{Ber}(p_0)$ for $i \neq i_1$, where $0 < p_0 < 1$ is the edge generation probability, and that there are no self-loops, i.e., $A_{ii} = 0$ for all $1 \le i \le {n_0}$. In the following Section \ref{sec:ncr}, we will model the relationship between  $\by$ and $\bX$, $A$.

For a positive semi-definite matrix $\Sigma \in \mathbb{R}^{d \times d}$, let $\lambda_{\max}(\Sigma)$ and $\lambda_{\min}(\Sigma)$ denote the largest and smallest eigenvalues of $\Sigma$, respectively. Let $e_j$ be a vector where the $j-$th element is 1 and all others are 0. Define $a \vee b$ as $\max \{a, b\}$ and $a \wedge b$ as $\min \{a, b\}$. Let $a_n = O\left(b_n\right)$ denote that $\left|a_n / b_n\right| \leq C$ and $a_n = O_P\left(b_n\right)$ denote that $\mathbb{P}\left(\left|a_n / b_n\right| \leq C\right) \rightarrow 1$ for some constant $C$.

\subsection{Network Convolutional Regression Model}
\label{sec:ncr}

To account for the complex dependency relationships inherent in network-linked data, one prominent approach is  Graph Convolutional Neural Network (GCN) \citep{kipf2016semi}. The success of GCN lies in its ability to predict outcomes by aggregating covariate information from neighboring nodes.
Inspired by the effectiveness of GCN, we introduce a network convolutional regression model that similarly aggregates covariate information from neighboring nodes, but within a statistical regression framework.

Specifically,  for each individual $i$, the {\it network convolutional features}  are defined as $\sum_{i_1}A_{ii_1}X_{i_1}\in\mR^{d}$ with $1\leq i_1\neq i\leq n_0$. We assume that the response $Y_i$ is affected by both the {\it convolutional features} and the {\it self-nodal features $X_{i}$}. Note that the network convolutional features have much more variability than the self-nodal features as it is a summation of $d_i = \sum_{ii_1} A_{ii_1}$ random vectors. 
This disparity can lead to an imbalanced influence in the regression model and unequal convergence rates for the parameter estimates associated with the network convolutional coefficients and the self-nodal coefficients.
To address this issue, we normalize the network convolutional features as $\sum_{i_1}A_{ii_1}X_{i_1}/\sqrt{(n_0-1)p_0}$ so that convolutional features and nodal features have similar variability. 


Mathematically, we define the normalized adjacency matrix as $A^*=A/\sqrt{(n_0-1)p_0}$, and present the {\it network convolutional regression model} (NCR) as,
\begin{equation}
\label{eq:NRM}
\by = A^* \bX \beta_0 + \bX \beta_1 + \epsilon,
\end{equation}
where $\beta_0=(\beta_{0j})\in\mR^{p}$ is the {\it network convolutional coefficient}, $\beta_1=(\beta_{1j})\in\mR^{p}$ is the {\it self-nodal coefficient}, and $\epsilon=(\epsilon_1,\cdots,\epsilon_{n_0})^\top$ is the error term.

Defining $\bZ=(A^* \bX,\bX)\in\mR^{n_0 \times 2d}$, $\gamma = (\beta_0^\top,\beta_1^\top)^\top\in\mR^{2d}$, the regression model \eqref{eq:NRM} can be rewritten as $\by=\bZ \gamma+\epsilon$.
In the classical regime, when $d$ is assumed to be fixed and $n_0 \uparrow \infty$, one can estimate $\gamma$ by minimizing the squared error loss: 
 \begin{equation}
 \label{eq:OLS_est}
\hat \gamma_{\rm OLS} = \left(\bZ^\top \bZ\right)^{-1}\bZ^\top \by.
\end{equation}
However, the covariates $\{Z_i\}$ are no longer independent as they are interlinked via the (random) network matrix $A$. 
Consequently, the statistical properties of the estimator $\hat\gamma_{\rm OLS}$ must be carefully reestablished, serving as a foundation for understanding the behavior of the estimator in high-dimensional settings.
The following theorem shows that under mild conditions, $\hat \gamma^{\rm OLS}$ is $\sqrt{n_0}$-consistent and asymptotically normal.

 \begin{theorem}\label{thm:LSE}
 Assume the observation $(\bX, A^*, \by)$ are generated from the model in the form of \eqref{eq:NRM}, where
 the random noises $\epsilon_{i}$s follow sub-gaussian distribution with mean zero and covariance matrix $\sigma^2 I_{n_0}$, where $I_{n_0}\in\mR^{{n_0}\times {n_0}}$ is an identity matrix. Then we have the asymptotic distribution of the OLS estimator in \eqref{eq:OLS_est} as
 $$\sqrt{n_0}(\hat \gamma_{\rm OLS} - { \gamma }) \overset{\mathscr{L}}{\implies} \cN(0,  \sigma^2 I_2 \otimes \Sigma_{X}^{-1}).$$ 
 \end{theorem}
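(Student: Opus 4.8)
The plan is to reduce the statement to the classical two--ingredient argument for asymptotic normality of least squares. Since $\bZ=(A^*\bX,\bX)$ and $\hat\gamma_{\rm OLS}-\gamma=(\bZ^\top\bZ)^{-1}\bZ^\top\epsilon$, we may write
\[
\sqrt{n_0}\,(\hat\gamma_{\rm OLS}-\gamma)=\Bigl(\tfrac1{n_0}\bZ^\top\bZ\Bigr)^{-1}\Bigl(\tfrac1{\sqrt{n_0}}\bZ^\top\epsilon\Bigr),
\]
so it suffices to prove (i) $\tfrac1{n_0}\bZ^\top\bZ\xrightarrow{P}I_2\otimes\Sigma_X$ and (ii) $\tfrac1{\sqrt{n_0}}\bZ^\top\epsilon\overset{\mathscr{L}}{\implies}\cN(0,\sigma^2 I_2\otimes\Sigma_X)$. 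The conclusion then follows from Slutsky's theorem and the continuous mapping theorem, using $(I_2\otimes\Sigma_X)^{-1}=I_2\otimes\Sigma_X^{-1}$ and the fact that $\Sigma_X\succ0$ makes the Gram matrix invertible with probability tending to one.

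Ingredient (i) is the core of the argument and the place where network dependence must be confronted: because every row of $\bZ$ shares the single random matrix $A$, $\tfrac1{n_0}\bZ^\top\bZ$ is not an average of i.i.d.\ terms. The $(2,2)$ block $\tfrac1{n_0}\bX^\top\bX\to\Sigma_X$ is the usual weak law, so the work lies in the $(1,1)$ block $\tfrac1{n_0(n_0-1)p_0}\bX^\top A^\top A\bX$ and the off-diagonal block $\tfrac1{n_0\sqrt{(n_0-1)p_0}}\bX^\top A\bX$. I would split $A=\mathbb{E}[A]+\widetilde A$ with $\mathbb{E}[A]=p_0(\mathbf 1\mathbf 1^\top-I_{n_0})$ and $\widetilde A$ centered, and expand $A^\top A$ into four terms. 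The ``centered $\times$ centered'' term $\tfrac1{n_0(n_0-1)p_0}\bX^\top\widetilde A^\top\widetilde A\bX$ splits into a diagonal part, whose weights $\sum_i\widetilde A_{i i_1}^2$ concentrate around $(n_0-1)p_0(1-p_0)$ and thus contribute $(1-p_0)\tfrac1{n_0}\bX^\top\bX$, and an off-diagonal part that is mean-zero with variance $O(n_0^{-1})$ after normalization, the cancellations stemming from $\mathbb{E}X_i=0$ and the independence of distinct directed edges under the ER model. The ``mean $\times$ mean'' term is sandwiched between $\bX$ and the dense matrix $\mathbb{E}[A]^\top\mathbb{E}[A]\asymp p_0^2 n_0\mathbf 1\mathbf 1^\top$, hence is of order $p_0 n_0\,\overline X\,\overline X^\top=O_P(p_0)$ with $\overline X=n_0^{-1}\sum_i X_i$, and the two mixed terms are smaller still; the off-diagonal block is handled by an analogous, slightly easier, second-moment bound and is $O_P(n_0^{-1/2})$. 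Under the regularity conditions implicit in the statement --- most importantly a network-sparsity condition forcing $p_0\to0$, so that the dense component of $\mathbb{E}[A]$ does not survive the $1/\sqrt{(n_0-1)p_0}$ normalization, together with $n_0 p_0\to\infty$ and finite fourth moments of $X$ --- every stray piece is $o_P(1)$, yielding $\tfrac1{n_0}(A^*\bX)^\top(A^*\bX)\xrightarrow{P}\Sigma_X$ and $\tfrac1{n_0}(A^*\bX)^\top\bX\xrightarrow{P}0$, hence $\tfrac1{n_0}\bZ^\top\bZ\xrightarrow{P}I_2\otimes\Sigma_X$. I expect this step --- controlling quadratic forms in the random adjacency matrix --- to be the main obstacle, and it is exactly here that the normalization $A^*=A/\sqrt{(n_0-1)p_0}$ is indispensable.

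For ingredient (ii) I would condition on $\mathcal F=\sigma(\bX,A)$. Treating $\epsilon$ as independent of $(\bX,A)$, conditionally on $\mathcal F$ the sum $\tfrac1{\sqrt{n_0}}\bZ^\top\epsilon=\tfrac1{\sqrt{n_0}}\sum_{i=1}^{n_0}Z_i\epsilon_i$ is a sum of independent mean-zero vectors with conditional covariance $\tfrac{\sigma^2}{n_0}\bZ^\top\bZ\xrightarrow{P}\sigma^2 I_2\otimes\Sigma_X$. By the Cramér--Wold device it is enough to show, for each fixed $a\in\mR^{2d}$, that $\tfrac1{\sqrt{n_0}}\sum_i(a^\top Z_i)\epsilon_i\overset{\mathscr{L}}{\implies}\cN\bigl(0,\sigma^2 a^\top(I_2\otimes\Sigma_X)a\bigr)$: its conditional variance converges in probability to this limit, and the conditional Lindeberg condition holds in probability since $\max_i|a^\top Z_i|=o_P(\sqrt{n_0})$ (a union bound using the fourth moment of $X$) while the $\epsilon_i$ are sub-gaussian, so the conditional characteristic function converges in probability to the Gaussian one and bounded convergence upgrades this to convergence of the unconditional characteristic function. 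Combining (i) and (ii) via Slutsky gives $\sqrt{n_0}(\hat\gamma_{\rm OLS}-\gamma)\overset{\mathscr{L}}{\implies}(I_2\otimes\Sigma_X)^{-1}\cN(0,\sigma^2 I_2\otimes\Sigma_X)=\cN(0,\sigma^2 I_2\otimes\Sigma_X^{-1})$, which is the claim.
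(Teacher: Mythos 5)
Your proposal is correct and follows the natural route: write $\sqrt{n_0}(\hat\gamma_{\rm OLS}-\gamma)=\bigl(n_0^{-1}\mathbf{Z}^\top\mathbf{Z}\bigr)^{-1}\bigl(n_0^{-1/2}\mathbf{Z}^\top\epsilon\bigr)$, prove a law of large numbers for the Gram matrix and a conditional-Lindeberg CLT for the score, and conclude by Slutsky. The paper's source does not actually contain a self-contained proof of this theorem: the cited Section S.4.4 only establishes the identity $n_0^{-1}\mathbb{E}[\mathbf{Z}^\top\mathbf{Z}]=I_2\otimes\Sigma_X$ (Lemma S.5), which is exactly the target of your step (i), so your argument supplies the concentration and CLT ingredients that the paper leaves implicit. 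Your decomposition $A=\mathbb{E}[A]+\widetilde A$ is consistent with that lemma — the $(1-p_0)\,n_0^{-1}\mathbf{X}^\top\mathbf{X}$ contribution from the diagonal of $\widetilde A^\top\widetilde A$ and the $p_0\Sigma_X$ mean of the $\mathbb{E}[A]^\top\mathbb{E}[A]$ term recombine to give $\Sigma_X$ in expectation, matching the paper's computation. One point you raise deserves emphasis rather than apology: the rank-one piece $\approx p_0\, n_0\,\bar X\bar X^\top$ coming from $\mathbb{E}[A]^\top\mathbb{E}[A]$ converges in distribution to $p_0 WW^\top$ with $W\sim\mathcal{N}(0,\Sigma_X)$, so it does \emph{not} concentrate; the stated Gaussian limit therefore genuinely requires $p_0\to 0$ (together with $n_0p_0\to\infty$ for the degree concentration), a hypothesis absent from the theorem statement but consistent with Condition (C3) elsewhere in the paper and in mild tension with the remark's claim that the result covers dense networks. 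Flagging this is a feature of your proof, not a gap in it.
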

 \noindent
\begin{remark}
   The convergence rate of $\hat \gamma_{\rm OLS}$ is $\sqrt{n_0}$, which is the same as in the classical linear model. Notably, network density \( p \) does not appear in the convergence rate because we use the scaled version \( A^* \) instead of \( A \), effectively absorbing the influence of \( p \). This makes the theory applicable to both sparse and dense networks. The key technical distinction between proving the asymptotic normality of the standard OLS estimator and the proof of the above theorem lies in carefully handling the dependence among covariates induced by random network edges.
   The sub-gaussianity of the errors is assumed for some technical simplicity and is not necessary. 
   The proof is deferred to section S.4.4.
\end{remark}

 

In this work,  we consider a  high-dimensional setting where $d$  could be much larger than $n_0$.
When $d > n_0$, it is not possible to estimate $\gamma$ consistently without further constraints. A typical assumption in the literature of high dimensional statistics is that of \emph{strong sparsity}, i.e., $\|\gamma\|_0=s\ll d$, where the $\ell_0$ norm of a vector denotes its number of non-zero elements. In other words, we assume that there are few co-ordinates ($s$-many) of $\bZ$ that actually influence the response variable.   
Inspired by the estimation procedure of LASSO \citep{tibshirani1996regression}, an $\ell_1$ penalty could be incorporated into the objective function of the least squares method to obtain an estimator as follows: 
\begin{equation}
\hat \gamma = \argmin_{\gamma} \ \left\{\frac{1}{2n_0}\left\|\mathbf{y}-\bZ \gamma \right\|_{2}^{2}+\lambda_{n_0}\left\| \gamma \right\|_{1}\right\}.
\end{equation}
Similarly to the previous discussion, the involvement of the random adjacency matrix  $A$ makes the theoretical analysis of LASSO more challenging in establishing the asymptotic properties of \( \hat{\gamma} \). One of the main technical challenges lies in establishing a condition equivalent to the standard restricted strong convexity (RSC) or restricted eigenvalue (RE) condition in the presence of network dependency (e.g., see \cite{candes2005decoding,bickel2009simultaneous,negahban2012restricted, rudelson2012reconstruction}). We will address this challenge in Section \ref{sec:theory}.






\section{ Transfer Learning  Algorithms}\label{sec:alg}

\subsection{Transfer Learning  Setup Under High-Dimensional NCR}

We consider a transfer learning framework involving one target domain and $K$ source domains. Following notations in Table \ref{tab:notation}, we denote the sample sizes of $k$-th domain by $n_k$ for $0 \le k \le K$, with $k = 0$ denoting the target domain. 
Let $\bX_k \in \reals^{n_k \times d}, A_k \in \reals^{n_k \times n_k}$ and $\by_k \in \reals^{n_k}$ represent the design matrix, observed adjacency matrix, and response vector of $k^{th}$ domain. We \emph{do not} assume that the ER edge probabilities of all $A_k$'s are identical: $A_{k, ij} \overset{i.i.d.}{\sim} \mathrm{Ber}(p_k)$, where $p_0, p_1, \dots, p_k$ can be possibly be different from each other. The total number of samples is denoted by $n = \sum_{k=0}^K n_k$. The response $\by_k$ of the $k^{th}$ domain (for $0 \le k \le K$) is assumed to be generated from a high-dimensional NCR model: 
\begin{equation}
\label{eq:NCR_model}
\mathbf{y}_k=A^*_k \mathbf{X}_{k} \beta_{0k}+\mathbf{X}_{k} \beta_{1k} + \epsilon_k \triangleq  \mathbf{Z}_k \gamma_k +\epsilon_k.
\end{equation}
where $A^*_k = A_k/\sqrt{(n_k - 1) p_k}$,  $\beta_{0k}$ and $\beta_{1k}$ represent the network coefficient and the self-feature coefficient respectively, $\bZ_0 = (A^*_0 \mathbf{X}_{0}, \bX_0)$, and $\gamma_0=(\beta_{0k}^\top,\beta_{1k}^\top)^\top$. The errors $\{\eps_{ki}\}$ assume to be i.i.d. centered sub-gaussian random variable.
The primary objective of this work is to optimally estimate and make statistical inferences for the target network coefficient \(\beta_{00}\) and the self-feature coefficient \(\beta_{01}\), by  effectively transferring knowledge from the source domains to the target domain. For transfer learning to be effective, the target model and some of the source models must exhibit a degree of similarity. To quantify this, we introduce the concept of a {\it contrast vector}  $\delta_k= \gamma_k - \gamma_0$, which measures the difference between the coefficients of the $k$-th source task and the target task. The magnitude of $\|\delta_k\|_1$  
provides an indication of how similar a source task is to the target task: the smaller the magnitude, the greater the similarity. 
A source domain is defined as $h-$level transferable if its $\|\delta_k\|_1$ is lower than a threshold $h$.  The set of $h-$level transferable source data is $\mathcal{A}_h = \{k :  \|\delta_k\|_1 \leq h \}$.  To ease notation, we 
will abbreviate the notation $\mathcal{A}_h$
as $\mathcal{A}$ in the following analysis. 

In the following subsection, we present a method to estimate $\gamma_0$ under the assumption that the set of transferable source domains $\mathcal{A}$ is known. In the subsequent subsection, we extend this method to the case where $\mathcal{A}$ is unknown and needs to be estimated from the data.

\subsection{Transfer Learning Algorithm with Known Transferable Sources}

In this subsection, we present a transfer learning methodology under the NCR model \eqref{eq:NCR_model} when the transferable source set $\mathcal{A}$ is known, i.e., we have prior knowledge of which source data to utilize.  
Our goal is to leverage information from both the target and source datasets to estimate $\gamma_0$, the parameter of interest in our target domain. To achieve this, we propose a method (summarized in Algorithm \ref{al:trans_alg}), which is inspired by the recent work of \cite{li2022transfer, tian2023transfer, li2023estimation}.
Our method involves two key steps:  (1)  \textit{Transferring step}: we compute an initial estimator $\hat\gamma^\mathcal{A}$ by minimizing $\ell_1$ penalized squared error loss aggregating observations from the target domain and all informative source domains. 
(2) \textit{Debiasing step}: The estimator obtained in (1) may be biased since the parameters of the source domain $\gamma_k$, while similar to those of the target domain $\gamma_0$, are not exactly the same. We then correct this bias using the target dataset only.  
We refer to Algorithm \ref{al:trans_alg} as the Oracle Trans-NCR algorithm.  In what follows, we will show the details of Algorithm \ref{al:trans_alg}.

\begin{algorithm}[h]
\caption{Oracle Trans-NCR}
\begin{algorithmic}
\STATE \textbf{Input}: Target data ($\by_0,\bX_0,A_0$), source data $\{\by_k,\bX_k,A_k\}_{k \in \mathcal{A}}$, and tuning parameters $\lambda_\gamma$ and $\lambda_\delta$. Set $\hat \bZ_k = (\hat{A}_k \mathbf{X}_k, \mathbf{X}_k)$, $\hat A_k = A_k/\sqrt{(n_k - 1)\hat p_k}$ where $\hat p_k$ is an estimator of $p_k$;\
\STATE \underline{Step 1.} \textbf{(Transferring Step)} Compute $\hat\gamma ^\mathcal{A}$ as
\begin{equation}
\label{eq:transferstepobjectfunction}
\hat\gamma^\mathcal{A}=\argmin_{\gamma} \ \left\{\frac{1}{2\sum_{k\in\{0\}\bigcup\mathcal {A}}n_k}\sum_{k\in\{0\}\bigcup\mathcal {A}}
\left\|\mathbf{y}_k-\hat{\mathbf{Z}}_k \gamma \right\|_{2}^{2}+\lambda_\gamma \left\|{\gamma}\right\|_{1}\right\},
\end{equation}
where $$\lambda_{\gamma} = C_1\sqrt{\frac{\log d}{n}} + C_2 h \max\left\{\frac{\sqrt{\log d \sum_k (n_k + n_k^2 p_k^2)}}{n}, \frac{\log d \max_k\left\{n_k p_k\right\}}{n}\right\}$$ with some constant $C_1$ and $C_2$.
\STATE \underline{Step 2.} \textbf{(Debiasing Step)} Compute $\hat\delta^\mathcal{A}$ as
\begin{equation}
\hat\delta^\mathcal{A}=\argmin_{\delta} \ \left\{\frac{1}{2n_0}\left\|\mathbf{y}_0-\hat{\mathbf{Z}}_0{(\hat\gamma^\mathcal{A} +\delta)} \right\|_{2}^{2} + \lambda_\delta\left\|{\delta}\right\|_{1}\right\},
\end{equation}
where $\lambda_{\delta} = C_3\sqrt{\log d/n_0}$ with some constant $C_3$.\\
\STATE \textbf{Output}: $\hat \gamma_0 = \hat\gamma^\mathcal{A} + \hat\delta^\mathcal{A}$.
\end{algorithmic}\label{al:trans_alg}
\end{algorithm}
In practice, $p_k$ is unknown and must be estimated from the observed data. Consequently, we use $\hat{p}_k$ to normalize the adjacency matrix.
In the transferring step, we obtain an estimator $\hat\gamma^\mathcal{A}$  by combining data from the target and transferable source domains.  Under certain condition (see Section \ref{sec:theory} for details), it can be shown that $\hat\gamma^\mathcal{A}$ approximates $\gamma^\mathcal{A}$, defined through the solution of the following moment equation: 
\begin{equation}\label{eq:moment}
\mathbb{E}\left\{ \sum_{k\in\{0\}\cup \mathcal{A}} \mathbf{Z}_k^{\top} \left( \by_k - \mathbf{Z}_k \gamma^\mathcal{A} \right)  \right\} = 0. 
\end{equation}
This $\gamma^\mathcal{A}$ is generally different from the true parameter of interest, $\gamma_0$, and this difference arises because the source domains may not be perfectly aligned with the target domain. Define the bias of $\gamma^\cA$ to be $\delta^\cA = \gamma^\cA - \gamma_0$. Assuming $\var(X) = \Sigma_X$ to be the same for all the domains, we can relate $\delta^\cA$ to $\delta_k = \gamma_k - \gamma_0$ by rewriting Equation \eqref{eq:moment} as follows: 
$
\mathbb{E}\left\{ \sum_{k}  \mathbf{Z}_k^{\top} \left(\mathbf{Z}_k \delta_k  + \epsilon_k -\mathbf{Z}_k  \delta^\mathcal{A}  \right)   \right\} = 0.
$
It could be verified that $(n^k)^{-1} \bbE\left( \bZ_k^\top \bZ_k\right) = I_2 \otimes \Sigma_X \triangleq \Sigma_{Z} $.  See Appendix S.4.4 for detailed proof. 
By $\mathbb{E} \epsilon_k = 0$ and the invertibility of the covariance matrix $\Sigma_X$, we have 
$$
\delta^\mathcal{A} = \left( \sum_{k} n_k \Sigma_{Z}\right) ^{-1} \left(\sum_{k} n_k \Sigma_{Z} \delta_k \right) = \sum_{k} \frac{n_k}{n} \delta_k.
$$
Thus, the overall bias $\delta^\mathcal{A}$ can be expressed as a weighted average of the source biases $\delta_k$,  where the weight is defined as sample size ratio $n_k /n$. Since each $\delta^\mathcal{A}$  bounded by $h$, the overall bias $\delta^\mathcal{A}$ is also controlled  by $h$.

Our debiasing step aims to correct this bias $\delta^\cA$ incurred in the transferring step. 
As elaborated in Algorithm \ref{al:trans_alg}, we reparametrize the target parameter $\gamma_0$  as $\hat{\gamma}^A + \delta^\mathcal{A}$, and estimate $\delta^\mathcal{A}$ using only target data. We then adjust the initial estimator to obtain the final estimator for the target parameter: $\hat \gamma_0 = \hat\gamma^\mathcal{A} + \hat\delta^\mathcal{A}$.
Note that Algorithm \ref{al:trans_alg} involves two hyperparameters: $\lambda_\gamma$  and $\lambda_\delta$. 
We will discuss the choice of these tuning parameters in the theoretical analysis of the estimator.


\begin{figure}[h]
    \makebox[\textwidth][c]{
    \includegraphics[width=\linewidth]{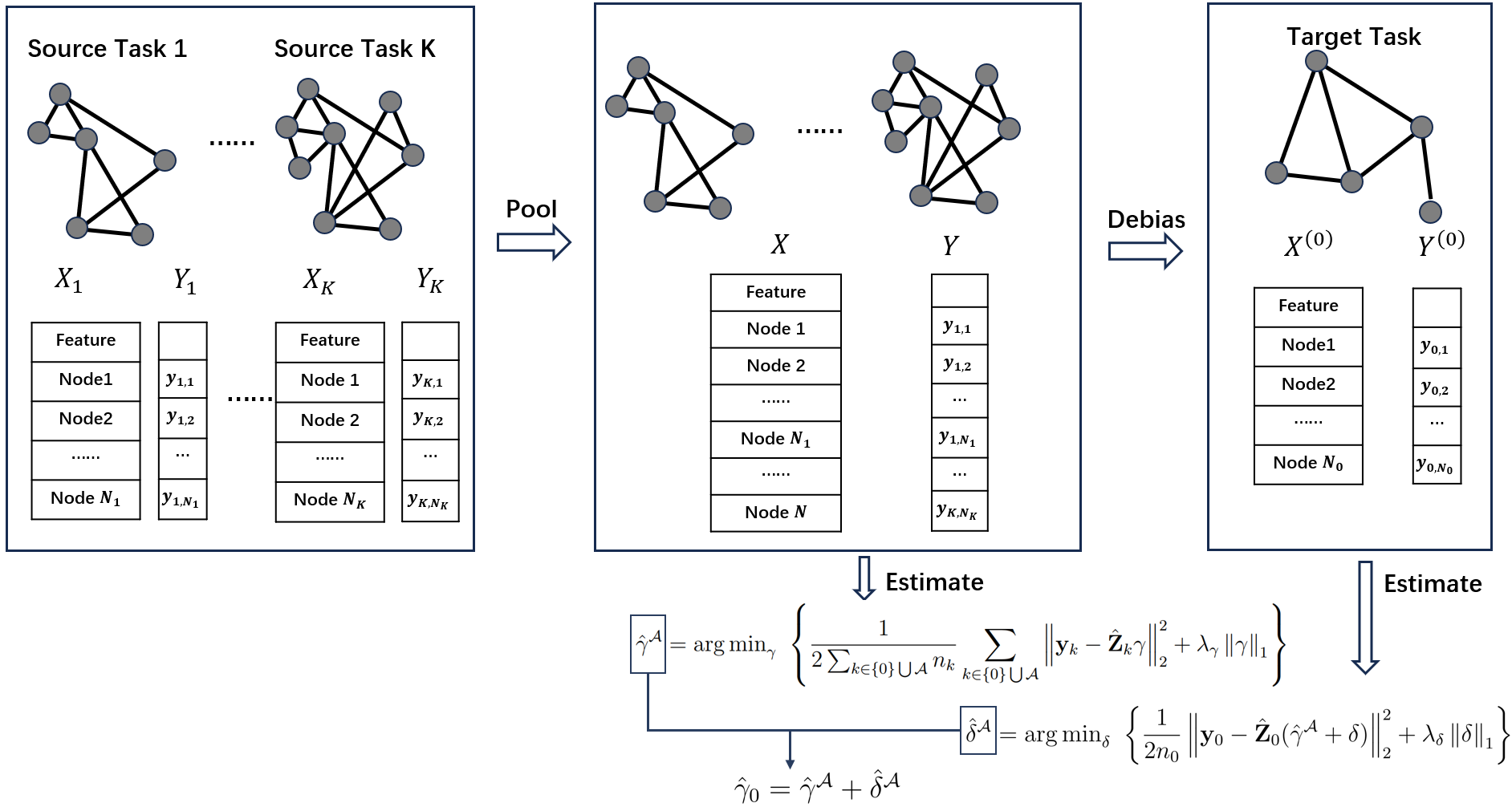}
    }
    \vspace{-50pt}
    \caption{Schematic diagram of high-dimensional NCR with known sources.}
    \label{fig:nrm_schematic}
\end{figure}
\subsection{Model Aggregation with Unknown Transferable Sources }
In the previous analysis, we assume that the set of transferable source domains $\mathcal{A}$ is already known. However, in practical scenarios, $\mathcal{A}$  can be unknown. To address this challenge, we propose the Trans-NCR algorithm, which first constructs possible transferable source domain candidates and then aggregates the results from these candidates.  

\textbf{Candidate Set Construction.} 
Here, we present how to construct a possible candidate set for transferable source domains. 
We split observations in target data into two parts: (1) $\mathcal{I}$ which is a random set of $\{1,\ldots, n_0\}$, 
and (2) \(\mathcal{I}^{c}=\{1, \ldots, n_{0}\} \backslash \mathcal{I}\). We then use data $\{\mathbf{Z}_{0, \mathcal{I}}, \mathbf{y}_{0, \mathcal{I}}\}$ and $\{\mathbf{Z}_{k}, \mathbf{y}_{k}\}_{k=1}^K$ to construct candidate sets. 
Given $K$ source domains, there are $2^K$ 
possible combinations of these domains, representing all potential candidate sets.
 Analyzing all these  $2^K$  candidates is computationally infeasible for large $K$. To overcome this,
we adopt a strategy inspired by \cite{li2022transfer}, leveraging the $\ell_1$ sparsity of the discrepancy between the source and target domains. Specifically, for each source domain $k$, we quantify the difference between its parameter $\gamma_k$ and the target parameter $\gamma_0$ using the metric 
$R_k=\|\Sigma_X\delta_k\|_2^2$, where $\delta_k=\gamma_k - \gamma_0$ and $\Sigma_X$
 A smaller 
$R_k$
  indicates that the source domain is more similar to the target domain and, thus, more transferable.
We estimate $\Sigma\delta_k$ by its sample version as
\[
\widehat\Delta_k=(n_k)^{-1}\sum_{i=1}^{n_k} \begin{pmatrix} x_{ki}y_{ki} \\ \sum_j^{n_k} A_{ij} x_{kj} y_{ki} \end{pmatrix} - |\mathcal{I}|^{-1}\sum_{i\in \mathcal{I}} \begin{pmatrix} x_{0i}y_{0i} \\ \sum_{j \in \mathcal{I} } A_{ij} x_{0j} y_{0i} \end{pmatrix},
\]
where 
$\mathcal{I}$ is a subset of the target data.
Note that $\hat{R}_k=\|\widehat\Delta_k\|_2^2$ is a sum of $2d$ random variables, and may contain significant noise due to high dimensionality.  To address this issue, we apply a sure independence screening procedure \citep{fan2008sure} to select a subset of features that exhibit the most substantial discrepancies.
For each source domain $k$, we  select the top $t_*$ features with the largest absolute values of $\widehat\Delta_{kj}$, $j=1,\ldots,{2d}$. The selected features in $k$th data 
are denoted as $\hat{T}_k$. The estimated discrepancy measure is then calculated as  $\widehat{R}_k=\left\|\widehat{\Delta}_{k\widehat{T}_{k}}\right\|_{2}^{2}$. By defining \( t^* = \lceil \gamma n^* \rceil \) with $0<\gamma<1$, the dimensionality is reduced to a manageable scale, ensuring the retention of all relevant features with high probability and satisfying the sure screening property \citep{fan2008sure}. Empirically, $t^*$ is selected to be $\lceil n^*/3  \rceil $ \citep{li2022transfer}.

Using the calculated measures \(\widehat{R}_k\), we rank the source domains in ascending order of their \(\widehat{R}_k\) values—smaller values indicate greater similarity to the target domain. To construct candidate sets without exhaustively analyzing all \(2^K\) possible combinations, we focus on the most promising domains based on this ranking.
Let \( L \) be a predefined hyperparameter that controls the maximum number of source domains included in any candidate set. We construct \( L + 1 \) candidate sets, denoted by \(\{\widehat{G}_l\}_{l=0}^{L}\), as follows: (1) Null Set (\( l = 0 \)): \(\widehat{G}_0 = \emptyset\), the set containing no source domains; (2)  Top-\( l \) Sets (\( 1 \leq l \leq L \)): For each \( l \), \(\widehat{G}_l\) consists of the indices of the top \( l \) source domains with the smallest \(\widehat{R}_k\) values. Formally,
\begin{equation}\label{eq:def_Gl}
    \widehat{G}_l=\left\{1\leq k\leq K:\hat R_k \text{ is among the first } l \text{ smallest} \right\}.
\end{equation}
 This construction ensures that each candidate set \(\widehat{G}_l\) includes the source domains most similar to the target domain up to the \( l \)-th ranked domain. 

\textbf{Model Aggregation.} After constructing the candidate sets, we aim to combine models derived from these sets to enhance predictive performance.
Let $\hat{\gamma}\left(\widehat{G}_{l}\right)$ denote the estimated regression coefficients for target data obtained 
by leveraging transfer learning from the source domains in $\hat{G}_l$ for $l=1,\ldots, L$, and 
$\hat{\gamma}\left(\widehat{G}_{0}\right)$ represents the estimated regression coefficients for target data obtained by using target data only. 
 To effectively combine these models, we employ the Q-aggregation method \citep{dai2012deviation}, i.e., 
$\sum_{l=0}^{L} \theta_{l} \hat{\gamma}\left(\widehat{G}_{l}\right)$, where
$\theta_l$ represents the aggregation weight for the estimate $\hat{\gamma}\left(\widehat{G}_{l}\right)$
obtained from the $l$th candidate.
These aggregation weights belong to  an $L-$dimensional simplex 
$\Theta=\left\{\theta=(\theta_0,\ldots,\theta_L)^\top\in\mathbb{R}^{L+1}:\theta_l\geq0,\sum_{l=0}^{L}\theta_l=1\right\}$. 
 
Following \cite{dai2012deviation},  we calculate the optimal weights by minimizing a penalized empirical risk over a validation set $\mathcal{I}^c$. We define the empirical risk function as
{$\hat{Q}(\mathcal{I}^c, \gamma ) = \sum_{i\in\mathcal{I}^c}\|{Y_{0,i}}-(Z_{0,i})^\intercal \gamma\|_2^2 \, ,$
where $Y_{0,i}$ and $Z_{0,i}$ represent the $i$-th row of $\mathbf{y}_0$ and $\mathbf{Z}_0$, respectively.} The optimal aggregation weights $\hat{\theta}_{l}$ are obtained by solving the following optimization problem:
\begin{equation}\label{eq:thetahat}
\hat{\theta}= \underset{\theta \in \Theta}{\arg \min }\left\{\widehat{Q}\left(\mathcal{I}^{c}, \sum_{l=0}^{L} \hat{\gamma}\left(\widehat{G}_{l}\right) \theta_{l}\right)+\sum_{l=0}^{L} \theta_{l} \widehat{Q}\left(\mathcal{I}^{c}, \hat{\gamma}\left(\widehat{G}_{l}\right)\right)+\frac{2 \lambda_{\theta}}{n_{0}}\sum_{l=0}^L \theta_l \log(\theta_l)\right\},
\end{equation}
where $\lambda_\theta$ is a regularization parameter controlling the trade-off between fit and complexity. The first term in  \eqref{eq:thetahat} represents the empirical risk of the aggregated model, whereas the second term accounts for the weighted risks of individual models. The third term  is an entropy-based regularization term that encourages balanced weighting and prevents overfitting.
The final aggregated estimate is then obtained by
$\hat{\gamma}^{\hat{\theta}}=\sum_{l=0}^{L} \hat{\theta}_{l} \hat{\gamma}\left(\widehat{G}_{l}\right).$
We summarize the procedure in  Algorithm \ref{al:source_selection_alg}, which is referred to as Trans-NCR Algorithm.

\section{Theoretical Analysis}\label{sec:theory}
\subsection{Technical Conditions}
In this section, we focus on the theoretical properties of the transfer learning method for high-dimensional NCR with known sources. 
Before presenting our theorems, we first outline the assumptions required to establish the theoretical results:
\begin{itemize}
	\item[(C1)]\textbf{(Sub-Gaussian Features)} For each $k \in \mathcal{A} \cup\{0\}$, each row of $\mathbf{X}_k$ is an independent
	$d$-dimensional Sub-Gaussian vector with zero mean and covariance matrix $\Sigma_X$. Moreover, assume $c_1\leq \lambda_{\min}(\Sigma_X)\leq \lambda_{\max}(\Sigma_X)\leq c_2$ as $d$ goes to infinity, where $\lambda_{\min}(\Sigma_X)$ and $\lambda_{\max}(\Sigma_X)$ denote the smallest and largest eigenvalues of $\Sigma_X$, respectively, and $c_1$, $c_2$ are positive constants.
	\item[(C2)]\textbf{(Sub-Gaussian Noise)} For each $k \in \mathcal{A} \cup\{0\}$, the random noises $\epsilon_k$  follows sub-gaussian distribution with mean zero and covariance matrix $\sigma_k^2 I_k$.
	\item[(C3)](\textbf{Network Density and Feature Dimension}): The feature dimension $d$, sparsity parameter $s$, and network density $p_k$ satisfies $n_k p_k \gg \{\log n_k \vee s (\log^2 d)\}$ for $0 \leqslant k\leqslant K$, and $p_k \log{d} = o(1)$.
     { \item[(C4)] \textbf{(Network Parameter Norm)} The network effect coefficient $\beta_{0k}$ needs to satisfy $\|\beta_{0k}\|_1\lesssim\sqrt{{n}/{\sum_k \left( 1+ n_k p_k^2\right) }}$.}
\end{itemize}
\noindent
Condition (C1) and (C2) are standard assumptions in the analysis of ultra-high-dimensional models. Although we assume the variance of $X$ to be the same across all the domains, this assumption can be relaxed with more detailed technical considerations. However, as this does not significantly contribute to understanding the core difficulty of the problem, we choose not to pursue it here. Furthermore, the sub-gaussian conditions can be relaxed by using a robust loss function (e.g., Huber loss function) instead of squared-error loss. 
Condition (C3) constrains the relationship among feature dimension $d$, sparsity $s$, network sample size $n_k$, and the network density $p_k$ of different data sources. It comprises two parts. 
The first part imposes requirements on the average nodal degree.  It should exceed the logarithmic growth rate of the number of nodes, which is a typical assumption in network analysis \citep{rohe2011spectral, lei2015consistency}. 
Furthermore, it should also exceed $s \log^2{d}$. This assumption is related to the standard assumption $s\log{d} = o(n)$ in high dimensional regression literature, with an additional factor $\log{d}/p_k$ arising due to the interdependence among the observations. 
The second part imposes restrictions on the relationship between network density and feature dimension, implying that $p_k$ should be of smaller order than $(\log{d})^{-1}$. As $\log{d}$ typically grows very slowly, this assumption effectively implies $p_k \downarrow 0$. 
Condition (C4) is a mild condition that allows the $\ell_1$ norm of the network effect coefficients $\beta_{0k}$ to grow with the total sample size $n$. More specifically, if the sample sizes are the same for both source and target datasets, the order is ${\{k/(1/n+p_k^2)\}}^{-1/2}$.


Based on the above notations, we carefully established the  RSC condition that needed in the following theorem.

	
\begin{theorem}[Restricted strong convexity condition]
    \label{thm:RSC}
    For any vector $u$ and $\bZ$ satisfying assumptions (C1)-(C4), we have
	$$\frac{u^\top \bZ^\top \bZ u}{n} \geqslant \kappa \|u\|_2^2 - C_5 \log{d} \sqrt{\frac{\Psi(p)}{n}} \|u\|_1 \|u\|_2$$
    with probability $1 - d^{-\alpha}$, where $\Psi(p) \sim 1/(-4p\log{p})$ for $p$ close to $0$ and $\alpha > 0$. Furthermore, if we replace $\bZ$ by $\hat \bZ$, then we have: 
    $$
    \frac{u^\top \hat \bZ^\top \hat \bZ u}{n} \geqslant \frac{\kappa}{2} \|u\|_2^2 - C_5 \log{d} \sqrt{\frac{\Psi(p)}{n}} \|u\|_1 \|u\|_2 - 3K \frac{\sqrt{2\log{d}}}{n}\|u_1\|_1^2$$
 with probability $1 - d^{-\alpha}$, where $u_1$ is the last $d$ coordinates of $u$.   
\end{theorem}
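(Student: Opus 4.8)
The plan is to establish the restricted strong convexity (RSC) bound in two stages, first for the population-normalized design matrix $\bZ$ and then transferring the bound to $\hat{\bZ}$, which uses the estimated edge probabilities $\hat p_k$. For the first part, I would begin by decomposing $\bZ^\top \bZ / n = \sum_{k \in \{0\} \cup \mathcal{A}} (n_k/n)\,\bZ_k^\top \bZ_k / n_k$ and analyzing each block separately, since the domains are independent. Recall that each $\bZ_k = (A_k^* \bX_k, \bX_k)$, so that $\bZ_k^\top \bZ_k$ expands into four blocks: $\bX_k^\top (A_k^*)^\top A_k^* \bX_k$, the two cross terms $\bX_k^\top (A_k^*)^\top \bX_k$, and $\bX_k^\top \bX_k$. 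The pure nodal block $\bX_k^\top \bX_k / n_k$ concentrates around $\Sigma_X$ by standard sub-Gaussian covariance concentration (this is where the usual $\log d / n$ deviation and the RE-type $\|u\|_1\|u\|_2$ slack term come from, cf.\ \citet{rudelson2012reconstruction}). The key new work is controlling the convolutional block $\bX_k^\top (A_k^*)^\top A_k^* \bX_k / n_k$ and the cross terms in the presence of the random adjacency matrix $A_k$.

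For the convolutional block, I would condition on $A_k$ first and use the fact that, given $A_k$, the rows of $A_k^*\bX_k$ are still independent sub-Gaussian (a linear combination of the independent rows of $\bX_k$). The conditional covariance of $A_k^* \bX_k$ is $\big(\tfrac{1}{(n_k-1)p_k}(A_k^\top A_k)_{ii}\big)\Sigma_X$ per row $i$, and $(A_k^\top A_k)_{ii} = \sum_{j} A_{k,ji}$ is a $\mathrm{Bin}(n_k - 1, p_k)$ variable concentrating around $(n_k-1)p_k$. So after averaging over rows, $\bX_k^\top (A_k^*)^\top A_k^* \bX_k / n_k$ concentrates around $\Sigma_X$ as well, up to (i) a Bernstein-type deviation for the diagonal entries of $A_k^\top A_k$ and (ii) the off-diagonal entries $(A_k^\top A_k)_{i i'} = \sum_j A_{k,ji}A_{k,ji'}$, which have mean $(n_k-1)p_k^2$ and contribute the $n_k p_k^2$ terms visible in the statement of Theorem~\ref{thm:RSC} (through $\Psi(p)$) — this is exactly why condition (C3), $p_k \log d = o(1)$, and (C4), the bound on $\|\beta_{0k}\|_1$, enter: they ensure these cross-node correlation terms are lower-order. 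Here the matrix Bernstein / Hanson--Wright inequality applied conditionally, combined with the degree concentration $n_k p_k \gg \log n_k \vee s\log^2 d$ from (C3), yields the $\kappa \|u\|_2^2 - C_5 \log d \sqrt{\Psi(p)/n}\,\|u\|_1\|u\|_2$ form; the constant $\kappa$ is a fraction of $\lambda_{\min}(\Sigma_X) \ge c_1$, reduced to absorb the cross terms. I would then take a union bound over the $\binom{2d}{s}$-type net of sparse-ish directions (or use the standard peeling/Supremum-over-cone argument of \citet{raskutti2010restricted}) to make the bound uniform in $u$, producing the $d^{-\alpha}$ probability.

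For the second part, with $\hat{\bZ}_k = (\hat A_k \bX_k, \bX_k)$ and $\hat A_k = A_k / \sqrt{(n_k-1)\hat p_k}$, I would write $\hat A_k = \sqrt{p_k/\hat p_k}\, A_k^*$ and expand $u^\top \hat\bZ^\top \hat\bZ u / n$ against $u^\top \bZ^\top \bZ u / n$. Only the blocks involving the convolutional features change, and they change by the multiplicative factor $p_k / \hat p_k$. Using $\hat p_k = \binom{n_k}{2}^{-1}\sum_{i<j} A_{k,ij}$ (or analogous), $\hat p_k / p_k = 1 + O_P(\sqrt{1/(n_k^2 p_k)})$ by a binomial concentration, so $|p_k/\hat p_k - 1|$ is small; multiplying through, the perturbation of the convolutional block acting on the last-$d$ coordinates $u_1$ of $u$ is at most a constant times $\sqrt{2\log d}/n \cdot \|u_1\|_1^2$ after the net/union-bound argument, summed over the $K$ source domains — giving the extra $-3K\sqrt{2\log d}\,\|u_1\|_1^2/n$ term and halving $\kappa$ to $\kappa/2$ to absorb the worst-case sign of the perturbation. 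The main obstacle I anticipate is the uniform-in-$u$ control of the random-network cross terms $(A_k^\top A_k)_{ii'}$ for $i \ne i'$: these introduce dependence across the rows of $A_k^*\bX_k$ that a naive conditional argument does not see, so I would need a careful Hanson--Wright / decoupling bound on the quadratic form $\sum_{i \ne i'}(A_k^\top A_k)_{ii'}\, (X_{ki}^\top \tilde u)(X_{ki'}^\top \tilde u)$ and then argue, via (C3)--(C4), that its fluctuations are dominated by the diagonal contribution — this is the technical heart of replacing the classical RE condition in the networked setting.
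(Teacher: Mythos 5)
There is a genuine gap, and it sits exactly where you flag "the technical heart." Your plan conditions on $A_k$ and asserts that, given $A_k$, the rows of $A_k^*\bX_k$ are independent sub-Gaussian vectors. That is false: row $i$ of $A_k^*\bX_k$ is $\sum_j A^*_{k,ij}X_{kj}$, so two different rows share the same underlying $X_{kj}$'s and are correlated conditional on $A_k$ whenever the corresponding rows of $A_k$ have overlapping support (which they do with overwhelming probability). You acknowledge this at the end and propose to repair it with a decoupled Hanson--Wright bound on $\sum_{i\neq i'}(A_k^\top A_k)_{ii'}(X_{ki}^\top\tilde u)(X_{ki'}^\top\tilde u)$, but that step is precisely the content of the theorem and is not carried out; moreover, a Hanson--Wright bound conditional on $A$ is controlled by $\|{A^*}^\top A^*\|_F$ and $\|{A^*}^\top A^*\|_2$ (of order $\sqrt{n+n^2p^2}$ and $np$), which is the route the paper takes for bounding the penalty level $\lambda_\gamma$ in Theorems 3--4, and it does not obviously produce a uniform-in-$u$ lower bound of the form $\kappa\|u\|_2^2 - C\log d\sqrt{\Psi(p)/n}\,\|u\|_1\|u\|_2$. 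Relatedly, your attribution of $\Psi(p)$ to the off-diagonal mean $(n_k-1)p_k^2$ is incorrect: in the paper $\Psi(p)=Q^2(p)/p$, where $Q(p)$ is the sub-Gaussian constant of a centered $\mathrm{Ber}(p)$ variable (Theorem 2.1 of Ostrovsky), which is why $\Psi(p)\sim 1/(-4p\log p)$.

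The paper resolves the dependence by conditioning in the opposite direction: it fixes $\bX$ and exploits the fact that the rows of $A$ are mutually independent under the ER model, so that conditionally on $\bX$ the rows $Z_i$ are independent. It then runs the truncation argument of Negahban et al.\ (the functions $\phi_\tau$, $g_{\tau,T}$) so that the supremum $Z(t)=\sup_{\|u\|_2=1,\|u\|_1=t}|(\mathbb{P}_n-P)g_u|$ is a bounded empirical process amenable to McDiarmid's inequality, bounds its conditional expectation by symmetrization plus a Rademacher/sub-Gaussian maximal inequality in which the sub-Gaussian norm of $\frac{1}{\sqrt{np}}\sum_k(A_{ik}-p)X_{kj}$ contributes the $\sqrt{\Psi(p)\Gamma_n}$ factor, and finally makes the bound uniform over $t\in[1,\sqrt d]$ by peeling. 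Your proposal contains none of the truncation/bounded-difference machinery (without which the concentration of the supremum is not justified), and a net over $s$-sparse directions would not yield the "for any $u$" statement with $\|u\|_1\|u\|_2$ slack. Your treatment of the second part (the $\hat p$ versus $p$ rescaling via $\sqrt{p/\hat p}$ and binomial concentration) is essentially the paper's argument, though note that the correction term there involves the self-nodal features $u_1^\top X_i$ and the constant $K$ in $3K\sqrt{2\log d}\,\|u_1\|_1^2/n$ comes from the elementwise Hoeffding bound on $\bX^\top\bX/n-\Sigma_X$, not from summing over the $K$ source domains.
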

\noindent
The form of Theorem \ref{thm:RSC} is similar to Theorem 1 in \cite{raskutti2010restricted}, both of which describe the property of the quadratic form $\|\mathbf{Z} u \|_2$ of the random design matrix. Although they are not strongly convex in themselves, they exhibit a behavior similar to strong convexity as the sample size increases. The key difference, in the network scenario, is the involvement of $\Psi(p)$, which is a function of the network density $p$ in the following proof of the theorems. A larger $p$ implies a smaller gap between the neighborhood behavior of the quadratic form $\|\mathbf{Z} u \|_2$ and strong convexity. This means we can better approximate $n^{-1}\|\mathbf{Z} u \|_2$ by $\kappa \|u\|_2^2$.

\subsection{Theoretical Properties of Transfer Learning based on NCR}

With this condition, we could discuss the estimation error of the NCR model without transfer learning in the following theorem.
\begin{theorem}[Estimation Error for NCR]
    \label{thm: general model upper bound without transfer learning}
    Assume $\mathcal{A} = \emptyset$ and Conditions (C1)--(C4) hold, we take $\lambda = c_1\sqrt{\frac{\log d}{n_0}}$ for some sufficiently large positive constant $c_1$ and positive constant $c_2$, then there exists an upper bound on the estimation error
    $$\Pr\left(\left\| \hat{\gamma} - \gamma \right\|_2^2\leqslant c_1(s_1 + s_0) \frac{\log d}{n_0}\right)\ge 1-d^{-1}-n^{-1}-e^{\log n-np/c_2},$$
    where $s_0$ and $s_1$ represent the number of non-zero elements in $\beta_0$ and $\beta_1$ respectively.
\end{theorem}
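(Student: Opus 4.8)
The plan is to run the standard Lasso $\ell_2$-error argument, substituting Theorem~\ref{thm:RSC} for the usual restricted eigenvalue condition and paying attention to two network-specific complications: the design block coming from $A_0^*\bX_0$ is only sub-exponential (each column is a normalized sum of $\asymp n_0 p_0$ sub-Gaussian coordinates), and we actually minimize with the plug-in matrix $\hat\bZ_0=(\hat A_0\bX_0,\bX_0)$ rather than $\bZ_0$. First I would set $\hat u=\hat\gamma-\gamma$ and use optimality of $\hat\gamma$ to get the basic inequality
\[
\frac{1}{2n_0}\|\hat\bZ_0\hat u\|_2^2 \;\le\; \frac{1}{n_0}\langle \hat\bZ_0^\top(\by_0-\hat\bZ_0\gamma),\,\hat u\rangle + \lambda\big(\|\gamma\|_1-\|\hat\gamma\|_1\big).
\]
Since $\by_0=\bZ_0\gamma+\epsilon_0$, the inner product decomposes as $\tfrac1{n_0}\langle\hat\bZ_0^\top\epsilon_0,\hat u\rangle + \tfrac1{n_0}\langle\hat\bZ_0^\top(\bZ_0-\hat\bZ_0)\gamma,\hat u\rangle$, and by H\"older each term is at most a deviation bound times $\|\hat u\|_1$.

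The key estimate is that $\tfrac1{n_0}\|\hat\bZ_0^\top\epsilon_0\|_\infty\lesssim\sqrt{\log d/n_0}$ with probability at least $1-d^{-1}$. For the $\bX_0$ block this is the usual sub-Gaussian maximal inequality; for the $\hat A_0\bX_0$ block I would condition on $A_0$ and apply a Bernstein/sub-exponential bound, using that on the degree-regularity event (all node degrees $\asymp n_0 p_0$, which holds except with probability $e^{\log n_0-n_0 p_0/c_2}$ by (C3)) the conditional column variances are $O(1)$. Separately, $\hat p_0=\binom{n_0}{2}^{-1}\sum_{i<j}A_{0,ij}$ concentrates around $p_0$ with relative error $O_P(1/\sqrt{n_0^2 p_0})=o(1)$ on a set of probability $1-n_0^{-1}$, so $\hat A_0=(1+o_P(1))A_0^*$ entrywise and the perturbation term $\tfrac1{n_0}\hat\bZ_0^\top(\bZ_0-\hat\bZ_0)\gamma$ is of lower order (this is where (C4) bounding $\|\beta_{00}\|_1$ enters). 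Hence choosing $\lambda=c_1\sqrt{\log d/n_0}$ with $c_1$ large enough makes $\lambda$ at least twice the combined deviation bound.

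With that choice of $\lambda$, the standard manipulation forces $\hat u$ into the cone $\|\hat u_{S^c}\|_1\le 3\|\hat u_S\|_1$, where $S=\mathrm{supp}(\gamma)$ has size $s:=s_0+s_1$, so $\|\hat u\|_1\le 4\sqrt{s}\,\|\hat u\|_2$ and $\|\hat u_1\|_1^2\le 16 s\|\hat u\|_2^2$. Plugging the cone bound into Theorem~\ref{thm:RSC} applied to $\hat\bZ_0$ gives $\tfrac1{n_0}\|\hat\bZ_0\hat u\|_2^2\ge \tfrac{\kappa}{2}\|\hat u\|_2^2 - \big(C_5\log d\sqrt{s\,\Psi(p_0)/n_0}+cs\sqrt{\log d}/n_0\big)\|\hat u\|_2^2$, and since $\Psi(p_0)\asymp 1/(-p_0\log p_0)$ and $n_0 p_0\gg s\log^2 d$ by (C3), the bracketed quantity is $o(1)$ and is absorbed into $\tfrac{\kappa}{4}\|\hat u\|_2^2$. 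Combining with the basic inequality and H\"older yields $\tfrac{\kappa}{4}\|\hat u\|_2^2\le c\,\lambda\sqrt{s}\,\|\hat u\|_2$, hence $\|\hat u\|_2^2\lesssim s\lambda^2/\kappa^2\lesssim (s_0+s_1)\log d/n_0$. The failure events — violation of RSC ($d^{-\alpha}$, subsumed in $d^{-1}$), the noise maximal inequality ($d^{-1}$), degree regularity ($e^{\log n-np/c_2}$), and $\hat p_0$ concentration ($n^{-1}$) — union-bound to the stated probability.

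The main obstacle is the deviation control in the second step: establishing the $\sqrt{\log d/n_0}$ bound for the network-convolution columns $\hat A_0\bX_0$, which are heavier-tailed than sub-Gaussian and are coupled to the random graph, while simultaneously showing that replacing $p_0$ by $\hat p_0$ only perturbs the design at relative scale $o(1)$. Tracking the $\Psi(p_0)$ factor through the RSC slack term and checking that (C3) kills it is the other delicate point; the remaining steps are the textbook Lasso argument.
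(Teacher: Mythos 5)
Your proposal is correct and follows essentially the same route as the paper: the basic Lasso inequality for $\hat u=\hat\gamma-\gamma$, a $\sqrt{\log d/n_0}$ deviation bound for $n_0^{-1}\|\hat\bZ_0^\top\epsilon_0+\hat\bZ_0^\top(\bZ_0-\hat\bZ_0)\gamma\|_\infty$ (the paper uses the Hanson--Wright inequality conditional on $A$ together with Frobenius/operator norm bounds on $\hat A$ where you invoke Bernstein on the degree-regularity event, but these yield the same tail and both need $p\log d=o(1)$ from (C3) to kill the sub-exponential part), Chernoff concentration of $\hat p$ to control the plug-in perturbation with (C4), the cone constraint, and absorption of the RSC slack terms via $n_0p_0\gg s\log^2 d$. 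The failure-event bookkeeping also matches the paper's.
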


\noindent
The upper bound of $\|\hat \gamma - \gamma\|$ established in Theorem \ref{thm: general model upper bound without transfer learning} does not depend on $p$ because we have appropriately scaled $A^* = A/\sqrt{(n-1)p}$ by $p$, as we have previously discussed. 
However, the convergence rate is still related to the network density $p$. Specifically, compared with the result in the classical lasso estimator for a linear regression model, the difference lies in the extra term of $-n^{-1}-e^{\log n-np/c_2}$. 
This comes from the restriction on the $\ell_2$-norm and the Frobenius norm of $A^*$. In fact, as long as the expected degree of the network satisfies Condition (C3), this term tends to zero. This leads to that the estimation error regarding $\hat{\gamma}$ is small with probability approaching 1.


\begin{theorem}[Estimation Error for Oracle-Trans-NCR]
	\label{thm: general model upper bound with transfer learning}
	Assume that Conditions (C1)--(C4) hold true. Suppose that $\mathcal{A}$ is known, we take 
\begin{equation}
\label{eq:lambda_beta}
    \lambda_{\gamma} = C_1\sqrt{\frac{\log d}n} + C_2 h \max\left\lbrace \frac{\sqrt{\log d \sum_k (n_k + n_k^2 p_k^2) }}{n} , \frac{\log d \max_k\left\lbrace n_k p_k \right\rbrace }{n}\right\rbrace
\end{equation}
and $\lambda_{\delta} = C_3 \sqrt{\frac{\log d}{n_0}}$ for some sufficiently large constants $C_1$, $C_2$ and $C_3$. Further assume $h\sqrt{\log{d} \ \Psi(p_0)} = o(1)$ and $s\lambda_\gamma^2 \log{d} \ \Psi(p_0) = o(1)$. Then, there exists an upper bound on the estimation error
	$$
	\begin{aligned}
		\Pr \left(\left\|\hat{\gamma_0} - \gamma_0 \right\|_{2}^2 \leq C\left[{s \lambda_{\gamma}^2} + \left(  h^2 \wedge \lambda_{\gamma}  h \right)  + \left(  h^2 \wedge \lambda_{\delta}  h \right) \right] \right) \geq 1- d^{-1} - \sum_k n_k^{-1} - \sum_k e^{\log{n_k} -\frac{n_k p_k}{c}}
	\end{aligned}
	$$
\end{theorem}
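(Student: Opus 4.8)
The plan is to analyze the two stages of the Oracle-Trans-NCR algorithm separately, bounding the error $\|\hat\gamma^\cA - \gamma^\cA\|_2$ incurred in the transferring step and the error $\|\hat\delta^\cA - \delta^\cA\|_2$ incurred in the debiasing step, then combining them via the triangle inequality together with the decomposition $\hat\gamma_0 - \gamma_0 = (\hat\gamma^\cA - \gamma^\cA) + (\hat\delta^\cA - \delta^\cA)$, since $\gamma^\cA + \delta^\cA = \gamma_0$. For the transferring step, I would first set up the basic inequality for the pooled Lasso: writing $\hat v = \hat\gamma^\cA - \gamma^\cA$, optimality of $\hat\gamma^\cA$ gives the standard bound controlling $\hat v$ in terms of the empirical process term $\frac{1}{n}\sum_{k}\hat\bZ_k^\top(\by_k - \hat\bZ_k\gamma^\cA)$ and the regularization level $\lambda_\gamma$. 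The key point is that, by definition of $\gamma^\cA$ through the moment equation \eqref{eq:moment}, this empirical process term has two contributions: a genuine noise part $\frac{1}{n}\sum_k \hat\bZ_k^\top\eps_k$ and a bias part coming from $\frac{1}{n}\sum_k \hat\bZ_k^\top\bZ_k\delta_k$ (plus the plug-in error from using $\hat p_k$ rather than $p_k$ in $\hat\bZ_k$). I would show each piece is dominated by the two terms in the stated formula for $\lambda_\gamma$: the $C_1\sqrt{\log d/n}$ term absorbs the sub-Gaussian noise via a union bound over $2d$ coordinates and a concentration argument for network-dependent quadratic forms (as in the proof of Theorem \ref{thm:RSC}), while the $C_2 h \max\{\cdot,\cdot\}$ term absorbs the bias, using $\|\delta_k\|_1 \le h$ and bounds on the operator/Frobenius norms of $A_k^*$; the $\sqrt{\log d \sum_k(n_k + n_k^2 p_k^2)}/n$ scaling is exactly what arises from $\mathrm{Var}(\hat\bZ_k^\top\bZ_k\delta_k)$ decomposed into a diagonal ($n_k$) and convolutional ($n_k^2 p_k^2$) part.

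Once $\lambda_\gamma$ dominates the empirical process, I would invoke the RSC inequality for $\hat\bZ$ from Theorem \ref{thm:RSC}. The subtlety here is that $\gamma^\cA$ is not exactly sparse — it equals $\gamma_0 + \delta^\cA$ with $\gamma_0$ being $s$-sparse and $\delta^\cA$ only $\ell_1$-bounded by $h$ — so the cone/compatibility argument must be run in the "approximately sparse" form: the error $\hat v$ is not confined to a cone over the support $S$ of $\gamma_0$, but rather satisfies $\|\hat v_{S^c}\|_1 \le 3\|\hat v_S\|_1 + 4h$ (this is where the $h^2 \wedge \lambda_\gamma h$ contribution to the final bound is born). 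Plugging this into the RSC lower bound, using $\|\hat v\|_1 \le 4\sqrt{s}\|\hat v\|_2 + 4h$ on the support side, and solving the resulting quadratic inequality in $\|\hat v\|_2$ — here the side conditions $h\sqrt{\log d\,\Psi(p_0)} = o(1)$ and $s\lambda_\gamma^2\log d\,\Psi(p_0) = o(1)$ are precisely what is needed to ensure the RSC curvature term $\kappa/2$ is not swallowed by the $\log d\sqrt{\Psi(p)/n}\|u\|_1\|u\|_2$ and $K\sqrt{\log d}/n\|u_1\|_1^2$ remainder terms — yields $\|\hat\gamma^\cA - \gamma^\cA\|_2^2 \lesssim s\lambda_\gamma^2 + (h^2 \wedge \lambda_\gamma h)$.

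For the debiasing step, the analysis is a more standard Lasso bound but with a twist: $\hat\delta^\cA$ solves an $\ell_1$-penalized least squares on target data only with $\by_0 - \hat\bZ_0\hat\gamma^\cA$ as the effective response, so I would write $\by_0 - \hat\bZ_0(\hat\gamma^\cA + \delta) = \eps_0 + \hat\bZ_0(\gamma_0 - \hat\gamma^\cA) + (\bZ_0 - \hat\bZ_0)(\ldots) - \hat\bZ_0(\delta - \delta^\cA_{\text{eff}})$, where the true debiasing target is $\delta^\cA = \gamma_0 - \gamma^\cA$. The target $\delta^\cA$ is $\ell_1$-bounded by $h$ (weighted average of the $\delta_k$'s, as derived in the text) so again this is an approximately-sparse Lasso problem; applying the RSC condition for $\hat\bZ_0$ and choosing $\lambda_\delta = C_3\sqrt{\log d/n_0}$ to dominate $\frac{1}{n_0}\hat\bZ_0^\top\eps_0$ plus the carried-over error $\frac{1}{n_0}\hat\bZ_0^\top\hat\bZ_0(\hat\gamma^\cA - \gamma^\cA)$ (which is controlled by the first step's bound and is lower order under the side conditions), I would obtain $\|\hat\delta^\cA - \delta^\cA\|_2^2 \lesssim s\lambda_\delta^2 + (h^2 \wedge \lambda_\delta h)$, and note $s\lambda_\delta^2 \lesssim s\lambda_\gamma^2$ (or is absorbed). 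Summing the two contributions and collecting the probability of the bad events — the $d^{-1}$ from the Lasso/RSC union bounds, and $\sum_k n_k^{-1} + \sum_k e^{\log n_k - n_k p_k/c}$ from the network-norm concentration bounds of the type appearing in Theorem \ref{thm: general model upper bound without transfer learning} applied domain-by-domain — gives the stated result. The main obstacle I anticipate is the transferring-step empirical process bound: carefully controlling $\frac{1}{n}\sum_k \hat\bZ_k^\top\bZ_k\delta_k$ with the sharp $\sqrt{\sum_k(n_k + n_k^2 p_k^2)}$ dependence requires decomposing the network-convolutional block, handling the dependence among rows induced by the shared adjacency matrix $A_k$, and simultaneously tracking the plug-in error from $\hat p_k$ — this is where the bulk of the technical work and the ER-model-specific concentration arguments live.
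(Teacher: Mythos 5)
Your overall architecture matches the paper's: a basic-inequality analysis of the transferring step with the empirical process split into noise, bias $\frac{1}{n}\sum_k \hat{\mathbf{Z}}_k^\top \mathbf{Z}_k\delta_k$, and $\hat p_k$ plug-in error (each dominated by the corresponding piece of $\lambda_\gamma$), followed by an RSC-based bound for an approximately-sparse target, and your case-free "relaxed cone" packaging of the transferring step is an acceptable variant of the paper's explicit two-situation split. The genuine problems are in your debiasing step. First, you propose to choose $\lambda_\delta$ so that it dominates the carried-over error $\frac{1}{n_0}\hat{\mathbf{Z}}_0^\top\hat{\mathbf{Z}}_0(\hat\gamma^{\mathcal{A}}-\gamma^{\mathcal{A}})$ in the $\ell_\infty$ dual norm. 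That term is of order $\max_{j,k}|(\frac{1}{n_0}\hat{\mathbf{Z}}_0^\top\hat{\mathbf{Z}}_0)_{jk}|\cdot\|\hat u\|_1$, and $\|\hat u\|_1$ can be as large as $\sqrt{s}\,\|\hat u\|_2$ or $h$, neither of which is $o(\sqrt{\log d/n_0})$ in general; so $\lambda_\delta = C_3\sqrt{\log d/n_0}$ cannot absorb it. The paper avoids this entirely by bounding the cross term $\frac{1}{n_0}|\langle \hat{\mathbf{Z}}_0\hat u, \hat{\mathbf{Z}}_0\hat v\rangle|$ with Young's inequality against the quadratic forms, so that the carried-over error enters additively as $\frac{1}{n_0}\|\hat{\mathbf{Z}}_0\hat u\|_2^2 \lesssim \|\hat u\|_2^2$, which is already present in the final bound; this is the step your proposal is missing.

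Second, your claimed debiasing bound $\|\hat\delta^{\mathcal{A}}-\delta^{\mathcal{A}}\|_2^2 \lesssim s\lambda_\delta^2 + (h^2\wedge\lambda_\delta h)$ contains a spurious $s\lambda_\delta^2 = s\log d/n_0$ term, which is exactly the no-transfer rate of Theorem \ref{thm: general model upper bound without transfer learning}; if it were really there, the theorem's improvement over the target-only estimator would vanish. Your justification that "$s\lambda_\delta^2 \lesssim s\lambda_\gamma^2$" is false in the relevant regime, since $n_0\ll n$ gives $\lambda_\delta = \sqrt{\log d/n_0} \gg \sqrt{\log d/n}$, which is the leading term of $\lambda_\gamma$ when $h$ is small. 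The resolution, as in the paper, is that the debiasing target is \emph{not} approximately sparse with a sparse-plus-small decomposition: it is purely $\ell_1$-bounded by $h$ (a weighted average of the $\delta_k$), so the cone analysis produces only the $(h^2\wedge\lambda_\delta h)$ contribution (plus the carried-over $\|\hat u\|_2^2$), and no $s\lambda_\delta^2$ term ever arises. You would also need the stated side conditions $h\sqrt{\log d\,\Psi(p_0)}=o(1)$ and $s\lambda_\gamma^2\log d\,\Psi(p_0)=o(1)$ at this stage — not only in the transferring step — to keep the RSC remainder terms from overwhelming $\lambda_\delta\|\hat v\|_1$ when the carried-over error dominates.
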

\noindent
This theorem shows that the upper bound of the estimation error is not only related to $d$ and $n$, but also highly related with the network density $p_k$.  It is remarkable that in Theorem \ref{thm: general model upper bound with transfer learning}, we allow the networks in multiple source tasks and the target task to have entirely different connection probabilities \( p_k \)s. As long as each  \( p_k \) satisfies the requirement in condition (C3)--(C4), the probability of $1- d^{-1} - \sum_k n_k^{-1} - \sum_k \exp\{\log{n_k} -c^{-1}(n_k p_k)\}$ tends to 1. In our empirical analysis, we could observe the network density for one of the sources dataset  is  ten times that of the target network dataset. However, the leading term of $\lambda_\gamma$ could be different due to the variations in network density, which further leads to different estimation error upper bound. Specifically, we discuss two scenarios.

	First, when all the networks of source and target datasets are extremely sparse with $\max_k\left\lbrace n_k^2 p_k^2 \right\rbrace  \log d / n = O(1) $. It is remarkable that this implies $\sum_k n_k^2 p_k^2 / n = o(1)$ due to fixed $K$. In this case, $\lambda_{\gamma}$ is dominated by the first part $\sqrt{\log d/n}$. In this way, we could simplify the estimation error into a more straightforward form, as
	$$
	\begin{aligned}
		\Pr \left( \left\|\hat{\gamma_0} - \gamma_0 \right\|_{2}^2 \leq C\left[{(s_1 + s_0) \frac{\log d}{n}} + \left(  h^2 \wedge \frac{\log d}{n_0}  h \right) \right] \right)  \geq 1- d^{-1} - \sum_k n_k^{-1} - \sum_k e^{\log{n_k} -\frac{n_k p_k}{c}}.
	\end{aligned}
	$$
 In this case, the estimation error of NCR is similar to that of the classical lasso estimator.
 	
	Second, when the connection probability of the source task network is relatively high but still satisfies the Condition (C3) with $\max_k\left\lbrace n_k^2 p_k^2 \right\rbrace  \log d / n \to \infty $, the second term of $\lambda_{\beta}$ will dominate the first term. This leads to a different estimation error upper bound with $\lambda_{\gamma} = C hn^{-1} \log d \max_k\left\lbrace n_k p_k \right\rbrace$ as,
	$$
	  \begin{aligned}
	  		\left\|\hat{\gamma_0} - \gamma_0 \right\|_{2}^2 \leq C\left[(s_1 + s_0) h^2 \left( \frac{\log d \max_k\left\lbrace n_k p_k \right\rbrace }{n}\right) ^2 + \left(  h^2 \wedge \frac{\log d}{n_0}  h \right) \right]
	  \end{aligned}
	  $$
	  of high probability.

     {It is remarkable that when the connection probabilities of the source tasks satisfying 
$$
h \max\left\lbrace \sqrt{ \frac{\sum_k (n_k + n_k^2 p_k^2)}{n}} , \sqrt{\frac{\log d }{n}} \max_k\left\lbrace n_k p_k \right\rbrace\right\rbrace = O(1),
$$
we have the same asymptotic rate of convergence for coefficient estimation in network models as in linear models. When $h$ is relatively small, the convergence rate of transfer learning estimation is $s\log d /n $, faster than $s \log d / n_0$ without transfer learning. This is precisely the advantage of transfer learning.
}

\def \our{\mbox{NCR}}
\def \bX{\textbf{X}}
\def \bA{\textbf{A}}
\def \bY{\textbf{y}}
\def \dims{d}
\def \bbeta{\mathbf{\beta}}
\def \one{\mathbf{1}}
\def \zero{\mathbf{0}}
\def \sk{\tilde{s}}
\def \mA{\mathcal{A}}
\section{Simulation Studies}
\label{sec:verify}
In this section, we evaluate the empirical performance of five methods in various scenarios: (1) Our Oracle-Trans-$\our$, which uses the oracle transferable source data for transfer learning,  as described in Algorithm \ref{al:trans_alg}. 
(2) Trans-$\our$, which uses a data-driven way to aggregate all candidate source data, assigning each source a learned weight as detailed in Algorithm \ref{al:source_selection_alg}. 
(3)  Trans-Lasso \citep{li2022transfer}, which employs transfer learning within a conventional regression framework, fitting both source and target datasets without considering the network structure.
(4) Target-only network regression lasso ($\our$), which fits the model using only target data within the network regression framework, without transfer learning.
(5) Target-only lasso (Lasso), which is a conventional regression model using only target data, without considering network structure or transfer learning.
We evaluate the performance by calculating the sum of squared estimation errors (SSE) between the estimated coefficients and the true target coefficient, i.e., $\mbox{SSE}=||\hat{\gamma}_{0} - \gamma_{0}||_F^2$, where  $\gamma_0 =(\beta^{\top}_{00}, \beta^{\top}_{10})^{\top}$. All experiments are replicated 100 times to calculate the averaged SSE.

\subsection{Simulation Setup}
\textbf{Data generation.}
We consider ten candidate source domains ($k \in \{1, \ldots, K\}, K=10$), and one target domain ($k=0$). 
 We generate each $k$th simulation data  as follows. 
(1) Node Features: Generate 
$\bX_k \in \mathbb{R}^{n_k \times 500}$ from i.i.d. Gaussian with mean zero and 
 a covariance matrix $\Sigma_X$, where the $ij$th element in $\Sigma_X$ is $0.8^{|i-j|}$, $i,j\in\{1,\ldots,500\}$, implying an exponential correlation structure.
(2) Adjacency Matrix:  Generate adjacency matrix $A_k \in \{0,1\}^{n_k \times n_k}$ considering two different random graph models: 
 ER model with parameter $p_k$, i.e., $A_{k,ij} \sim \Ber(p_k)$.
 Two-block balanced stochastic block model (SBM),  i.e., $A_{k,ij} \sim \Ber(p^{(k)}_{in})$ if $i$ and $j$ are in the same community, otherwise $A_{k,ij} \sim \Ber(p^{(k)}_{out})$.
We  normalize $A_k$ as  $\hat{A}_k=\frac{A_k}{\sqrt{(n-1) \hat{p}_k}}$, where $\hat{p}_k=\frac{\sum_{i<j} A_{k,ij}}{n_1(n_k-1)/2}$ is the network density. 
(3) Response Generation:   Generate  $\bY_k \in \mathbb{R}^{n_k}$ using \eqref{eq:NRM}, i.e., 
 $\mathbf{y}_k=A^*_k \mathbf{X}_{k} \beta_{0k}+\mathbf{X}_{k} \beta_{1k} + \epsilon_k$, where
 where  $\epsilon_k$ are from i.i.d normal distribution $\cN(0, 1)$.
Following the setting in  \cite{li2022transfer}, 
 we set the coefficient vector in the target data as
$\bbeta_{00}=(0.3 \cdot \one_{16}, \zero_{484})$,  $\bbeta_{10}=(0.4 \cdot \one_{16}, \zero_{484})$,
where $\one_{16}$
has all $16$ elements 1 and $\zero_{484}$ has all $484$ elements 0. For $k$th source data, we set 
$\bbeta_{0k}=\bbeta_{1k} = \bbeta_{00} - (\delta_k \cdot \one_{8}, \zero_{488} )$.






\textbf{Transferable source set.} 
Among these ten candidate source domains, we set the first five as transferable source domains, i.e., $\mA=\{1,\ldots,5\}$, and the last five as non-transferable source domains. 
Specifically,
for the first five sources, we set 
$\delta_1=\ldots=\delta_5=\delta$ where $\delta$ is a small value and we will consider varying $\delta$ to study its effect.  We consider a large domain shift for the subsequent sources, i.e., $\delta_6=\ldots=\delta_{10}=10$.

\subsection{Simulation Results Under ER Random Graph Model}\label{sec:simu_er}

In this simulation,  
we aim to answer three key questions: 
(1) Impact of Source Sample Size: What's the performance of our method as the sample size in the source domain increases?  
(2) Impact of Domain Shift:  What's the performance of our method when the domain shift between the transferable source set and the target data increases?
(3) Impact of Network Parameters: What's the performance of our method when the source and target networks are generated from ER models with differing parameters?

 \begin{figure}[!htp]
\centering
\includegraphics[width=1\textwidth]{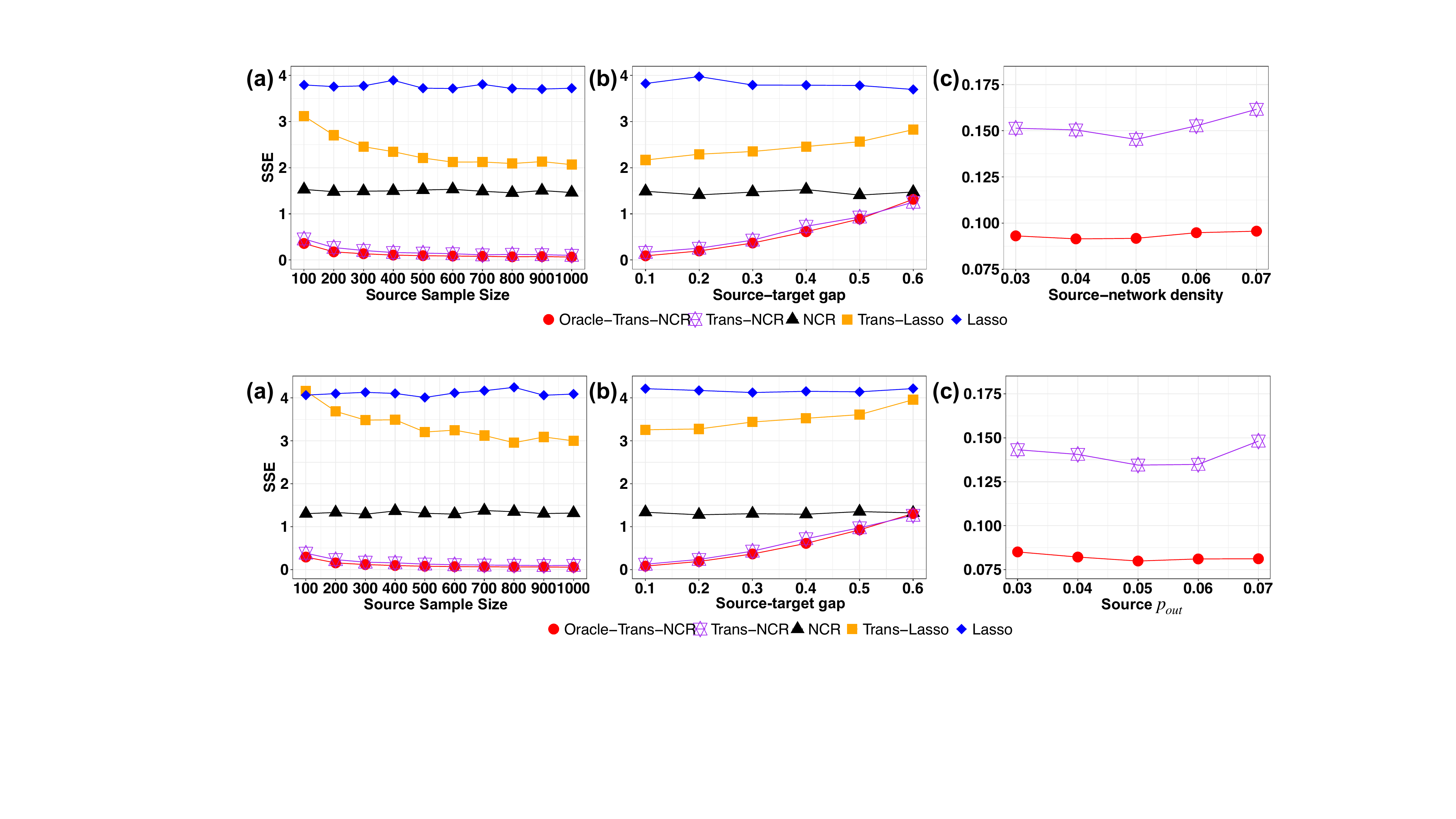}
\vspace{-0.6in}
\caption{Performance evaluation (SSE) under ER graph model of Oracle-Trans-$\our$ (red), Trans-$\our$ (purple), $\our$ (black), Trans-Lasso (orange), and Lasso (blue) under varying (a) source sample size, (b) domain shift $\delta$, and (c) source network  ER connecting probability
(0.05 matches the target data setting).  In (c), only Oracle-Trans-$\our$ and Trans-$\our$ are shown for clarity, as the significantly larger SSE values of the other methods obscure the U-shaped performance pattern of Oracle-Trans-$\our$ and Trans-$\our$.  }
\label{fig:er}
\end{figure}

\textbf{Asymptotic performance.}
To investigate the impact of source sample sizes, 
 we  vary the sample size of source data $n_1=\ldots=n_{10}=n \in \{100,200,300,400,\ldots,1000\}$ while fixing other parameters:  size of the target data { $n_0=150$}, 
 source-target domain shift $\delta=0.1$, edge probability $p_0=\ldots=p_{10}=0.05$.
Figure \ref{fig:er}(a) shows that 
(1) Trans-$\our$ achieves comparable performance as Oracle-Trans-$\our$ under various source sample size, indicating Trans-$\our$  can effectively identify and leverage the most relevant source data for transfer learning.
(2) Our method  Trans-$\our$ and Oracle-Trans-$\our$ demonstrates a substantial decrease in SSE as the source data sample size increases.  Although Trans-Lasso also shows a decreasing trend in SSE, its convergence rate is considerably slower compared to Trans-$\our$.
(3) Across all different values of the source data sample size $n$, Trans-$\our$ consistently achieves the lowest SSE among the compared methods. This demonstrates the superior efficacy of our approach in utilizing the available source data to enhance the target task performance.
(4)  As expected, the performance of the Lasso and $\our$ methods does not change with the source data sample sizes since they don't use the source data information. 


\textbf{Impact of source-target domain shift.} 
To investigate the impact of the  domain shift between  the transferable source domain and target domain,  we vary $\delta \in \{0.1,0.2,\ldots, 0.6\}$,  while fixing other parameters:  ${n_0=150}$,  $n_1=\ldots=n_{10}=500$,   and 
$p_0=\ldots=p_{10}=0.05$.
Figure \ref{fig:er}(b) reveals that the   SSE  of Trans-$\our$  and Trans-Lasso increases gradually as the source-target domain shift grows, which is expected since a larger shift implies reduced transferability between source and target domains. When the domain shift is relatively small ($\delta < 0.6$), Trans-$\our$ demonstrates superior performance compared with the other methods. However, as the domain shift becomes more significant ($\delta \geq 0.6$), the benefit of transfer learning in Trans-$\our$ diminishes. In this case, the performance of Trans-$\our$ becomes comparable to that of the target-only methods, such as target-only network lasso ($\our$) and baseline lasso (Lasso). In addition, we also observe that Trans-$\our$ achieves comparable performance as Oracle-Trans-$\our$ under various $\delta$.

\textbf{Impact of source-target ER model parameter discrepancy.}
 To investigate the impact of the ER model parameter difference between the source and target networks,  we vary the ER edge probability in the source data $p_1=\ldots=p_{10} \in \{0.03, 0.04,\ldots, 0.07\}$ while fixing  $p_0=0.05$, $n_1=\ldots=n_{10}=500$, $n_0=150$, $\delta=0.1$. Figure \ref{fig:er}(c) reveals that  Trans-$\our$ shows a slight U-shape trend, with the best results achieved when the source and target network densities are similar. As the density difference increases in either direction, the performance degrades.
 While network density discrepancies can impact the performance of transfer learning approaches, Trans-$\our$ demonstrates strong robustness in handling these differences, consistently outperforming Trans-Lasso, $\our$, and Lasso across the range of density variations tested. In addition, we also observe that Trans-$\our$ achieves comparable performance as Oracle-Trans-$\our$ under various ER connecting probabilities in source networks.

\subsection{Simulation Results Under SBM}

While our theorem focuses on results for the ER model, we empirically demonstrate the superiority of our method under the SBM \citep{rohe2011spectral}.

 

\begin{figure}[htp]
\centering
\includegraphics[width=1\textwidth]{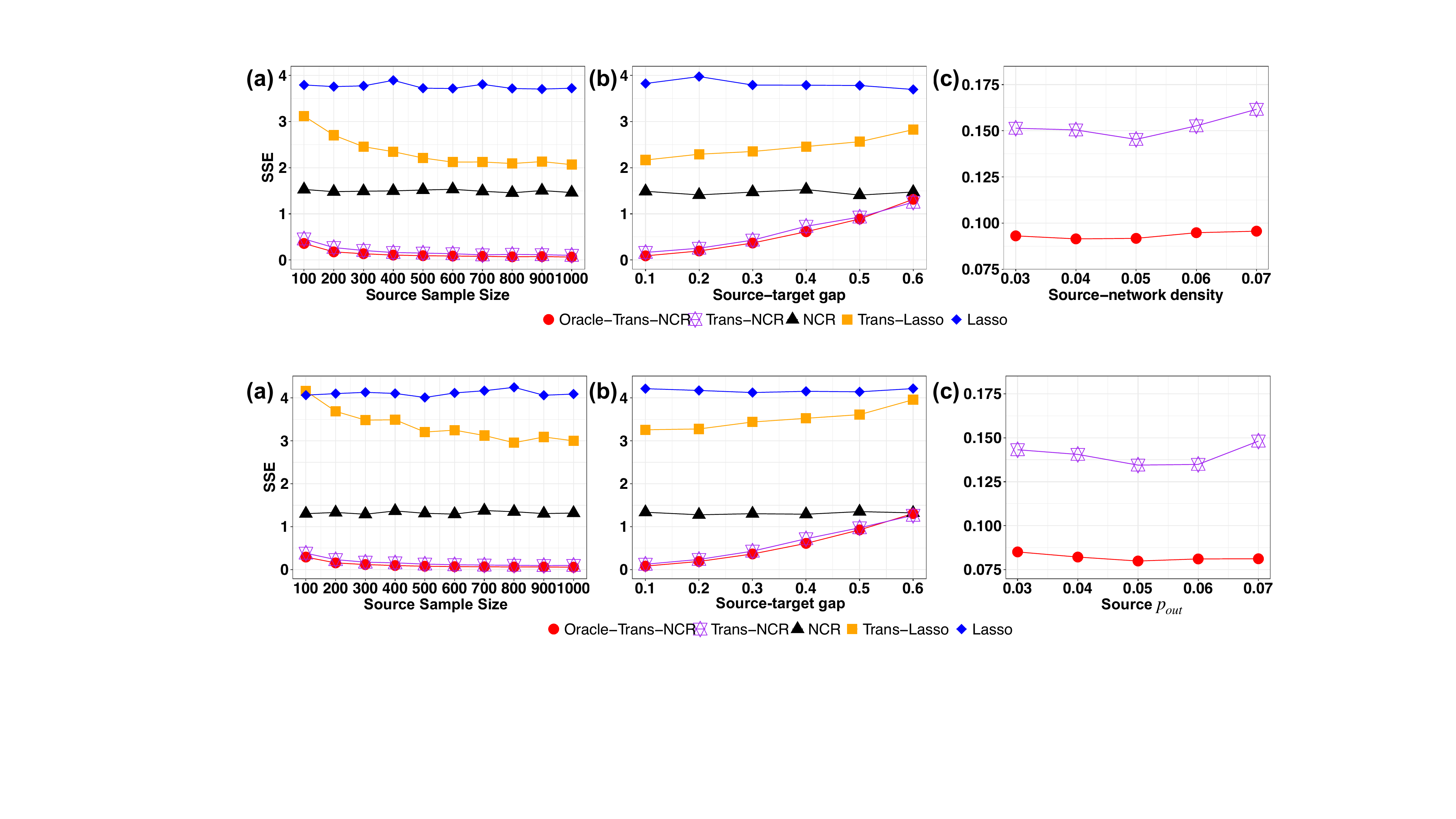}
\vspace{-0.6in}
\caption{SSE under SBM graph model of Oracle-Trans-$\our$ (red), Trans-$\our$ (purple), $\our$ (black), Trans-Lasso (orange), and Lasso (blue) under varying (a) source sample size, (b) domain shift $\delta$, and (c) SBM between-community probability $p_{out}$ (0.05 matches the target data setting) in the source network.
In (c), only Oracle-Trans-$\our$ and Trans-$\our$ are shown for clarity, as the significantly larger SSE values of the other methods obscure the U-shaped performance pattern of Oracle-Trans-$\our$ and Trans-$\our$. }
\label{fig:sbm}
\end{figure}

 \textbf{Asymptotic performance.}
To investigate the impact of source sample sizes under the SBM setting,   we conduct experiments by varying the sample size of the source data $n_1=\ldots=n_{10}=n \in \{100,200,300,400,\ldots,1000\}$ while keeping other parameters fixed. These fixed parameters include the size of the target data {$n_0=150$}, the source-target domain shift $\delta=0.1$, the within-community connecting probabilities $p^{(0)}_{in}=\ldots=p^{(10)}_{in}=0.1$, and the between-community connecting probabilities $p^{(0)}_{out}=\ldots=p^{(10)}_{out}=0.05$. 
Figure \ref{fig:sbm}(a) reveals similar trends to the ER model: 
(1)  SSE for Oracle-Trans-$\our$ and Trans-$\our$ decreases with larger source sample sizes, reflecting improved performance;  (2)
Oracle-Trans-$\our$ and Trans-$\our$  consistently achieve the lowest SSE across all sample sizes. 


\textbf{Impact of source-target domain shift.} 
To investigate the impact of the  domain shift between  the transferable source domain and target domain under SMB settings,  we vary $\delta \in \{0.1,0.2,\ldots, 0.6\}$,  while fixing other parameters:  ${n_0=150}$,  $n_1=\ldots=n_{10}=500$,   and 
within-community connecting probabilities $p^{(0)}_{in}=\ldots=p^{(10)}_{in}=0.1$, between-community connecting probabilities $p^{(0)}_{out}=\ldots=p^{(10)}_{out}=0.05$. 
Figure \ref{fig:sbm}(b) reveals that when the domain shift $\delta$ increases from 0.1 to 0.6, the performance of all methods deteriorates, as evidenced by the increasing SSE values, similar to their performance under the ER model. Nevertheless, Oracle-Trans-$\our$ and Trans-$\our$ still outperform the other methods.

\textbf{Impact of source-target SBM parameter discrepancy.}
 To investigate the impact of the SBM parameter difference between the source and target networks, we conduct experiments by 
varying the between-community connecting probabilities in the source domains as $p^{(1)}_{out}=\ldots=p^{(5)}_{out}=p_{out} \in \{0.03,0.04,\ldots, 0.07\}$, and setting the within-community connecting probabilities in the source domains as $p^{(1)}_{in}=\ldots=p^{(10)}_{in}=2p_{out}$, maintaining a constant ratio of 2 between the within-community and between-community connecting strengths. 
We keep the parameters in the target data fixed at $p^{(0)}_{out}=0.05$, $p^{(0)}_{in}=0.1$.
Other parameters remain constant, with $n_1=\ldots=n_{10}=500$, $n_0=150$, $\delta=0.1$. 
As shown in Figure \ref{fig:sbm}(c), Trans-$\our$ exhibits a slight U-shaped trend as $p_{out}$ increases from 0.03 to 0.07, with the best performance achieved when the source and target networks have the same parameters (i.e., $p_{out}=0.05$). 
The performance of Trans-$\our$ degrades as $p_{out}$ deviates from the target network density in either direction. As we vary $p_{out}$ in either direction, the community structure signal strength keep the same since we use a constant ratio, however, the change in $p_{out}$ still affects the overall network density, which is the primary factor driving the U-shaped performance pattern.
This observation is similar to the results obtained under the ER model (Figure \ref{fig:er}(c)).

 In sum, these findings suggest that our method maintains its superior performance even when the underlying networks are generated using the SBM, demonstrating its robustness to different random graph models.

\section{Real Application: User Activity Prediction}

Predicting user activity on social media platforms, such as daily tweets, has practical applications in marketing, content recommendation, social influence modeling, and public opinion analysis \citep{steinert2018predicting}. For platforms like Twitter, such predictions inform growth strategies by identifying user groups at risk of reduced activity, enabling targeted interventions like personalized notifications or exclusive content to re-engage them. This approach enhances user experiences, boosts engagement, and supports user base growth.


\textbf{Data collection. }
Data was collected from Sina Weibo, focusing on MBA students at four universities in Liaoning, Beijing, Shanghai, and Fujian. For each region, we constructed social networks where nodes represent accounts and edges denote follower-followee relationships, captured in adjacency matrices \( A_k = (A_{k,ij}) \in \mathbb{R}^{n_k \times n_k} \).
Users connected in the network often exhibit similar behaviors due to peer influence, shared interests, or information diffusion. User-specific covariates \( \mathbf{X}_k  \) (66 features)   include gender, number of followers, followees, likes, and 62 binary high-frequency interest tags (e.g., travel, movies, finance). 
The response variable \( y_k \) represents the user's tweets per day.
The size of the networks \( A_k \) varies significantly, with Liaoning as the smallest (315 nodes), Beijing  at 1,772 nodes, Shanghai  the largest at 2,248 nodes, and Fujian  at 1,051 nodes. 
In this study, Liaoning is designated as the target dataset (\( k=0 \)), with the goal of predicting user activity (tweets per day) within this network. The larger networks from Beijing (\( k=1 \)), Shanghai (\( k=2 \)), and Fujian (\( k=3 \)) serve as source datasets, showcasing the effectiveness of the proposed transfer learning method in improving predictions for the Liaoning network.

\textbf{Descriptive analysis.} 
{\color{black}
The network densities for the four regions are as follows: 0.9\(\times\)10\(^{-3}\) for Liaoning, 0.7\(\times\)10\(^{-3}\) for Beijing, 2.9\(\times\)10\(^{-3}\) for Shanghai, and 9.1\(\times\)10\(^{-3}\) for Fujian. Despite some variation, all networks are sparse and show similar levels of connectivity.
The histograms of tweets per day \( \mathbf{y} \) across the regions (Upper panel in Figure \ref{fig:hist}) reveal a similar pattern: a highly skewed distribution where most users tweet infrequently, and only a few are highly active. This behavior is typical of social media platforms, where content creation is dominated by a small number of users.
The lower panel of Figure \ref{fig:hist} presents word clouds of high-frequency personalized tags across four regions, illustrating user interests in each region. Common tags like "Fashion", "Food", "Music","Travel" and "Movie" appear prominently across all regions, suggesting shared interests that can aid in transfer learning. Regional differences are also evident, such as higher frequencies of "student" and "Freedom" in certain regions, reflecting localized preferences.  These variations
highlight the importance of considering localized behaviors in social network analysis.

}





\textbf{Transfer learning results. }
To assess the effectiveness of our proposed Trans-NCR method, we  compare with  three baseline methods on the target dataset from Liaoning province. The baseline methods are: (1) NCR, which applies only to the target data, using both the network structure \( A \) and the covariate matrix \( \mathbf{X} \) to predict \( \mathbf{y} \). It integrates network information and covariates without incorporating transfer learning. (2) Trans-Lasso, which is a transfer learning approach based on linear regression \citep{li2022transfer}. It leverages data from the source regions to improve predictions on the target data but only uses the covariate matrix \( \mathbf{X} \), without considering the network structure. (3)  Lasso,  which performs lasso regression using only the \( \mathbf{X} \) features from the target data, without incorporating network information or transfer learning.
We use the out-of-sample root mean square error (RMSE) obtained through five-fold cross-validation as our evaluation metric. RMSE is a common measure used to assess the accuracy of regression models, with lower RMSE values indicating better predictive performance. The use of cross-validation ensures the robustness of the results by testing the model’s ability to generalize to unseen data.
The results of this analysis are presented in Table \ref{tab:reg}. 

%


\begin{table}[h]
\caption{Prediction RMSE values (over five-fold cross-validation) under different source data when the target province is Liaoning.  ``F'' is short for Fujian, ``B'' for Beijing, and ``S'' for Shanghai.
}\label{tab:reg}
\begin{center}
\begin{tabular}{l|l|l|l|l}
\hline
\hline
Source  & Trans-NCR & NCR  & Trans-Lasso & Lasso  \\
\hline
Fujian  & \textbf{0.5013} & 0.5941 & 0.5880 & 0.6537 \\
Beijing  & \textbf{0.5071} & 0.5941 & 0.5714 & 0.6537 \\
Shanghai  & \textbf{0.5646} & 0.5941 & 0.5828 & 0.6537 \\
F \& B \& S  & \textbf{0.4766} & 0.5941 & 0.5224 & 0.6537 \\
\hline
\hline
\end{tabular}
\end{center}
\end{table}

The Trans-NCR method consistently achieves the lowest RMSE values across all source data configurations, with the aggregation of data from multiple regions providing the best performance (RMSE = 0.4766, which is {\color{black}80\% of that for the NCR based on Liaoning only}). This indicates that the proposed Trans-NCR method, which leverages both transfer learning and network structure, is highly effective in improving prediction accuracy. In addition, NCR, which incorporates both network structure and covariates, outperforms Lasso {\color{black}(91\% of the MSE for both target dataset and aggregation transfer learning)}, which only uses covariates, highlighting the importance of including network information in the predictive model. Trans-Lasso also benefits from transfer learning, but it performs worse than {\color{black}any of the} Trans-NCR, underscoring the added value of incorporating network information alongside the transferred covariates. Overall, the results confirm that combining transfer learning with network structure, especially using data from multiple sources, significantly enhances model performance for the Liaoning dataset.


\section{Concluding Remarks}

{\color{black}In this paper, we proposed a novel high-dimensional network convolutional regression  model and established the theoretical properties for parameter estimation. Additionally, we developed a new transfer learning framework tailored for this model based on network data. And  rigorous theoretical guarantees for the proposed transfer learning estimators are established. To validate the effectiveness of our methods, we conducted extensive simulations and real-world data analyses, demonstrating their performance in various scenarios.

To conclude, we discuss several promising directions for future research. First, while our theoretical analysis focuses on the ER model for its analytical tractability, extending these results to more realistic network generation mechanisms, such as Stochastic Block Models (SBM) or graphon models, would provide insights into how community structure and degree heterogeneity affect transfer learning. 
However, extending our results to frameworks like the SBM introduces additional challenges because edges are no longer i.i.d. The dependency in SBM edges requires advanced tools from random matrix theory to handle.
Second, the current framework aggregates information from first-order (immediate neighbor) connections. Expanding to higher-order convolutions, which capture dependencies across multi-hop neighborhoods, could better model complex network dynamics. However, this introduces computational and statistical challenges, including managing increased dependency and ensuring convergence, which warrant careful investigation.
Third, many real-world networks involve multiple layers or heterogeneous types of nodes and edges (e.g., multiplex networks in biology or multimodal networks in social sciences). Adapting the NCR model to multilayer or heterogeneous networks would enable more comprehensive analyses, particularly in interdisciplinary applications where interactions occur across multiple domains.

}

\label{sec:conc}


\bibliographystyle{apalike}
\bibliography{Mybib}

\newpage

\appendix

\begin{center}
  {\Large\bf Supplementary Material of ``Transfer Learning Under High-Dimensional  Network Convolutional Regression Model''}  
\end{center}

\linespread{1.3}

\setcounter{page}{1}
\setcounter{section}{0}
\setcounter{table}{0}
\setcounter{subsection}{0}
\renewcommand{\theequation}{\arabic{equation}}
\renewcommand{\thetheorem}{S.\arabic{theorem}}
\renewcommand{\thelemma}{S.\arabic{lemma}}
\renewcommand{\thesection}{S.\arabic{section}}
\setcounter{figure}{0}
\renewcommand{\thefigure}{S.\arabic{figure}}
\renewcommand{\thetable}{S.\arabic{table}}
\renewcommand{\thealgorithm}{S.\arabic{algorithm}}

\section{Notation}
\label{sec:notation_table}
We present the detailed expressions of notations frequently referenced in the proposed model and algorithm in Table \ref{tab:notation}.

\renewcommand{\arraystretch}{0.8}
\begin{table}[!htb]
\caption{Notations}\label{tab:notation}
\centering
\begin{tabular}{ll}
\hline \hline
Notations  & Description                        \\
\hline
$n_0$ & number of nodes in target network    \\
$A_0 \in \{0,1\}^{n_0 \times n_0}$  & adjacency matrix of target task without self-loops  \\
$A^*_0$  & normalized adjacency matrix of target task \\
$p_0$ & parameter of Erd\H{o}s--R\'enyi (ER) random graph model for target network \\
$ d $ & feature dimension \\
$\bX_0 \in \mathbb{R}^{n_0 \times d}$      & covariate matrix of target task \\
$\bY_0 \in \mathbb{R}^{n_0}$  & response vector of target task \\
$n_k$ & number of nodes in $k$-th source network \\
$A_k \in \{0,1\}^{n_k \times n_k}$  & adjacency matrix of $k$-th source task \\
$A^*_k$  & normalized adjacency matrix of $k$-th source task \\
$p_k$ & parameter of Erd\H{o}s--R\'enyi (ER) random graph model for $k$-th source network \\
$\bX_k \in \mathbb{R}^{n_k \times d}$ &  covariate matrix of $k$-th source task \\
$\bY_k \in \mathbb{R}^{n_k}$ & response vector of $k$-th source task\\
$\beta_{00} \in \mathbb{R}^{d}$ & true network effect of target task\\
$\beta_{01} \in \mathbb{R}^{d}$ & true individual effect of target task\\
$\gamma_0 \in \mathbb{R}^{2d}$ & true regression coefficient of target task\\
$\beta_{k0} \in \mathbb{R}^{d}$ & true network effect of $k$-th source task\\
$\beta_{k1} \in \mathbb{R}^{d}$ & true individual effect of $k$-th source task\\
$\gamma_k \in \mathbb{R}^{2d}$ & true regression coefficient of $k$-th source task\\
$\delta_k$  &  difference between the  $k$-th source and the target task's coefficient       \\
$\mathcal{A}$  & the set of $h-$level transferable source data\\
$n$  & total number of nodes in target network
and source sample  \\
$\gamma^\mathcal{A}$  & true underlying coefficient related to the pooled dataset  \\
$\hat{\gamma}_0$  &  estimate for $\gamma_{0}$ obtained using our algorithms \\
\hline \hline
\end{tabular}
\end{table}

\renewcommand{\arraystretch}{1.0}
\section{Auxiliary lemmas}

In this subsection, we present four lemmas to support the proof of the theorems. To characterize the deviation of \(\hat{p}\) from \(p\), we utilize Chernoff's inequality, stated in Lemma \ref{lemma:chernoff inequality}, which is derived from Exercise 2.3.5 in \cite{vershynin2018high}. Next, we introduce the Hanson-Wright inequality in Lemma \ref{lemma:hwinequality} and \ref{lemma:hwinequalityforXY}, as provided in Theorem 6.2.1 of \cite{vershynin2018high}. 
In Lemma \ref{lemma:ubforA}, we establish upper bounds on the norms of \(A^*\) and \({A^*}^\top A^*\) to support subsequent proofs, and in Lemma \ref{lemma: expectationZTZ}, we provide an identity related to the covariance matrix of the dependent random matrix $\bZ$. 

\begin{lemma}[Chernoff's inequality for small deviations]
    \label{lemma:chernoff inequality}
	Let $X_i$ be independent Bernoulli random variables with parameters $p_i$. Consider their sum $S_N=\sum_{i=1}^N X_i$ and denote its mean by $\mu = \mathbb{E}S_N$. Then for $\delta \in \left(0, 1 \right] $ we have 
    $$
    \bbP(\left|S_N-\mu\right| \geqslant \delta \mu) \leqslant 2\exp\left\lbrace -c\mu\delta^2\right\rbrace \,.
    $$
\end{lemma}

\begin{lemma}[Hanson-Wright inequality]
	\label{lemma:hwinequality}
    Let $X=\left(X_1, \ldots, X_n\right) \in \mathbb{R}^n$ be a random vector with independent, mean zero, sub-gaussian coordinates. Let $B$ be an $n\times n$ matrix. Then, for every $t \geqslant 0$, we have
    $$
    \mathbb{P}\left\{\left|X^{\top} B X-\mathbb{E} X^{\top} B X\right| \geq t\right\} \leq 2 \exp \left[-c \min \left(\frac{t^2}{K^4\|B\|_F^2}, \frac{t}{K^2\|B\|}\right)\right], 
    $$
    where  $K=\max _i\left\|X_i\right\|_{\psi_2}$.
\end{lemma}

The following is a straightforward corollary of Hanson-Wright inequality for quadratic forms of two independent random vectors.
\begin{lemma}[Hanson-Wright inequality for two independent random vector]
	\label{lemma:hwinequalityforXY}
    Let $X=\left(X_1, \ldots, X_{n_1}\right) \in \mathbb{R}^{n_1}$ and $Y=\left(Y_1, \ldots, Y_{n_2}\right) \in \mathbb{R}^{n_2}$ be two independent random vectors with independent, mean zero, sub-gaussian coordinates. Let $B$ be an $n_1\times n_2$ matrix. Then, for every $t \geqslant 0$, we have
    $$
    \mathbb{P}\left\{\left|X^{\top} B Y\right| \geq t\right\} \leq 2 \exp \left[-c \min \left(\frac{t^2}{K^4\|B\|_F^2}, \frac{t}{K^2\|B\|}\right)\right], 
    $$
    where  $K=\max (\max _i\left\|X_i\right\|_{\psi_2},\max_j\left\|Y_j\right\|_{\psi_2})$.
\end{lemma}



\begin{lemma}[Upper bound for $A^*$ and ${A^*}^\top A^*$]
	\label{lemma:ubforA}
	(\romannumeral 1) For the $\ell_2$ norm and Frobenius norm of $A^*$, we have:
	\begin{equation*}
		\label{eq:A_op_bound}
		\Pr(\|A^*\|_2 \leqslant 2 \sqrt{np})=1 - \exp ( \log{n} - np/c),
	\end{equation*}
	\begin{equation*}
		\label{eq:A_frob_bound}
			\Pr(\| A^* \|_F^2 \leqslant 2n)=1 - 2( n^2 p)^{-1}.
	\end{equation*} 
	For the operator norm and Frobenius norm of ${A^*}^\top A^*$, we have
	\begin{equation*}
		\label{eq:ATA_op_bound}
		\Pr(\|{A^*}^\top A^*\|_2 \leqslant 4 np)=1 - \exp ( \log{n} - np/c),
	\end{equation*}
	\begin{equation*}
		\label{eq:ATA_frob_bound}
		\Pr(\|{A^*}^\top A^*\|_F^2 \leqslant 2n + 2n^2p^2)=1 - n^{-1}.
	\end{equation*} 
\end{lemma}

\begin{lemma}[Expectation of $\bZ^\top \bZ$]
	\label{lemma: expectationZTZ}
    $$
	\frac{1}{n}\bbE\left( \bZ^\top \bZ\right)  = \begin{pmatrix}
    \Sigma_X & \b0 \\
    \b0 & \Sigma_X 
    \end{pmatrix} = \begin{pmatrix}
        1 &  0 \\
        0 & 1
    \end{pmatrix} \otimes \Sigma_X \triangleq \Sigma_Z \,.
$$
\end{lemma}

\section{Proof of Theorems}

\subsection{Proof of Theorem \ref{thm: general model upper bound without transfer learning}}

To prove Theorem \ref{thm: general model upper bound without transfer learning}, we divide the proof into two steps. In the first step, we utilize the fact that \(\hat{\gamma}\) minimizes the objective function, obtaining an upper bound on the deviation of \(\hat{\gamma}\) from its true value \(\gamma\). However, this upper bound is related to \(\lambda\). Therefore, in the second step, we will provide the relationship between \(\lambda\), the sample size \(n\), and the network parameters \(p\), thereby completing the proof.

{\bf Step 1:} Recall that, we model the response variable as: $\by = A^* \bX \beta_0 + \bX \beta_1 + \epsilon = \bZ \gamma + \epsilon$. Furthermore, as we do not know $p$, we replace it by $\hat p$, i.e., we replace $\bZ$ by $\hat \bZ$. Therefore, our estimator $\hat \gamma$ is defined as: 
$$
\hat \gamma = \argmin_{\gamma} \left\{\frac{1}{2n}\|\by - \hat \bZ \gamma  \|_{2}^{2} + \lambda \|\gamma \|_{1}\right\}
$$
Denote $\hat{\mu} = \hat{\gamma} - \gamma$. Considering that $\hat{\gamma}$ is the minimizer of the objective function, we have the following inequalities:
\begin{equation}
	\label{eq:minimizer of obj func without transfer}
	\frac{1}{2n}\left\| \by - \hat{\bZ} \hat{\gamma}  \right\|_{2}^{2} + \lambda \left\| \hat{\gamma} \right\|_{1}
	\leqslant \frac{1}{2n}\left\|  \by - \hat{\bZ} \gamma \right\|_{2}^{2} + \lambda \left\| \gamma \right\|_{1}.
\end{equation}
Let $S$ denote the support set of $\gamma$ with cardinality $\left| S\right| = s_1 + s_2$. Because of $\by = \bZ \gamma + \epsilon$ and $\hat{\gamma} = \hat{\mu} + \gamma$, formula \eqref{eq:minimizer of obj func without transfer} can be simplifies to 
\begin{equation}
	\label{eq:obj func without transfer 1}
\begin{aligned}
\frac{1}{2n} \left\| \hat{\bZ} \hat{\mu} \right\|_{2}^{2} &\leqslant \frac{1}{n} \left\langle \epsilon + \left(\bZ-\hat{\bZ}\right) \gamma , \hat{\bZ} \hat{\mu} \right\rangle + \lambda \left\| \gamma \right\|_{1} - \lambda \left\| \hat{\gamma} \right\|_{1} \\
& = \frac{1}{n} \left\langle \hat{\bZ}^{\top} \epsilon + \hat{\bZ}^{\top} \left(\bZ-\hat{\bZ}\right) \gamma, \hat{\mu} \right\rangle + \lambda \left\| \gamma \right\|_{1} - \lambda \left\| \hat{\gamma} \right\|_{1},
\end{aligned}
\end{equation}
where the second equation is due to the duality property of the inner product.
Let the penalty parameter $\lambda$ satisfies $n^{-1}\|\hat \bZ^{\top} \epsilon + \hat{\bZ}^{\top} (\bZ-\hat{\bZ}) \gamma \|_{\infty} \leqslant \lambda/2$ with high probability. 
In this case, using H{\"o}lder inequality, the right-hand side of formula \eqref{eq:obj func without transfer 1} will take the form of ${\lambda} \| \hat{\mu}\|_1 / 2  + \lambda \| \gamma \|_{1} - \lambda \|\hat{\gamma} \|_{1}$. Since $\gamma_{S^{c}}=0$, we have $\|\gamma\|_{1}=\|\gamma_{S}\|_{1}$, $\hat{\mu} + \gamma = \hat{\gamma}$ and $\| \hat{\gamma} \|_{1} = \|\gamma + \widehat{\mu}\|_{1} = \| \gamma_{S}+\widehat{\mu}_{S} \|_{1} + \|\widehat{\mu}_{S^{c}}\|_{1} \geqslant \|\gamma_{S}\|_{1} -\|\widehat{\mu}_{S}\|_{1} + \|\widehat{\mu}_{S^{c}}\|_{1}$.
Substituting these relations into formula \eqref{eq:obj func without transfer 1} yields $(2n)^{-1} \| \hat{\bZ} \hat{\mu} \|_{2}^{2} \leqslant {3\lambda}\|\widehat{\mu}_{S}\|_{1} / 2 - {\lambda} \|\hat{\mu}_{S^{c}}\|_{1} / 2$. Using the fact $\|\widehat{\mu}_{S}\|_{1} \leqslant \sqrt{s_1 + s_0}\|\widehat{\mu}\|_{2}$ concluded from Cauchy inequality, we have 
\begin{equation}
	\label{eq: upper bound without trans befor RSC}
\frac{1}{2n} \left\| \hat{\bZ} \hat{\mu} \right\|_{2}^{2} \leqslant 3 \sqrt{s_0 + s_1} \lambda \left\|\hat{\mu} \right\|_{2} / 2.
\end{equation}
From the cone constraint \(0 \leqslant \frac{3}{2} \|\hat{\mu}_{S}\|_{1} - \frac{1}{2} \|\hat{\mu}_{S^{c}}\|_{1}\), 
we have $\|\hat{\mu}_{S^{c}}\|_{1} \leqslant 3\|\hat{\mu}_{S}\|_{1}$ and $\|\hat{\mu}\|_{1} = \|\hat{\mu}_{S}\|_{1} + \|\hat{\mu}_{S^{c}}\|_{1} \leqslant 4\|\hat{\mu}_{S}\|_{1} \leqslant 4 \sqrt{s}\|\hat{\mu}\|_{2}$. From Theorem \ref{thm:RSC}, we have
$$\kappa \|\hat{\mu}\|_2^2 - C_1 \log{d} \|\hat{\mu}\|_1 \|\hat{\mu}\|_2  \sqrt{\frac{\Psi(p)}{n}} - C_2 \frac{\sqrt{\log d}}{n} \|\hat{\mu}\|_1^2  \lesssim \sqrt{s_0 + s_1} \lambda_{\gamma} \|\hat{\mu}\|_2 \, .
$$
Denote $s = s_0+ s_1$.  Using the fact $\|\hat{\mu}\|_1 \le 4\sqrt{s}\|\hat{\mu}\|_{2}$
we have: 
\begin{align*}
& \kappa \|\hat{\mu}\|_2^2 - C_1 \log{d} \|\hat{\mu}\|_1 \|\hat{\mu}\|_2 \sqrt{\frac{\Psi(p)}{n}} - C_2 \frac{\sqrt{\log{d}}}{n}\|\hat{\mu}\|_1^2 \\
& \ge \left(\kappa - C_3 \log{d} \sqrt{\frac{s\Psi(p)}{n}} - C_4 \frac{s\sqrt{\log{d}}}{n}\right)\|\hat \mu\|_2^2 \\
& \ge \frac{\kappa}{2}\|\hat \mu\|_2^2 \,,
\end{align*}
where we use the assumption (C3) $s\log^2(d)\Psi(p)/n \leqslant s\log^2(d) / np = o(1)$. 
Combining this lower bound with formula \eqref{eq: upper bound without trans befor RSC} yields 
$\kappa \|\widehat{\mu}\|_{2}^{2}/2 \leqslant 3  \lambda \sqrt{s_0 + s_1} \|\widehat{\mu}\|_{2}$, so we have
\begin{equation}
	\label{eq:non-transfer result with lambda}
	\left\|\hat{\gamma} - \gamma \right\|_{2}^2 = \left\|\widehat{\mu}\right\|_{2}^2 \lesssim (s_1 + s_0) \lambda^2.
\end{equation}

{\bf Step 2: }In Step 1, we choose the penalty parameter $\lambda$ such that $n^{-1}\|\hat \bZ^{\top} \epsilon + \hat{\bZ}^{\top} (\bZ-\hat{\bZ}) \gamma \|_{\infty} \leqslant \lambda/2$ with high probability. To obtain the order of this $\lambda$, we need to find an upper bound on $n^{-1}\|\hat \bZ^{\top} \epsilon + \hat{\bZ}^{\top} (\bZ-\hat{\bZ}) \gamma \|_{\infty}$. Application of a traingle inequality along with observation $\hat \bZ - \bZ = [(\hat A - A^*)\bX \quad \mathbf{0}]$ yields:  
\begin{align}
\label{eq:lambda_upper_bound}
    & n^{-1}\|\hat \bZ^{\top} \epsilon + \hat{\bZ}^{\top} (\bZ-\hat{\bZ}) \gamma \|_{\infty} \notag \\
    \leqslant & n^{-1}\left\{\|\hat \bZ^{\top} \epsilon\|_\infty + \| \hat{\bZ}^{\top} (\bZ-\hat{\bZ}) \gamma\|_\infty\right\} \notag \\
    \leqslant & n^{-1}\left\{\|\hat A^\top \bX^\top \eps\|_\infty + \|\bX^\top \eps\|_\infty + \|\bX^\top \hat A^\top(A^* - \hat A)\bX\beta_0\|_\infty + \|\bX^\top (A^* - \hat A)\bX\beta_0\|_\infty\right\} \,. 
\end{align}
We now show that all of these terms can be upper bounded by $C\sqrt{\log{d}/n}$ with high probability for some constant $C > 0$, which will conclude that one may take $\lambda = 2C\sqrt{\log{d}/n}$. 
We first present some bounds on $|\hat{p}-p|$ and norms of $\hat{A}$. 

\textbf{Step 2.1: Bound for $|\hat{p} - p|$ and $\hat{A}$.}
Recall that in the definition of $\hat A$, we scale $A_{ij}$ by $\sqrt{(n-1)\hat p}$ where $\hat p = D/n(n-1)$, $D = \sum_{i \neq j} A_{ij}$. It is immediate that $\bbE[\hat p] = p$. A simple application of Chernoff's inequality (Lemma \ref{lemma:chernoff inequality}) yields: 
$$
\bbP ( |\hat{p} - p| \geqslant \delta p )  = \bbP \{|D - n(n-1) p| \geqslant \delta n(n-1) p \}  \leqslant \exp \{-cn(n-1)p\delta^2\} \,.
$$
Taking $\delta = n^{-1/2}$, we conclude that with probability greater than $1 - \exp{(-c(n-1)p)}$ we have $|\hat p - p| \le p/\sqrt{n}$. Using this, we obtain the following: 
\begin{align*}
    \left|\hat A_{ij} - A^*_{ij}\right| \le \frac{|A_{ij}|}{\sqrt{(n-1)}}\left|\frac{1}{\sqrt{\hat p}} - \frac{1}{\sqrt{p}}\right| & \le \frac{|A_{ij}|}{\sqrt{(n-1)}} \frac{|p - \hat p|}{\sqrt{p\hat p}(\sqrt{p} + \sqrt{\hat p})} \\
    & \le \frac{|A_{ij}|}{\sqrt{n-1}}\frac{\sqrt{2}|\hat p - p|}{p^{3/2}} \le \frac{\sqrt{2}|A_{ij}|}{\sqrt{n(n-1)p}} = \frac{\sqrt{2}|A^*_{ij}|}{\sqrt{n}}\,.
\end{align*}
We now use the above inequality and Lemma \ref{lemma:ubforA} to bound the operator and Frobenious norm of $\hat A - A$. In particular, we have  
\begin{equation}
	\label{eq:2 and F norm for hat A - A}
	\begin{aligned}
		&\left\| \hat{A} - A^* \right\|_2 \leqslant \left\| A^*\right\|_2/\sqrt{n} \lesssim \sqrt{p} , \\
		&\left\|\hat{A} - A^*\right\|_F^2 \leqslant \left\| A^*\right\|_F^2/n \lesssim 1,
	\end{aligned}
\end{equation}
and 
\begin{equation}
	\label{eq:2 and F norm for hat A}
	\begin{aligned}
		&\left\|\hat{A}\right\|_2 \leqslant \left\| A^*\right\|_2 + \left\| \hat{A} - A^*\right\|_2 \leqslant 2 \left\| A^*\right\|_2 \lesssim \sqrt{np} , \\
		&\left\|\hat{A}\right\|_F^2 \leqslant 2\left\| A^*\right\|_F^2 + 2\left\| \hat{A} - A^*\right\|_F^2 \leqslant 3 \left\| A^*\right\|_F^2 \lesssim n,
	\end{aligned}
\end{equation}
with high probability.

\textbf{Step 2.2: Bounding first two terms of Equation \eqref{eq:lambda_upper_bound}}:  
We start with bounding $n^{-1}\|\bX^\top \hat A^\top \eps\|_\infty$, for which we use Hanson-Wright inequality (Lemma \ref{lemma:hwinequalityforXY}). Note that the $j^{th}$ element of $\bX^\top \hat A^\top \eps$ is $\bX_{*j}^\top \hat A^\top \eps$, we $\bX_{*j}$ is the $j^{th}$ column of $\bX$. 
Since $\bX_{*, j}$ and $\epsilon$ are both independent sub-gaussian random variables, we can use the Lemma \ref{lemma:hwinequalityforXY} and formula \eqref{eq:2 and F norm for hat A} to obtain:  
\begin{align*}
    \bbP\left(n^{-1} |{\bX_{*j}}^{T} \hat{A}^{T} \epsilon| \geqslant t \mid \hat{A}\right)  & \leqslant 2 \exp \{-c \min ( n^2 t^2/\| \hat{A}\|_{\mathrm{F}}^2, nt/\|{\hat{A}}\|_{2}) \} \\
    & \le 2 \exp \{-c\min(n t^2, t\sqrt{n/p})\}
\end{align*}
with high probability. 
This implies, via a union bound: 
$$
 \bbP\left(n^{-1} \|\bX^{T} \hat{A}^{T} \epsilon \|_{\infty} \geqslant t \mid \hat{A}\right) \le \exp \{\log d - c\min(n t^2, t\sqrt{n/ p})\} \,.
$$
Choosing $t = c \max\{\sqrt{\log{d}/n}, \log d \sqrt{p/n}\} = c \sqrt{\log{d}/n}$, with assumption (C3) $p \log d \to 0$, we conclude $n^{-1} \| (\hat{A}\bX )^{\top}  \epsilon \|_{\infty} \lesssim \sqrt{\log d/n}$, with high probability converge to 1. 
Using a similar application of Lemma \ref{lemma:hwinequalityforXY} establishes that $n^{-1} \|  \bX^{\top}  \epsilon \|_{\infty} \lesssim \sqrt{\log d/n}$. Therefore, we can bound $ n^{-1} \|[(\hat{A}\bX )^{\top}, \bX^{\top}]^{\top} \epsilon \|_{\infty} \lesssim \sqrt{\log d/n}$.

\textbf{Step 2.3: Bounding last two terms of Equation \eqref{eq:lambda_upper_bound}: } 
For notational convenience, define: 
\begin{align*}
    M_1 & = n^{-1}\|\bX^\top (A^* - \hat A)\bX\|_{\infty, \infty}\,,\\
    M_2 & = n^{-1}\|\bX^\top \hat A^\top(A^* - \hat A)\bX\|_{\infty, \infty}  \,.
\end{align*}
An application of Lemma \ref{lemma:hwinequalityforXY} along with Equation \eqref{eq:2 and F norm for hat A} yields: 
\begin{align}
	\label{eq: M1 upperbound}
\bbP(M_1 \geqslant t) & = \bbP\left(n^{-1} \max_{i, j} |\bX_{*, i}^\top (A^* - \hat A)\bX_{*, j}| \ge t\right) \notag \\
& \leqslant \sum_{i, j} \bbP\left(n^{-1}|\bX_{*, i}^\top (A^* - \hat A)\bX_{*, j}| \ge t\right) \notag \\
& \leqslant \exp \left\{2\log d - \min(n^2 t^2, nt/\sqrt{p} )\right\} \,.
\end{align}
Taking $ t = c\sqrt{\log d} / n $, we have $\bbP(M_1 \geqslant c\sqrt{\log d} / n) \leqslant 1/d $ with assumption (C3) $p \log d \to 0$. Under assumption (C4) $\| \beta_0\|_1 \leqslant \sqrt{n/(1+np^2)}$, we have $M_1 \| \beta_0 \|_1 \leqslant \sqrt{\log d / n}$ with probability $\ge 1 - d^{-1}$.
For $M_2$, we use Equation \eqref{eq:2 and F norm for hat A - A} to obtain the following bounds: 
\begin{equation}
	\label{eq:2 and F norm for hat A * A}
	\begin{aligned}
		& \left| \hat{A}^{\top} A^* - \hat{A}^{\top} \hat{A}\right|  \leqslant  \frac{1}{(n-1) p^2}\left| {p}-{\hat{p}}\right|  A^{\top} A \leqslant  \frac{1}{\sqrt{n}} {A^*}^{\top} A^* ,\\
		&\left\| \hat{A}^{\top} A^* - \hat{A}^{\top} \hat{A}\right\|_2  \leqslant \frac{1}{\sqrt{n}} np = \frac{p}{\sqrt{n}} , \\
		&\left\| \hat{A}^{\top} A^* - \hat{A}^{\top} \hat{A}\right\|_F^2  \leqslant  \frac{1}{n} (n + n^2 p^2) = 1 + np^2 , \\
	\end{aligned}
\end{equation}
Using these bounds, along with applying Lemma \ref{lemma:hwinequalityforXY}, we conclude that for any $t > 0$: 
\begin{equation}
	\label{eq:M2 upper bound}
\Pr\left( M_2 \geqslant t\right)  \leqslant \exp\left\lbrace 2\log d - \min\left(\frac{n^2 t^2 }{1 + np^2} , \frac{n^{3/2}t}{p} \right) \right\rbrace.
\end{equation}
Taking $t = c\sqrt{(1+np^2) \log d}/n$, we obtain $\bbP(M_2 \geqslant c\sqrt{(1+np^2) \log d}/n ) \leqslant 1/d $ as $n^2 \gg \log{d}$ (Assumption (C3)). 
Under Assumption (C4), $\| \beta_0\|_1 \leqslant \sqrt{n/(1+np^2)}$, which implies
$M_2 \| \beta_0 \|_1 \leqslant \sqrt{\log d / n}$ with probability $\ge 1 - d^{-1}$.

Combining the results from Step 2.2 and Step 2.3, we establish that it is possible to choose $\lambda$ such that $\lambda \leqslant C\sqrt{\log d/n}$ for some constant $C > 0$. Substituting this into formula \eqref{eq:non-transfer result with lambda}, we conclude that for some suitably chosen constant $K > 0$, 
$$
\bbP\left(\|\widehat{\gamma}-\gamma\|^2_{2} \ge K(s_0 + s_1)\frac{\log{d}}{n}\right)  \le \frac{1}{d} + \frac{1}{n} + e^{\left(\log{n} -\frac{np}{c}\right)} \,.
$$
The right-hand side goes to 0 as $(n \wedge d) \uparrow \infty$ along with the assumption (C3) $np \gg \log{n}$. This completes the proof.

\subsection{Proof of Theorem \ref{thm: general model upper bound with transfer learning}}

We will prove Theorem \ref{thm: general model upper bound with transfer learning} in three steps. In the first step, we consider the estimation error of the Transferring Step in Algorithm \ref{al:trans_alg}, obtaining an upper bound that includes $\lambda_{\gamma}$. In the second step, we will present the relationship between $\lambda_{\gamma}$, $n_k$, and $p_k$, and then substitute the conclusions into the results obtained in the first step. In the third step, we consider the estimation error of the Debiasing Step in Algorithm \ref{al:trans_alg}.
\\
\noindent
{\bf Step 1:} Recall the definition of $\hat \gamma_A$: 
$$
\hat \gamma_A = \argmin_{\gamma} \left\{\frac{1}{2n} \sum_{k} \|( \by_k - \hat{\bZ}_k \gamma) \|_{2}^{2} + \lambda_{\gamma}\|{\gamma}\|_{1}\right\}
$$
As elaborated in Section 3.2, $\hat \gamma_A$ approximates $\gamma_A$ (defined in Equation \eqref{eq:transferstepobjectfunction}). 
Set $\hat{u} = \hat{\gamma}^\mathcal{A} - \gamma^\mathcal{A}$.
Following the same approach as in formula \eqref{eq:minimizer of obj func without transfer} and \eqref{eq:obj func without transfer 1}, we can derive the following inequality: 
$$
\begin{aligned}
	\frac{1}{2n} \sum_{k} \left\|\hat{\bZ}_k \hat{u}\right\|_2^2  & \leqslant \lambda_{\gamma} \left\|\gamma^\mathcal{A} \right\|_{1}-\lambda_{\gamma}\left\|\hat{\gamma}^\mathcal{A}\right\|_{1}+\frac{1}{n}\left|\hat{u}^{\top} \sum_{k} \hat{\bZ}_k^{\top}\left(y_k - \hat{\bZ}_k \gamma^\mathcal{A} \right)\right| \\
	& \leqslant \lambda_{\gamma}\left\| \gamma^\mathcal{A} \right\|_{1}-\lambda_{\gamma}\left\|\hat{\gamma}^\mathcal{A}\right\|_{1}+\frac{\lambda_{\gamma}}{2}\left\|\hat{u}\right\|_{1}\\
	& = \lambda_{\gamma}\left\|\gamma^\mathcal{A}_S \right\|_{1} + \lambda_{\gamma}\left\|\gamma^\mathcal{A}_{S^c} \right\|_{1} - \lambda_{\gamma}\left\|\hat{\gamma}_S^\mathcal{A}\right\|_{1} - \lambda_{\gamma}\left\|\hat{\gamma}_{S^c}^\mathcal{A}\right\|_{1} + \frac{\lambda_{\gamma}}{2}\left\|\hat{u}_S\right\|_{1} + \frac{\lambda_{\gamma}}{2}\left\|\hat{u}_{S^c}\right\|_{1},
\end{aligned}
$$
where the second inequality is obtained by choosing $\lambda_{\gamma} \geqslant 2 n^{-1} \| \sum_{k} \hat{\bZ}_k^{\top} (\by_k - \hat{\bZ}_k \gamma^\mathcal{A}) \|_{\infty}$ and applying Hölder's inequality, while the third equality is derived from the properties of the \(\ell_1\) norm of vectors. 
The subscript $S$ denotes the support set of sparse vector $\gamma_0$, and $S^c$ is the complement set of $S$. It is worth noting that since the $\gamma_k$ of each source task is different $\gamma_0$, $S$ is \emph{not} the supporting set of $\gamma^\cA$, so $\gamma_{S^c}^\mathcal{A} \neq 0$. By reorganizing the terms on the right-hand side of the inequality, we obtain:  
\allowdisplaybreaks
\begin{align}
\label{eq:transfer_bound_1}
	\frac{1}{2n} \sum_{k}  \left\|\hat{\bZ}_k \hat{u}\right\|_2^2 & \leqslant \lambda_{\gamma}\left\|\gamma_S^\mathcal{A} \right\|_{1} - \lambda_{\gamma}\left\| \hat{\gamma}_S^\mathcal{A} \right\|_{1} + \frac{\lambda_{\gamma}}{2}\left\|\hat{u}_S\right\|_{1} + \lambda_{\gamma}\left\|\gamma_{S^c}^\mathcal{A} \right\|_{1} - \lambda_{\gamma}\left\|\hat{\gamma}_{S^c}^\mathcal{A} \right\|_{1} + \frac{\lambda_{\gamma}}{2}\left\|\hat{u}_{S^c}\right\|_{1} \notag \\
	& \leqslant \frac{3}{2}\lambda_{\gamma}\left\|\hat{u}_S\right\|_{1} + \lambda_{\gamma}\left\|\gamma_{S^c}^\mathcal{A} \right\|_{1} - \lambda_{\gamma}\left\|\hat{\gamma}_{S^c}^\mathcal{A} \right\|_{1} + \frac{\lambda_{\gamma}}{2}\left\|\hat{u}_{S^c}\right\|_{1} \notag \\
	& \leqslant \frac{3}{2}\lambda_{\gamma}\left\|\hat{u}_S\right\|_{1} + 2 \lambda_{\gamma}\left\|\gamma_{S^c}^\mathcal{A} \right\|_{1} - \frac{1}{2}\lambda_{\gamma}\left\|\hat{u}_{S^c}\right\|_{1} \,.
\end{align}
Here, the second inequality follows from a triangle inequality $\|\gamma^\mathcal{A}_S \|_{1} - \|\hat{\gamma}_S^\mathcal{A} \|_{1} \leqslant \|\hat{u}_S\|_{1}$, and the third inequality is derived from $\|\hat{\gamma}_{S^c}^\mathcal{A} \|_{1} \geqslant \|\hat{u}_{S^c}\|_{1} - \|\gamma^\mathcal{A}_{S^c}\|_{1}$. 
We now divide the proof into two parts, depending on whether $3\|\hat{u}_S\|_{1}/2 \geqslant 2\|\gamma_{S^c}^\mathcal{A} \|_{1}$ or not. 
\\\\
\noindent
{\bf Situation 1:} Consider the case when $3\|\hat{u}_S\|_{1}/2 \geqslant 2\|\gamma_{S^c}^\mathcal{A} \|_{1}$. From Equation \eqref{eq:transfer_bound_1}, we have: 
$$
\frac{1}{2n} \sum_{k}  \| \hat{\bZ}_k \hat{u}\|_2^2 \leqslant 3 \lambda_{\gamma}\|\hat{u}_S\|_{1} - \lambda_{\gamma}\|\hat{u}_{S^c}\|_{1}/2 \leqslant 3 \lambda_{\gamma} \sqrt{s} \|\hat{u} \|_2
$$
and the first inequality concludes $\lambda_{\gamma}\|\hat{u}_{S^c}\|_{1} \leqslant 6\|\hat{u}_S\|_{1}$, which implies $\|\hat{u}\|_{1} = \|\hat{u}_S\|_{1} + \|\hat{u}_{S^c}\|_{1} \leqslant 7 \|\hat{u}_S\|_{1} \lesssim  \sqrt{s} \|\hat{u}_S\|_{2}$. 
Using RSC condition for $\hat{\bZ}_k$, we arrive 
$$
\kappa \|\hat{u}\|_2^2 - C_1 \log{d} \|\hat{u}\|_1 \|\hat{u}\|_2  \frac{\sum_{k} \sqrt{\Psi(p_k) n_k}}{n} - C_2 \frac{\sqrt{\log{d}}}{n} \|\hat{u}\|_1^2 \lesssim \sqrt{s} \lambda_{\gamma} \|\hat{u}\|_2 \,.
$$
Under assumptions (C3) $\sqrt{\log d} s / n = o(1)$ and $\log{d} \sqrt{s} \sum_{k} \sqrt{\Psi(p_k) n_k}/n = o(1)$ which concluded by assumption (C3), we conclude $\|\hat{u}\|_2^2 \lesssim s \lambda^2_{\gamma}$.
\\\\
\noindent
{\bf Situation 2:} Now, consider the case when $3\|\hat{u}_S\|_{1}/2 < 2\|\gamma_{S^c}^\mathcal{A} \|_{1}$. In that case, we have from Equation \eqref{eq:transfer_bound_1}: 
\begin{equation}
\label{eq:transfer_bound_2}
0 \leqslant \frac{1}{2n} \sum_{k}  \| \hat{\bZ}_k \hat{u}\|_2^2 \le 4 \lambda_{\gamma} \|\gamma_{S^c}^\mathcal{A} \|_{1}- \frac{\lambda_{\gamma}}{2}\|\hat{u}_{S^c}\|_{1} \,. 
\end{equation}
which immediately implies $\|\hat{u}_{S^c}\|_{1} \leqslant 8 \|\gamma_{S^c}^\mathcal{A} \|_{1}$. Note that we have already assumed $3\|\hat{u}_S\|_{1}/2 < 2\|\gamma_{S^c}^\mathcal{A} \|_{1}$, which is equivalent to $\|\hat{u}_S\|_{1} \leqslant (4/3)\|\gamma_{S^c}^\mathcal{A} \|_{1}$. Therefore, we can conclude: 
$$
\|\hat u\|_1 = \|\hat{u}_S\|_{1} + \|\hat{u}_{S^c}\|_{1} \leqslant 9.34 \|\gamma_{S^c}^\mathcal{A} \|_{1} = 9.34 \|\delta_{S^c}^\mathcal{A} \|_{1} \leqslant 9.34 h\,.
$$
In the above display, the second last equality follows from the fact that  $\gamma^\mathcal{A} = \gamma_0 + \delta^\mathcal{A}$ and ${\gamma_0}_{S^c} = 0$, and the last inequality follows from the fact $\|\delta_{S^c}^\mathcal{A} \|_{1} \leqslant h$. 
Therefore, a direct upper bound can be obtained by $\|\hat{u} \|_2 \leqslant \|\hat{u} \|_1 \leqslant 9.34 h$. Furthermore, as $\|\hat u\|_2 \leqslant \|\hat u\|_1$, we have $\|\hat u\|_2^2 \leqslant \|\hat u\|_1^2 \leqslant 100 h^2$. 
\\
\indent
On the other hand, we apply Theorem \ref{thm:RSC} in Equation \eqref{eq:transfer_bound_2}, we obtain with high probability:   
\begin{align*}
\kappa\|\hat{u}\|_2^2 - C_1 \log d \|\hat{u}\|_1 \|\hat{u}\|_2 \frac{\sum_{k} \sqrt{\Psi(p_k) n_k}}{n} - C_2 \frac{\sqrt{\log d}}{n} \|\hat{u}\|_1^2 & \leqslant \frac{1}{2n}\sum_{k} \| \hat{\bZ}_k \hat{u}\|_2^2 \\
& \leqslant 4\lambda_{\gamma}\|\gamma_{S^c}^\mathcal{A} \|_{1} \leqslant 4  \lambda_{\gamma}h  \, ,
\end{align*}
From $\|\hat u\|_2 \leqslant \|\hat u\|_1 \lesssim h$, we have now, 
$$
\|\hat u\|_2^2 \lesssim \left(h^2 \frac{\log{d}}{n}  \sum_k \sqrt{\Psi(p_k) n_k} + h^2 \frac{\sqrt{\log d}}{n} + \lambda_{\gamma}h
\right) \wedge h^2 \,.
$$
As $\lambda_{\gamma} \leqslant \sqrt{\log d}/n$, and $h \sqrt{\log d \Psi(p_k)} = o(1)$ (which we assume it in theorem), we can demonstrate the first and second terms in the parentheses is negligible. Then we have: 
$$
\|\hat u\|_2^2 \lesssim \lambda_{\gamma}h \wedge h^2 \,.
$$
Therefore, whether it is situation 1 or situation 2, we can provide an upper bound
\begin{equation}
	\label{eq: upper gammaA with lambda}
	\left\|\hat{u} \right\|_2^2 \leqslant C\left[{s \lambda_{\gamma}^2} + \left(h^2 \wedge {\lambda_{\gamma} h} \right)\right].
\end{equation}

\noindent
{\bf Step 2:} Next, we provide an upper bound on $\lambda_\gamma$. Recall that in Step 1, we have already 
mentioned that we need to choose $\lambda_\gamma$ such that  $\lambda_{\gamma} \geqslant 2 n^{-1} \| \sum_{k} \hat{\bZ}_k^{\top} (\by_k - \hat{\bZ}_k \gamma^\mathcal{A}) \|_{\infty}$. 
As $\by_k = \bZ_k \gamma_k + \epsilon_k$, we have: 
\begin{align*}
\frac1n \| \sum_{k} \hat{\bZ}_k^{\top}(\bZ_k \gamma_k -\hat{\bZ}_k \gamma^\mathcal{A} + \epsilon_k ) \|_{\infty} & \leqslant  \frac1n \| \sum_{k} \hat{\bZ}_k^{\top} \epsilon_k \|_{\infty} +  \frac1n \| \sum_{k} \hat{\bZ}_k^{\top} \bZ_k ( \gamma_k - \gamma^\mathcal{A} ) \|_{\infty}  \\
& \qquad \qquad \qquad \qquad \qquad + \frac1n \| \sum_{k} \hat{\bZ}_k^{\top}(\bZ_k -\hat{\bZ}_k ) \gamma^\mathcal{A} \|_{\infty} \,.
\end{align*}
Therefore, we need to analyze these three terms.

\textbf{Step 2.1: Bound for the first term  $n^{-1} \| \sum_{k}\hat{\bZ}_k^{\top} \epsilon_k \|_{\infty}$.} 
By definition of $\hat \bZ$, this term can be expressed as $n^{-1} \| [ \sum_k \bX_k^{\top} \epsilon_k,   \sum_k (\hat{A}_k \bX_k )^{\top} \epsilon_k ]^{\top} \|_{\infty} $, hence it is the maximum of $n^{-1} \| \sum_k \bX_k^{\top}  \epsilon_k \|_{\infty}$ and $n^{-1} \| \sum_k (\hat{A}_k\bX_k )^{\top}  \epsilon_k \|_{\infty}$. 
Define a diagonal block matrix $D_{\hat{A}}$, where the $k$-th block on the diagonal is $\hat{A}_k$, and all other blocks are zero matrices. Similarly, define $D_{{A_k^*}^\top A_k^*}$, where the $k$-th block on the diagonal is ${A_k^*}^\top A_k^*$:
$$
\begin{array}{c}
	D_{\hat{A}} = \begin{pmatrix}
		\hat{A}_1 & 0 & \cdots & 0 \\
		0 & \hat{A}_2 & \cdots & 0 \\
		\vdots & \vdots & \ddots & \vdots \\
		0 & 0 & \cdots &\hat{A}_K \\
	\end{pmatrix}
	,\qquad
	D_{{A_k^*}^\top A_k^*} = \begin{pmatrix}
		{{A}_1^*}^{\top} {A}_1^* & 0 & \cdots & 0 \\
		0 & {{A}_2^*}^{\top} {{A}_2^*} & \cdots & 0 \\
		\vdots & \vdots & \ddots & \vdots \\
		0 & 0 & \cdots & {{A}_K^*}^{\top} {{A}_K^*} \\
	\end{pmatrix}.
\end{array}
$$
Similarly, define $\bX_{\operatorname{full}} \in \mathbb{R}^{n \times d}$ as the feature matrix for all individuals, and $\epsilon_{\operatorname{full}} \in \mathbb{R}^{n}$ as noise vector for all individuals. It is easy to observe that $ n^{-1} \| \sum_{k}( \hat{A}_k \bX_k )^{\top} \epsilon_k\|_{\infty} = n^{-1} \| \bX_{\operatorname{full}}^{\top} D_{\hat{A}}^{\top} \epsilon_{\operatorname{full}} \|_{\infty}$. 
Conditioning on $\cA = \{A_1, \dots, A_k\}$, we can upper bound the $j$-th element of $n^{-1} \bX_{\operatorname{full}}^{\top} D_{\hat{A}}^{\top} \epsilon_{\operatorname{full}}$ using Lemma \ref{lemma:hwinequalityforXY} as follows: 
$$
\begin{aligned}
	\Pr \left( \left|\frac1n X_{\operatorname{full},j}^{\top} D_{\hat{A}}^{\top} \epsilon \right| \geq t \mid \cA \right)  \leqslant 2\exp{\left(-c\min\left(\frac{n^2 t^2}{\|D_{\hat{A}}\|_F^2}, \frac{nt}{\| D_{\hat{A}}\|_{2}} \right)\right)} .
\end{aligned}
$$ From lemma \ref{lemma:ubforA}, we have $
\|\hat{A}_k \|_F^2 \leqslant 2 n_k$ with probability $1 - {n_k}^{-1}$ and $\|\hat{A}_k \|_2 \leqslant 2 \sqrt{n_k p_k}$ with probability $1 - \exp(\log{n_k} - n_k p_k/c)$. By applying simple matrix algebra, we can verify that the Frobenius norm and $\ell_2$ norm of $D_{\hat{A}}$ satisfying that $ \|D_{\hat{A}}\|_F^2 = \sum_k \|\hat{A}_k\|_F^2 \leqslant 2n$ with probability $1 - \sum_{k=1}^{K} {n_k}^{-1}$ and $\|D_{\hat{A}}\|_2 = \max_k\|\hat{A}_k \|_2 \leqslant  2 \max_k(\sqrt{n_k p_k})$ with probability $1 - \sum_{k=1}^{K} \exp(\log{n_k} - n_k p_k/c)$. Then $$
\begin{aligned}
	\Pr \left( \frac1n  \left\|\bX_{\operatorname{full}}^{\top} D_{\hat{A}}^{\top} \epsilon \right\|_{\infty} \geq t \mid \hat{A}_k \right) & \leqslant 2\exp{\left(\log d -c\min\left(\frac{n^2 t^2}{\|D_{\hat{A}}\|_F^2}, \frac{nt}{\| D_{\hat{A}}\|_{2}} \right)\right)}\\
	& \leqslant 2\exp{\left(\log d -c\min\left(\frac{n^2 t^2}{n}, \frac{nt}{\max_k\left\lbrace \sqrt{n_k p_k}\right\rbrace} \right)\right)}.
\end{aligned}
$$ Choosing $
t = c \sqrt{\log{d}/n}$, under assumption (C3) $p_k \log d = o(1)$, we conclude: 
\begin{align*}
& \bbP \left(n^{-1} \| \sum_k (\hat{A}_k \bX_k )^{\top}  \epsilon_k \|_{\infty} \lesssim \sqrt{{\log{d}}/{n}}\right) \\
& \qquad \qquad \geqslant 1- d^{-1} - \sum_{k=1}^{K} \exp(\log{n_k} - n_k p_k/c) -  \sum_{k=1}^{K} {n_k}^{-1} \,.
\end{align*}
For the second part $n^{-1} \| \sum_k \bX_k^{\top} \epsilon_k \|_{\infty}$, the same method can be used to derive the same conclusion. So $n^{-1}\|\sum_{k}\hat{\bZ}_k^{\top} \epsilon_k \|_{\infty} \lesssim \sqrt{{\log{d}}/{n}}$, with high probability  converge to 1.

\textbf{Step 2.2: Bound for the second term $\|\sum_{k} \hat{\bZ}_k^{\top} \bZ_k \left( \gamma_k - \gamma^{\mathcal{A}}\right)\|_{\infty}$.} 
This can be upper bounded by: 
\begin{align*}
    & \|\sum_{k} \hat{\bZ}_k^{\top} \bZ_k \left( \gamma_k - \gamma^{\mathcal{A}}\right)\|_{\infty} \\
    & \qquad \leqslant \|\sum_{k} X_k^\top \hat{A}_k^\top A^*_k X_k  (\delta_{k,0} - \delta_{\cdot,0}) \|_{\infty} +  \|\sum_{k} X_k^\top \hat{A}_k^\top X_k (\delta_{k,1} - \delta_{\cdot,1})\|_{\infty} \\
    & \qquad \qquad + \|\sum_{k} X_k^\top {A}_k^* X_k (\delta_{k,0} - \delta_{\cdot,0})\|_{\infty} + \|\sum_{k} X_k^\top X_k (\delta_{k,1} - \delta_{\cdot,1})\|_{\infty} \,,
\end{align*}
where $\delta_k = \gamma_k - \gamma_0$ and $\delta = \gamma^\mathcal{A} - \gamma_0$, vector $\delta_{k,0}$ and $\delta_{\cdot,0}$ is the first $d$ components of the vector $\delta_k$ and $\delta$, $\delta_{k,1}$ and $\delta_{\cdot,1}$ is the last $d$ components of the $\delta_k$ and $\delta$. 
To bound each of the terms of the right-hand side of the above equation, we use similar techniques as before. 
For the first term, we can further decompose it as: 
\begin{align*}
    & \frac1n \left\| \sum_{k} \bX_k^\top \hat{A}_k^\top A^*_k X_k  (\delta_{k,0} - \delta_{\cdot,0}) \right\|_{\infty} \\
    \leqslant & \frac1n \left\| \sum_{k} X_k^\top (\hat{A}_k - A^*_k )^\top A^*_k X_k  (\delta_{k,0} - \delta_{\cdot,0}) \right\|_{\infty} + \frac1n \left\| \sum_{k} ( X_k^\top {A^*_k}^\top A^*_k X_k - n_k \Sigma_X)  (\delta_{k,0} - \delta_{\cdot,0}) \right\|_{\infty} \\
    \leqslant & 2h (M_1 + M_2) \,,
\end{align*}
where $M_1$ and $M_2$ are the maximum elements of the matrices $n^{-1} \sum_{k} X_k^\top (\hat{A}_k - A^*_k)^\top A^*_k X_k$ and $ n^{-1}\sum_{k} (X_k^\top {A^*_k}^\top A^*_k X_k - n_k \Sigma_X)$, respectively, and the first inequality holds as $\sum_{k} n_k \delta_k = \sum_{k} n_k \delta$. 
It is worth noting that \(\mathrm{E} X_k^\top (A^*_k)^\top A^*_k X_k = n_k \Sigma_X\), and its verification can be found in the proof of Lemma \ref{lemma: expectationZTZ}. The upper bound of \(M_1\) can be obtained by following the approach used in formula \eqref{eq:2 and F norm for hat A * A}, so we omit it here. The upper bound of $M_2$ can be concluded by applying Lemma \ref{lemma:hwinequalityforXY} on each element of the matrix; the $i,j$-th element of $n^{-1}\sum_{k} \bX_k^\top (A^*_k)^\top A^*_k \bX_k$ can be written in the form of a quadratic form as $(\bX_{\operatorname{full},i}^{\top} D_{{A^*_k}^\top A^*_k} \bX_{\operatorname{full},j}) /n$, so
\begin{align*}
& \bbP\left(\frac1n \left| \left(\bX_{\operatorname{full}}^\top D_{{A^*_k}^\top A^*_k} \bX_{\operatorname{full}}\right)_{ij} - \Sigma_{X,ij}\right| \geqslant t\right) \\
& \qquad \leqslant \exp  \left\{-c \min \left({n^2 t^2}/{\|D_{{A^*_k}^\top A^*_k}\|_F^2}, {nt}/{\| D_{{A^*_k}^\top A^*_k} \|_{2}}\right)\right\} 
\end{align*}
and 
$$
\bbP \{| M_2 | \geqslant t\} \leqslant \exp  \left[2\log d - c\min \{n^2 t^2/\sum_k(n_k + n_k^2 p_k^2) , {nt}/{\max_k (n_k p_k)}\}\right]. $$
We take $t = C \max \{\sqrt{\log d \sum_k (n_k + n_k^2 p_k^2) }/n , {\log d \max_k (n_k p_k) }/{n}\}$. 
Combining the above conclusions, we obtain: 
$$
\left\| \frac1n\sum_{k} \bX_k^\top \hat{A}_k^\top A^*_k X_k  \left(\delta_{k,0} - \delta_{\cdot,0}\right) \right\|_{\infty} \lesssim \frac{h}{n} \max \left\{\sqrt{\log d \sum_k (n_k + n_k^2 p_k^2) } , {\log{d} \, \max_k (n_k p_k) } \right\}.
$$ 
Applying similar techniques to the remaining parts, we can obtain the second part of \(\lambda_{\gamma}\) will be bound by $h \max \{\sqrt{\log d \sum_k (n_k + n_k^2 p_k^2)}, {\log d \max_k (n_k p_k) }\} / n$.
\\\\
\noindent
\textbf{Step 2.3: Bound for the third term $n^{-1} \| \sum_{k} \hat{\bZ}_k^{\top}(\bZ_k -\hat{\bZ}_k ) \gamma^\mathcal{A} \|_{\infty}$.} The third term of $\lambda_\gamma$ can be written as $\max \{\| \sum_k X_k^{\top} (A_k^* - \hat{A}_k) X_k \beta_0^\mathcal{A} \|_{\infty} ,  \| \sum_k X_k^{\top} \hat{A}_k^{\top} (A_k^* - \hat{A}_k) X_k \beta_0^\mathcal{A} \|_{\infty}\}$. Applying similar techniques, we could obtain
$$\bbP  \left[\max_{i,j} n^{-1} |  \{\sum_k X_k^{\top} (A_k^* - \hat{A}_k) X_k\}_{ij} | \geqslant t\right] \leqslant \exp \left\{2\log d - c \min ( n^2 t^2/K, nt/\max_k \sqrt{p_k} ) \right\}$$ and 
\begin{align*}
& \bbP \left[\max_{i,j} n^{-1} |\{\sum_k X_k^{\top} \hat{A}_k^{\top} (A_k^* - \hat{A}_k) X_k\}_{ij} | \geqslant t\right] \\
& \qquad \le \exp \left[2\log d - c\min  \left\{{n^2 t^2}/{\sum_k ( 1 + n_k p_k^2) }, nt/\max_k (p_k n_k^{-1/2})\right\}\right] \,.
\end{align*}
We could choose $t = C\sqrt{\log d \sum_{k}(1+n_k p_k^2)}/n$ to ensure that the probability above tends to zero. 
Under the assumption (C4) $\|\beta_{0k}\|_1 \leqslant \sqrt{n/\sum_k ( 1+ n_k p_k^2)}$, we can conclude that $\| \sum_{k} \hat{\bZ}_k^{\top} (\bZ_k -\hat{\bZ}_k) \gamma^\mathcal{A} \|_{\infty} \lesssim \sqrt{\log{d}/n}$.
\\\\
\noindent
\textbf{Step 2.4: Upper bound of $\lambda_\gamma$.} Combining the above, we obtain the upper bound for $\lambda_{\gamma}$ as $\sqrt{\log d / n} + h \max \{\sqrt{\log d \sum_k (n_k + n_k^2 p_k^2)}, \log d \max_k (n_k p_k)\} / n$.
\\\\
\noindent
\textbf{Step 3 (Debiasing):} Here, we will focus on the Debiasing Step of the algorithm \ref{al:trans_alg}, and provide its error bound, $\| \hat{\delta} - \delta\|_2$. 
The estimator $\hat \delta$ is obtained by 
minimizing $(2n_0)^{-1} \|\by_0 - \hat{\bZ}_0 ( \hat{\gamma}^\mathcal{A} - \delta) \|_{2}^{2} + \lambda_{\delta} \|{\delta}\|_{1}$. 
Define $\hat{v} = \hat{\delta} - \delta$. We have:  
$$ 
\frac{1}{2n_0}\left\|y_0 - \hat{\bZ}_0 \left( \hat{\gamma}^\mathcal{A} -\delta - \hat{v} \right)  \right\|_{2}^{2}+\lambda_{\delta}\left\|{\hat{\delta}}\right\|_{1} \leqslant \frac{1}{2n_0}\left\|y_0 - \hat{\bZ}_0 \left( \hat{\gamma}^\mathcal{A} -\delta \right)  \right\|_{2}^{2}+\lambda_{\delta}\left\|{{\delta}}\right\|_{1} \,.
$$ 
This, after some simple algebraic manipulation, yields: 
\begin{equation}
\label{eq:debias_1}
\frac{1}{2n_0}\left\| \hat{\bZ}_0 \hat{v}  \right\|_{2}^{2} \leqslant \lambda_{\delta}\left\|{{\delta}}\right\|_{1} - \lambda_{\delta}\left\|{\hat{\delta}}\right\|_{1} + \frac{1}{n_0}\left| \left\langle \epsilon_0 + (\bZ_0 -\hat{\bZ}_0 ) \beta_0 -\hat{\bZ}_0 \hat{u} ,  \hat{\bZ}_0 \hat{v} \right\rangle \right| \,.
\end{equation}
Further analysis of the inner product term yields
\begin{equation}
\label{eq:debias_2}
\begin{aligned}
	\frac{1}{n_0}\left| \left\langle \epsilon_0 + (\bZ_0 -\hat{\bZ}_0 ) \beta_0 -\hat{\bZ}_0 \hat{u} ,  \hat{\bZ}_0 \hat{v} \right\rangle\right| \leqslant & \frac{1}{n_0}\left| \left\langle \epsilon_0 + (\bZ_0 -\hat{\bZ}_0 ) \beta_0 ,  \hat{\bZ}_0 \hat{v} \right\rangle\right| + \frac{1}{n_0}\left| \left\langle  \hat{\bZ}_0 \hat{u} ,  \hat{\bZ}_0 \hat{v} \right\rangle\right|\\
	\leqslant & \frac{\lambda_{\delta}}{2} \left\|\hat{v} \right\|_1 + \frac{1}{n_0}\left\|\hat{\bZ}_0 \hat{u} \right\|_2^2 + \frac{1}{4 n_0}\left\|\hat{\bZ}_0 \hat{v} \right\|_2^2 ,
\end{aligned}
\end{equation}
where the first term comes from H{\"o}lder inequality by denoting $\lambda_\delta = 2\{\hat{\bZ}_0^\top \epsilon_0 + \hat{\bZ}_0^\top(\bZ_0 -\hat{\bZ}_0 ) \beta_0\}$, while the second term comes from the inequality $|ab | \leqslant c a^2/2 + b^2/2c $ and let $c=2$. 
Combining the bounds in Equation \eqref{eq:debias_1} and \eqref{eq:debias_2} and using the fact $\|{\hat{\delta}}\|_{1} \geqslant \|\hat{v}\|_1 - \| \delta \|_{1}$, we conclude: 
$$
\frac{1}{4n_0}\left\| \hat{\bZ}_0 \hat{v}   \right\|_{2}^{2} \leqslant 2\lambda_{\delta}\left\|{{\delta}}\right\|_{1} - \frac{\lambda_{\delta}}{2} \left\|\hat{v} \right\|_1 + \frac{1}{n_0}\left\|\hat{\bZ}_0 \hat{u} \right\|_2^2
$$
Next, we consider two different situations:

\noindent
{\bf Situation 1:} If $2 \lambda_{\delta} \|\delta \|_{1} \geqslant n_0^{-1} \|\hat{\bZ}_0 \hat{u} \|_2^2$, we have:  
$$
0 \leqslant \frac{1}{4n_0} \| \hat{\bZ}_0 \hat{v}\|_{2}^{2} \leqslant  4 \lambda_{\delta} \|\delta\|_{1} - \frac{\lambda_{\delta}}{2} \|\hat{v}\|_1 \,.
$$
The above inequality immediately implies: 
\begin{equation}
\label{eq:upper_bound_h_debiased}
 4 \lambda_{\delta} \|\delta\|_{1} - \frac{\lambda_{\delta}}{2} \|\hat{v}\|_1 \ge 0 \implies \| \hat{v} \|_2 \leqslant \|\hat{v} \|_1 \leqslant 8\|\delta \|_1 \leqslant 8h \,.
\end{equation}
Furthermore, using Theorem \ref{thm:RSC} we obtain: 
$$
\kappa \| \hat{v}\|_{2}^{2} - C_1 \log d \|\hat{v}\|_{1} \|\hat{v}\|_{2} \sqrt{\frac{\Psi(p_0)}{n_0}} - C_2 \frac{\sqrt{\log d}}{n_0} \|\hat{v}\|_{1}^2 \leqslant \frac{1}{4n_0} \|  \hat{\bZ}_0 \hat{v} \|_{2}^{2} \leqslant 4 \lambda_{\delta} \|\delta\|_{1}
$$
Hence, the above inequality implies: 
\begin{align}
\label{eq:first_bound}
    \kappa \| \hat{v}\|_{2}^{2} & \leqslant C_1 \log d \|\hat{v}\|_{1} \|\hat{v}\|_{2} \sqrt{\frac{\Psi(p_0)}{n_0}} + C_2 \frac{\sqrt{\log d}}{n_0} \|\hat{v}\|_{1}^2 + 4 \lambda_{\delta} \|\delta\|_{1} \notag \\
    & \leqslant C_1 h\log{d} \|\hat{v}\|_{2} \sqrt{\frac{\Psi(p_0)}{n_0}} + C_2 \frac{\sqrt{\log d}}{n_0}h^2 + 4 \lambda_{\delta} h \notag \\
    & \leqslant \frac{\kappa}{2}\|\hat v\|_2^2 +  h^2 \left(C_3 \log^2{d}\frac{\Psi(p_0)}{n_0} + C_2 \frac{\sqrt{\log d}}{n_0}\right)  + 4\lambda_\delta h \hspace{.5in}  \left[ ab \leqslant \frac{a^2}{2} + \frac{b^2}{2} \right]  \notag \\
    \implies \|\hat v\|_2^2 & \lesssim h^2 \left( \log^2{d}\frac{\Psi(p_0)}{n_0} + \frac{\sqrt{\log d}}{n_0} \right)  + \lambda_\delta h
\end{align}
We next claim that $h\lambda_\delta$ is of the larger order. This would be true if i) $h\sqrt{\log{d}}/n_0 \ll \lambda_\delta$ and ii) $h\log^2{d}\Psi(p_0)/n_0 \ll \lambda_\delta$. Given that $\lambda_\delta = C\sqrt{\log{d}/n_0}$, the first condition is satisfied as soon as $h \ll \sqrt{n_0}$ which is trivially true given our assumptions. For the second condition to be satisfied, we need $h \log^{3/2}{d} \Psi(p_0) \ll \sqrt{n_0}$, which is equivalent to the condition $h\sqrt{\log{d}\Psi(p_0)} \ll \sqrt{(n_0/\Psi(p_0))}/\log{d} \approx \sqrt{n_0p_0}/\log{d}$. As we have assumed $h\sqrt{\log{d} \Psi(p_0)}= o(1)$ and $\sqrt{n_0p_0}/\log{d} = \Omega(1)$ (Assumption (C3)), this condition is also easily satisfied. 
Therefore, we conclude from Equation \eqref{eq:first_bound} that $\|\hat v\|_2^2 \lesssim h\lambda_\delta$, which, in combination with Equation \eqref{eq:upper_bound_h_debiased}, yields $\|\hat v\|_2^2 \lesssim (h\lambda_\delta \wedge h^2)$. 

Since $\lambda_{\delta}$ has the same form as \(\lambda\) in Appendix C.1, the same argument can be used to show that $\lambda_{\delta} \lesssim \sqrt{\log d/n_0}$ with high probability. So under this situation, $\|\hat{v}\|_{2}^2 \lesssim  h \sqrt{\log d / n_0} \wedge h^2$.
\\\\
\noindent
{\bf Situation 2:} If $ 2 \lambda_{\delta} \|\delta \|_{1} \leqslant n_0^{-1} \|\hat{\bZ}_0 \hat{u} \|_2^2$, then we have: 
$$
0 \leqslant \frac{1}{4n_0}\|\hat{\bZ}_0 \hat{v} \|_{2}^{2} \leqslant \frac{2}{n_0}\|\hat{\bZ}_0 \hat{u} \|_2^2 - \frac{\lambda_{\delta}}{2} \|\hat{v} \|_1 
$$
An immediate conclusion from the above inequality is that $\|\hat v\|_1 \leqslant (1/2\lambda_\delta)n_0^{-1}\|\hat \bZ_0 \hat u\|_2^2$. 
Furthermore, Theorem \ref{thm:RSC} implies: 
\begin{equation}
\label{eq:RSC_debiased_2}
\kappa \|\hat v\|_2^2 -  C_1 \log d \|\hat{v}\|_{1} \|\hat{v}\|_{2} \sqrt{\frac{\Psi(p_0)}{n_0}} - C_2 \frac{\sqrt{\log d}}{n_0} \|\hat{v}\|_{1}^2  \leqslant \frac{2}{n_0}\|\hat{\bZ}_0 \hat{u} \|_2^2 - \frac{\lambda_{\delta}}{2} \|\hat{v} \|_1 \,.
\end{equation}
From Equation \eqref{eq:RSC_debiased_2}, we have, upon applying Young's inequality: 
\begin{equation*}
\begin{aligned}
\label{eq:RSC_debiased_1}
\kappa \|\hat v\|_2^2 & \leqslant \frac{2}{n_0}\|\hat{\bZ}_0 \hat{u} \|_2^2 + C \log d \|\hat{v}\|_{1} \|\hat{v}\|_{2} \sqrt{\frac{\Psi(p_0)}{n_0}} + \frac{\sqrt{\log d}}{n_0}\|\hat{v}\|_{1}^2  - \frac{\lambda_{\delta}}{2} \|\hat{v} \|_1 \\
& \leqslant \frac{2}{n_0}\|\hat{\bZ}_0 \hat{u} \|_2^2 + \frac{\kappa}{2}\|\hat v\|_2^2 + C_1 \frac{\log^2{d} \ \Psi(p_0)}{n_0}\|\hat v\|_1^2 + C_2\frac{\sqrt{\log d}}{n_0}\|\hat{v}\|_{1}^2 - \frac{\lambda_{\delta}}{2} \|\hat{v} \|_1 \,.
\end{aligned}
\end{equation*}
Easy to verify that the forth term is negligible compared to the third term. Therefore, we can conclude $\|\hat v\|_2^2 \lesssim n_0^{-1}\|\hat \bZ_0 \hat u\|_2^2$ when
\begin{equation}
\label{eq:want to proof in debais}
\frac{\log^2{d} \ \Psi(p_0)}{n_0}\|\hat v\|_1 \asymp \lambda_\delta^2 \log{d} \ \Psi(p_0)\|\hat v\|_1 \ll  \frac{\lambda_{\delta}}{2} \Longleftarrow \lambda_\delta \|\hat v\|_1 \ll (\log{d} \ \Psi(p_0))^{-1} \,.
\end{equation}
As we have already pointed out $\|\hat v\|_1 \lesssim \lambda_\delta^{-1} n_0^{-1}\|\hat \bZ_0 \hat u\|_2^2$, and using $n_0^{-1} \|\hat{\bZ}_0 \hat{u} \|_2^2 \leqslant 2 \lambda_{\max}(\Sigma_Z)\|\hat u\|_2^2$, we have: 
$$
\lambda_\delta \|\hat v\|_1 \lesssim \|\hat u\|_2^2 \lesssim s\lambda_\gamma^2 + (h\lambda_\gamma \wedge h^2) \, .
$$
Under assumptions (in theroem)
$$
h\sqrt{\log{d} \ \Psi(p_0)} = o(1) \ \ \text{ and }\ \ s\lambda_\gamma^2 \log{d} \ \Psi(p_0) = o(1) \,.
$$
we have \eqref{eq:want to proof in debais}, which implying $\|\hat v\|_2^2 \leqslant \|\hat u\|_2^2$.

In summary, we control $\|\hat{v}\|_{2}^2$ by $\|\hat{v}\|_{2}^2 \lesssim  ( h \sqrt{\log d/n_0} \wedge h^2) \vee \|\hat{u} \|_2^2 $.
Combining the above inequalities, we obtain,
\begin{equation}
	\left\|\hat{\gamma_0} - \gamma_0 \right\|_{2}^2 \leqslant \left\|\hat{u} \right\|_2^2 + \left\|\hat{v} \right\|_2^2 \lesssim s \lambda_{\gamma}^2 + \left(  h^2 \wedge \lambda_{\gamma}  h \right)  + \left(  h^2 \wedge \lambda_{\delta}  h \right) ,
\end{equation}
where $\lambda_{\gamma} = \sqrt{\log d/n} + h \max \{\sqrt{\log d \sum_k (n_k + n_k^2 p_k^2) } , \log d \max_k (n_k p_k)\}/n $ and $\lambda_{\delta} = \sqrt{\log d/n_0}$.

\color{black}

\section{Proof of auxiliary lemmas and RSC condition}

\subsection{Proof of lemma \ref{lemma:hwinequalityforXY}}
\begin{proof}
Define $Z = (X^{\top}, Y^{\top})^{\top} \in \mathbb{R}^{n_1 + n_2}$ as the new random vector. It is easy to observe that $Z$ is a random vector with independent, mean-zero, sub-Gaussian coordinates, and its $\psi_2$-norm is $K$. Define the anti-diagonal matrix $\mathbb{B} \in \mathbb{R}^{(n_1 + n_2)\times (n_1 + n_2)}  $, where the upper-right corner is $B^{\top}$, the lower-left corner is $B$, and the other elements are 0. It can be verified that $Z^{\top} \mathbb{B} Z = 2 X^{\top} B Y$, $\mathbb{E}Z^{\top} \mathbb{B} Z = 0$, and that $\|\mathbb{B}\|_F^2 \lesssim \|B\|_F^2$ and $\|\mathbb{B}\|_2 \lesssim \|B\|_2$. By applying the Hanson-Wright inequality to $Z^{\top} \mathbb{B} Z$, we can obtain the result stated in Lemma \ref{lemma:hwinequalityforXY}. 
\end{proof}

\subsection{Proof of Theorem \ref{thm:RSC}}
\begin{proof}
    The proof of the RE condition is similar to the proof of Proposition 2 of \cite{negahban2009unified}. However, the technicalities are different due to the dependence among observations. We use a similar truncation argument; following the proof of Proposition 2 of \cite{negahban2009unified}, we define a function $\phi_\tau(x)$ which takes value $x^2$ in $[-\tau/2, \tau/2]$, $(\tau - x)^2$ in $[-\tau, -\tau/2] \cup [\tau/2, \tau]$ and $0$ outside $[-\tau, \tau]$. Then for some fixed $0 < \tau \le T$ (to be chosen later), we have: 
\begin{align}
\label{eq:RSC_bound_1}
    \frac1n \sum_i (u^\top \bZ_i)^2 \ge \frac1n \sum_i \phi_\tau\left((u^\top \bZ_i)^2\mathds{1}_{|\gamma_0^\top Z_i| \le T}\right) \triangleq \frac1n \sum_i g_{\tau, T}(u^\top Z_i)
\end{align}
Our first target is to show that the expected value of the right-hand side of Equation \ref{eq:RSC_bound_1} is further lower bounded by some constant with high probability. Toward that end, we establish some moment bounds: 
\\\\
\noindent
{\bf Step 1: }We have already established that: 
\begin{align*}
    \bbE\left[\frac{u^\top \bZ^\top \bZ u}{n}\right] & = u^\top (I_2 \otimes \Sigma_X) u \ge \kappa
\end{align*}
where $\kappa$ is a lower bound on the minimum eigenvalue of $\Sigma_X$. However, we are performing truncation here; note that, $g_{\tau, T}(u^\top Z_i) \neq (u^\top Z_i)^2$ only if either $|u^\top Z_i| > \tau/2$ or $|\gamma_0^\top Z_i| > T$. Therefore, we have the following upper bound: 
\begin{align}
\label{eq:RSC_moment_bound_1}
    & \frac1n \sum_i\bbE\left[(u^\top Z_i)^2 - g_{\tau, T}(Z_i)\right] \notag \\
    & \le \frac1n \sum_i\bbE\left[(u^\top Z_i)^2\mathds{1}_{|u\top Z_i| > \tau/2}\right] + \frac1n \sum_i\bbE\left[(u^\top Z_i)^2 \mathds{1}_{|\gamma_0^\top Z_i| > T}\right] \notag \\
    & \le \frac1n \sum_i\sqrt{\bbE\left[(u^\top Z_i)^4\right]\bbP\left(|u\top Z_i| > \tau/2\right)}+ \frac1n \sum_i\sqrt{\bbE\left[(u^\top Z_i)^4\right] \bbP\left(|\gamma_0^\top Z_i| > T\right)} \notag \\
    & \le \sqrt{\frac1n \sum_i\bbE\left[(u^\top Z_i)^4\right]\bbP\left(|u\top Z_i| > \tau/2\right)} + \sqrt{\frac1n \sum_i\bbE\left[(u^\top Z_i)^4\right] \bbP\left(|\gamma_0^\top Z_i| > T\right)}
\end{align}
To bound the right hand side of Equation \eqref{eq:RSC_moment_bound_1}, we need to bound the moments of $(u^\top Z_i)$ and $(\gamma_0^\top Z_i)$, which is proved in the following lemma: 
\begin{lemma}
    \label{lem:moment_bounds}
    Under the problem setup and assumption (C1) , there exists constants $\kappa_1$ and $\kappa_2$ such that: 
    $$
    \bbE[(v^\top Z_i)^2] \le \kappa_1  \ \ \& \ \ \bbE[(v^\top Z_i)^4] \le \kappa_2 \,,
    $$
    for any vector $v \in \reals^p$ with $\|v\|_2 \le 1$. 
\end{lemma}
\begin{proof}
    We first bound the fourth moment. Note that: 
    \begin{align*}
    & \bbE[(v^\top {\bZ}_i)^4] \\
    &= \bbE\left[\left(\frac{1}{\sqrt{(n-1)p}}\sum_{j = 1}^n a_{ij}({\bX}_j^\top v)\right)^4\right] \\
    & = \frac{1}{(n-1)^2p^2}\left\{\sum_j \bbE[A_{ij}^4(X_j^\top v)^4] + \sum_{j_1 \neq j_2} \bbE[A_{ij}^2(X_j^\top v)^2]\bbE[A_{ij}^2(X_j^\top v)^2]\right\} \\
    & =  \frac{1}{(n-1)^2p^2}\left\{np\bbE[(X^\top v)^4] + n(n-1)p^2 (\bbE[(X^\top v)^2])^2\right\} \\
    & = \frac{n}{n-1}\left\{\frac{1}{n-1}\bbE[(X^\top v)^4] + (\bbE[(X^\top v)^2])^2\right\} \\
    & \le \kappa_2 \,.
\end{align*}
for some constant $\kappa_2$ as both $\bbE[(X^\top v)^2]$ and $\bbE[(X^\top v)^4]$ are finite, since $X$ is a subgaussian random vector. The analysis for the second moment is similar: 
\begin{align*}
    \bbE[(v^\top {\bZ}_i)^2] & = \bbE\left[\left(\frac{1}{\sqrt{(n-1)p}}\sum_{j = 1}^n A_{ij}(X_j^\top v)\right)^2\right] \\
    & = \frac{1}{(n-1)p}\sum_{j = 1}^p \bbE\left[A^2_{ij}(X_j^\top v)^2\right] \\
    & = \frac{n}{n-1} \bbE[(X^\top v)^2] \le \kappa_1 \,.
\end{align*}
\end{proof}
Now, using this lemma and Chebychev's inequality, we conclude that for a large enough choice of $(T, \gamma)$, we have: 
$$
\bbE\left[\frac1n \sum_i g_{\tau, T}(u^\top Z_i)\right] \ge \frac{\kappa}{2} \,.
$$
{\bf Step 2: }
Now that we have proved that the expected value of the right-hand side of Equation \eqref{eq:RSC_bound_1} is lower bounded by some constant, 
We next define an empirical process, namely $\bZ(t)$, which is defined as: 
\begin{equation}
    \label{eq:def_emp_proc_RSC}
    Z(t) \triangleq Z(t, Z_1, \dots, Z_n) = \sup_{\|u\|_2 = 1, \|u\|_1 = t} \left|\frac1n \sum_i g_{\tau, T}(u^\top Z_i) - \bbE\left[\frac1n \sum_i g_{\tau, T}(u^\top Z_i)\right]\right|
\end{equation}
As the function $g_{\tau, T}$ is bounded by $\tau^2/4$, we start with Mcdiarmid's inequality/bounded difference inequality; for any $Z'_i \neq {\bZ}_i$, 
\begin{align*}
    & Z(t, Z_1, \dots, Z_{i-1}, {\bZ}_i, \dots, Z_n) - Z(t, Z_1, \dots, Z_{i-1}, Z'_i, \dots, Z_n) \\
    & \le \frac{1}{n}\sup_{u \in \bbS_2(1) \cap \bbS_1(t)} \left|g_u(Z'_i) - \bbE[g_u(Z'_i)]\right| \le \frac{\tau^2}{2n} \hspace{.1in} [\because g_u(\cdot) \le \tau^2/4]. 
\end{align*}
Therefore, by bounded difference inequality: 
$$
\bbP\left(Z(t) \ge \bbE[Z(t) \mid \bX] + t \mid {\bX}\right) \le \exp{\left(-\frac{8nt^2}{\tau^4}\right)}
$$
As the right-hand side does not depend on the value of $\bX$, we can further conclude the following by taking expectations with respect to $\bX$ on both sides: 
\begin{equation}
\label{eq:Zt_conc_bound_1}
\bbP\left(Z(t) \ge \bbE[Z(t) \mid \bX] + t \right) \le \exp{\left(-\frac{8nt^2}{\tau^4}\right)} \,.
\end{equation}
Next, using symmetrization and Rademacher complexity bounds, we bound $\bbE[Z(t) \mid \bX]$. For notational simplicity let us define: 
\begin{align*}
V_n & = \max_{1 \le j \le p} \left|\frac{1}{\sqrt{n}}\sum_{k = 1}^n {\bX}_{kj}\right| \\
\Gamma_n & = \max_{1 \le j \le d} \frac{1}{n}\sum_{k = 1}^n {\bX}_{kj}^2 \,.
\end{align*}
Now, as we have already pointed out, conditional on $\bX$, ${\bZ}_i$'s are i.i.d. random vectors. Therefore, the symmetrization argument holds, and following the same line of argument as of \cite{negahban2009unified}, we can conclude an analog of their equation (78): 
\begin{align*}
\bbE[Z(t) \mid \bX] & \le 8K_3t \bbE_{\eps, Z}\left[\max_{1 \le j \le 2d}\left|\frac{1}{n}\sum_{i=1}^n \eps_i {\bZ}_{ij}\mathds{1}_{|{\bZ}_i^\top \gamma_0| \le T}\right| \mid \bX\right] \\
& = \frac{8K_3t}{\sqrt{n}} \bbE_{\bZ \mid \bX}\left[\bbE_{\eps\mid \bZ, \bX}\left[\max_{1 \le j \le 2d}\left|\frac{1}{\sqrt{n}}\sum_{i=1}^n \eps_i {\bZ}_{ij}\mathds{1}_{|{\bZ}_i^\top \gamma_0| \le T}\right|\right]\right]
\end{align*}
First, observe that $\{\eps_1, \dots, \eps_n\}$ are Rademacher random variables (which are also subgaussian with sub-gaussian constant being 1), and therefore, conditionally on $\bZ$, 
$$
\frac{1}{\sqrt{n}}\sum_{i=1}^n \eps_i {\bZ}_{ij}\mathds{1}_{|{\bZ}_i^\top \gamma_0| \le T} \text{ is subgaussian with norm } \sqrt{\frac{1}{n}\sum_{i=1}^n {\bZ}^2_{ij}\mathds{1}_{|{\bZ}_i^\top \gamma_0| \le T}} \le \sqrt{\frac1n \sum_{i=1}^n {\bZ}^2_{ij}} \,.
$$
Therefore, from standard probability tail bound calculation, we have: 
$$
\bbE_{\eps\mid \bZ, \bX}\left[\max_{1 \le j \le 2d}\left|\frac{1}{\sqrt{n}}\sum_{i=1}^n \eps_i {\bZ}_{ij}\mathds{1}_{|{\bZ}_i^\top \gamma_0| \le T}\right|\right] \le 
\sqrt{2\log{2d}}\max_{1 \le j  \le 2d} \sqrt{\frac1n \sum_{i=1}^n {\bZ}^2_{ij}} \,.
$$
Therefore, we have: 
\begin{equation}
\label{eq:Zt_bound_1}
\bbE[Z(t) \mid \bX] \le 8\sqrt{2}K_3t \sqrt{\frac{\log{2d}}{n}} \bbE\left[\max_{1 \le j  \le 2d} \sqrt{\frac1n \sum_{i=1}^n {\bZ}^2_{ij}} \mid \bX\right]
\end{equation}
We next analyze the first $d$ co-odinates of $\bZ_i$ (which is $\{X_{ij}\}_{1 \le j \le d}$) and the last $d$ coordinates of $\bZ_i$, which is $X^\top A^*_{i*}$ separately. For the first $d$ coordinates, conditional on $\bX$, we have: 
$$
\max_{1 \le j \le d} \sqrt{\frac{1}{n}\sum_{i = 1}^n \bZ_{ij}^2} =  \max_{1 \le j \le d} \sqrt{\frac{1}{n}\sum_{i = 1}^n \bX_{ij}^2} = \sqrt{\Gamma_n} \,.
$$
Now, for any $d+1 \le j \le 2d$, we have:  
$$
{Z}_{ij} = \frac{1}{\sqrt{np}}\sum_k A_{ik}{\bX}_{kj} = \frac{1}{\sqrt{np}}\sum_k (A_{ik} - p){\bX}_{kj} + \sqrt{\frac{p}{n}}\sum_k {\bX}_{kj} \triangleq \bar {\bZ}_{ij} + \sqrt{\frac{p}{n}}\sum_k {\bX}_{kj} \,.
$$
Using this we have: 
\begin{align}
\bbE\left[\max_{d+1 \le j  \le 2d} \sqrt{\frac1n \sum_{i=1}^n {\bZ}^2_{ij}} \mid \bX\right] & \le \bbE\left[\max_{d+1 \le j  \le 2d} \sqrt{\frac1n \sum_{i=1}^n \bar {\bZ}^2_{ij}} \mid \bX\right] + \sqrt{p} \max_{1 \le j  \le d} \left|\frac{1}{\sqrt{n}}\sum_k {\bX}_{kj}\right|  \notag \\
&= \bbE\left[\max_{d+1 \le j  \le 2d} \sqrt{\frac1n \sum_{i=1}^n \bar {\bZ}^2_{ij}} \mid \bX\right] + \sqrt{p}V_n \notag \\
\label{eq:Z_bound_1} & \le \sqrt{\bbE\left[\max_{d+1 \le j \le 2d} \frac1n \sum_{i=1}^n \bar {\bZ}^2_{ij} \mid \bX\right]}+ \sqrt{p}V_n \,.
\end{align}
We next establish an upper bound on the conditional expectation of the maximum of the mean of $\bar {\bZ}_{ij}^2$. We first claim that $\bar {\bZ}_{ij}$ is a SG($\sigma_j)$ random variable with the value of $\sigma_j$ defined in equation \eqref{eq:Z_tilde_sg_norm} below. To see this, first note that, from Theorem 2.1 of \cite{ostrovsky2014exact}, we know $(A_{ik} - p)$ is SG($\sqrt{2}Q(p)$). Therefore, we have: 
\begin{equation}
\label{eq:Z_tilde_sg_norm}
\bar {Z}_{ij} = \frac{1}{\sqrt{np}}\sum_k (A_{ik} - p){\bX}_{kj} \in \mathrm{SG}\left(\sqrt{\frac{2Q^2(p)}{p} \frac1n \sum_k {\bX}_{kj}^2}\right) \triangleq \mathrm{SG}(\sigma_j) \,.
\end{equation}
Let $\mu_j = \bbE[\bar {\bZ}^2_{ij} | \bX]$. Then, by equation (37) of \cite{honorio2014tight}, we know $\bar {\bZ}^2_{ij} - \mu_j$ is a sub-exponential random variable, in particular: 
$$
\bar {\bZ}^2_{ij} - \mu_j \in \mathrm{SE}\left(\sqrt{32}\sigma_j, 4\sigma_j^2\right) \,.
$$
Hence we have, by equation (2.18) of \cite{wainwright2019high} (we use the version for the two-sided bound here): 
\begin{equation}
\label{eq:Z_bound_concentration_1}
\bbP\left(\left|\frac1n \sum_{i = 1}^n\left(\bar {\bZ}_{ij}^2 - \mu_j\right)\right| \ge t\right) \le \exp{\left(-\frac{1}{8\sigma_j^2}\min\left\{\frac{nt^2}{8}, nt\right\}\right)} \,.
\end{equation}
Going back to \eqref{eq:Z_bound_1}, we have: 
\begin{align*}
    \bbE\left[\max_{1 \le j \le d} \frac1n \sum_{i=1}^n \bar {\bZ}^2_{ij} \mid \bX\right] & = \bbE\left[\max_{1 \le j \le d} \left\{\left(\frac1n \sum_{i=1}^n (\bar {\bZ}^2_{ij} - \mu_j)\right) + \mu_j\right\} \mid \bX\right] \\
    & \le \bbE\left[\max_{1 \le j \le d} \left|\frac1n \sum_{i=1}^n (\bar {\bZ}^2_{ij} - \mu_j)\right| \mid \bX\right] + \max_{1 \le j \le d} \mu_j
\end{align*}
Now, bound the first term using the concentration inequality \eqref{eq:Z_bound_concentration_1}. Towards that end, define $\sigma_* = \max_j \sigma_j$ and observe that $\sigma_* = \sqrt{2Q^2(p)/p}\sqrt{\Gamma_n}$. 
\begin{align*}
    & \bbE\left[\max_{1 \le j \le d} \left|\frac1n \sum_{i=1}^n (\bar {\bZ}^2_{ij} - \mu_j)\right| \mid \bX\right] \le 8 \max\{\sigma_* \sqrt{\log{d}}, \sigma_*^2 \log{d}\} \,.
\end{align*}
Furthermore, observe that: 
\begin{align*}
    \mu_j = \bbE[\bar {\bZ}_{ij}^2 \mid \bX] & = \bbE\left[\left(\frac{1}{\sqrt{np}}\sum_k (A_{ik} - p)X_{kj}\right)^2 \mid \bX\right] = (1-p) \frac1n \sum_k X^2_{kj} \,,
\end{align*} 
which implies, $\max_{1 \le j \le d} \mu_j = (1-p) \Gamma_n$. Using these bounds in equation \eqref{eq:Z_bound_1}, we have: 
\begin{equation}
    \label{eq:Z_bound_2}
    \bbE\left[\max_{1 \le j  \le d} \sqrt{\frac1n \sum_{i=1}^n {\bZ}^2_{ij}} \mid \bX\right] \le \sqrt{\max\{\sigma_* \sqrt{\log{d}}, \sigma_*^2 \log{d}\} + (1-p) \Gamma_n} + \sqrt{p}V_n
\end{equation}
This, along, with equation \eqref{eq:Zt_bound_1},yields: 
\begin{align}
\label{eq:Zt_bound_2}
\bbE[Z(t) \mid \bX] & \le Ct\sqrt{\frac{\log{d}}{n}}\left(\sqrt{\max\{\sigma_* \sqrt{\log{d}}, \sigma_*^2 \log{d}\} + (1-p) \Gamma_n} + \sqrt{p}V_n\right) \notag \\
& \triangleq Ct\sqrt{\frac{\log{d}}{n}} g(\bX, p, d) \,.
\end{align}
Using this in the inequality \eqref{eq:Zt_conc_bound_1} yields: 
\begin{equation}
    \label{eq:Zt_conc_bound_2}
    \bbP\left(Z(t) \ge Ct\sqrt{\frac{\log{d}}{n}} g(\bX, p, d) + y \right) \le \exp{\left(-\frac{8ny^2}{\tau^4}\right)} \,.
\end{equation}
We next provide an upper bound for $g(\bX, p, d)$ term. Note that in the expression of $g(\bX, p, d)$, there are two key terms: $\Gamma_n, V_n$. Therefore, if we can obtain an upper bound on them individually, we can obtain an upper bound on $g(\bX, p, d)$. We start with $V_n$; for any fixed $j$, ${\bX}_{kj}$'s are i.i.d sub-gaussian random variable with constant $\sigma^2_X$. Therefore, we have: 
$$
\bbP\left(\left|\frac{1}{\sqrt{n}}\sum_{k = 1}^n {\bX}_{kj}\right| \ge t\right) \le 2e^{-\frac{t^2}{2\sigma^2_X}} 
$$
As a consequence, by union bound: 
$$
\bbP(V_n \ge t) = \bbP\left(\max_j \left|\frac{1}{\sqrt{n}}\sum_{k = 1}^n {\bX}_{kj}\right| \ge t\right) \le 2e^{\log{d} -\frac{t^2}{2\sigma^2_X}}
$$
Therefore, choosing $t = \sigma_X \sqrt{2c_1 \log{d}}$ (where $c_1 \ge 2$), we have: 
\begin{equation}
\label{eq:bound_Vn}
V_n \le  \sigma_X \sqrt{2c_1 \log{d}}) \ \ \text{with probability } \ge 1 - 2\exp{\left(-(c_1 -1)\log{d}\right)} \,.
\end{equation}
Call this event $\Omega_{n, {\bX}, 1}$. Our next target is $\Gamma_n$ which can be further upper bounded by: 
$$
\Gamma_n = \max_j \frac1n \sum_{k = 1}^n {\bX}_{kj}^2 \le \max_j \frac1n \sum_{k = 1}^n ({\bX}_{kj}^2 - \Sigma_{X, jj}) + \max_{j} \Sigma_{X, jj} \triangleq \bar \Gamma_n + \max_{j} \Sigma_{X, jj} \,.
$$
As we have assumed $\max_j \Sigma_{X, jj} \le C_1$ for some constant $C_1$, we need to bound $\bar \Gamma_n$. Here, we also use the fact that ${\bX}_{jk}^2 - \Sigma_{X, jj} \in \mathrm{SE}(\sqrt{32}\sigma_X, 4\sigma_X^2)$. Therefore, by equation (2.18) of \cite{wainwright2019high} we have: 
$$
\bbP\left(\left|\frac1n \sum_{k = 1}^n ({\bX}_{kj}^2 - \Sigma_{X, jj})\right| \ge t\right) \le 2\exp{\left(-\frac{n}{8\sigma^2_X}\min\left\{\frac{t^2}{8}, t\right\}\right)}
$$
Therefore, by union bound: 
$$
\bbP\left(\max_{1 \le j\le d}\left|\frac1n \sum_{k = 1}^n ({\bX}_{kj}^2 - \Sigma_{X, jj})\right| \ge t\right) \le 2\exp{\left(\log{d} -\frac{n}{8\sigma^2_X}\min\left\{\frac{t^2}{8}, t\right\}\right)}
$$
Choosing $t = \max_j \Sigma_{X, jj}$, we have: 
\begin{equation}
\label{eq:bound_Gamma_n}
\Gamma_n \le 2\max_j \Sigma_{X, jj} \le 2C_1 \ \ \text{with probability } \ge 1 -  2\exp{\left(\log{d} - c_2 n\right)} \,.
\end{equation}
Call this event $\Omega_{n, {\bX}, 2}$. 
Now, going back to the definition of $g(\bX, p, d)$ in equation \eqref{eq:Zt_bound_2}, we first note that, on the event $\Omega_{n,{\bX},1} \cap \Omega_{n, {\bX}, 2}$:  
$$
\sigma_* = \sqrt{\frac{2Q^2(p)}{p}\Gamma_n} \le 2\sqrt{\frac{C_1 Q^2(p)}{p}} \triangleq 2\sqrt{C_1 \Psi(p)}\,.
$$
It is immediate from the definition of $Q(p)$ that $\Psi(p) \sim 1/(-4p\log{p})$ for $p$ close to $0$. Therefore for all small $p$ and large $d$, $\sigma_*^2 \log{d} \ge 1$ and consequently $\max\{\sigma_* \sqrt{\log{d}}, \sigma_*^2 \log{d}\} = \sigma_*^2 \log{d}$. Hence, we have on the event $\Omega_{n,{\bX},1} \cap \Omega_{n, {\bX}, 2}$:
$$
g(\bX, p, d) \le C_2 \sqrt{\Psi(p) \log{d}} + 2C_1 + \sigma_X \sqrt{2c_1p \log{d}} \,.
$$
It is immediate that the dominating term is the first term, which implies: 
$$
g(\bX, p, d) \le 3C_2 \sqrt{\Psi(p) \log{d}} \,.
$$
We now use this bound in equation \eqref{eq:Zt_conc_bound_1}. Note that: 
\begin{align*}
    & \bbP\left(Z(t) \ge \bbE[Z(t) \mid \bX] + y \right) \\
    & \ge  \bbP\left(Z(t) \ge \bbE[Z(t) \mid \bX] + y, \Omega_{n,{\bX}, 1} \cap \Omega_{n, {\bX}, 2} \right)  \\
    & \ge \bbP\left(Z(t) \ge  3CC_2 t\log{d}\sqrt{\frac{\Psi(p)}{n}}  + y, \Omega_{n,{\bX}, 1} \cap \Omega_{n, {\bX}, 2} \right) \\
    & \ge \bbP\left(Z(t) \ge  3CC_2 t\log{d}\sqrt{\frac{\Psi(p)}{n}}  + y\right) + \bbP(\Omega_{n,{\bX}, 1} \cap \Omega_{n, {\bX}, 2} ) - 1
\end{align*}
Therefore, 
\begin{align}
    \label{eq:Zt_conc_bound_4}
     \bbP\left(Z(t) \ge  3CC_2 t\log{d}\sqrt{\frac{\Psi(p)}{n}}  + y\right)  & \le \exp{\left(-\frac{8ny^2}{\tau^4}\right)} + \bbP((\Omega_{n,{\bX}, 1} \cap \Omega_{n, {\bX}, 2})^c) \notag \\
     & \le \exp{\left(-\frac{8ny^2}{\tau^4}\right)} + 2\exp{\left(-(c_1 -1)\log{d}\right)} + 2\exp{\left(\log{d} - c_2 n\right)} \,.
\end{align}
Choosing $y = C_3 t \log{d}\sqrt{\Psi(p)/n}$, we have: 
\begin{align}
    \label{eq:Zt_conc_bound_3}
    & \bbP\left(Z(t) \ge  3CC_2 t\log{d}\sqrt{\frac{\Psi(p)}{n}}  + C_3t \log{d} \sqrt{\frac{\Psi(p)}{n}}\right) \notag \\
    & \le \exp{\left(-\frac{8C_3^2 t^2 \log^2{d}\Psi(p)}{\tau^4}\right)} + 2\exp{\left(-(c_1 -1)\log{d}\right)} + 2\exp{\left(\log{d} - c_2 n\right)} \,.
\end{align}

{\bf Step 4: }Our last modification, not modification per se, but an application of peeling argument. Infact we want an upper bound on the event $\cE$ defined as: 
$$
\cE = \left\{Z(t) \ge 3eCC_2 t\log{d}\sqrt{\frac{\Psi(p)}{n}}  + C_3et \log{d} \sqrt{\frac{\Psi(p)}{n}}\ \text{ for some }t \in [1, \sqrt{d}] \right\} \,.
$$
Note that $t$ denotes the $\ell_1$ norm of a a vector $u$ such that $\|u\|_2 = 1$. Therefore, $t \in [1, \sqrt{d}]$. Also recall that $Z(t)$ is the suprema of the empirical process over all vectors $u$ such that $\|u\|_2 = 1$ and $\|u\|_1 \le t$. In peeling, we write $\cE$ as the union of disjoint events. Define $\cE_j$ as: 
$$
\cE_j = \left\{Z(t) \ge 3eCC_2 t\log{d}\sqrt{\frac{\Psi(p)}{n}}  + C_3et \log{d} \sqrt{\frac{\Psi(p)}{n}}\ \text{ for some }t \in [\sqrt{d}/e^{j}, \sqrt{d}/e^{j-1}] \right\} \,.
$$
Therefore, 
$$
\cE \subseteq \cup_{j = 1}^{\lceil \frac12\log{d}\rceil} \cE_j \implies \bbP(\cE) \le \sum_{j = 1}^{\lceil \frac12\log{d}\rceil} \bbP(\cE_j) \,.
$$
Now observe that, for any $t \in [\sqrt{d}/e^{j}, \sqrt{d}/e^{j-1}]$, we have $Z(t) \le Z(\sqrt{d}/e^{j-1})$ and also 
\begin{align*}
3eCC_2 t\log{d}\sqrt{\frac{\Psi(p)}{n}}  + C_3et \log{d} \sqrt{\frac{\Psi(p)}{n}} & \ge 3eCC_2 \frac{\sqrt{d}}{e^{j}}\log{d}\sqrt{\frac{\Psi(p)}{n}}  + C_3e \frac{\sqrt{d}}{e^{j}} \log{d} \sqrt{\frac{\Psi(p)}{n}} \\
 & \ge 3CC_2 \frac{\sqrt{d}}{e^{j-1}}\log{d}\sqrt{\frac{\Psi(p)}{n}}  + C_3\frac{\sqrt{d}}{e^{j-1}} \log{d} \sqrt{\frac{\Psi(p)}{n}} \,.
\end{align*}
Therefore: 
\begin{align*}
    \bbP(\cE_j) & \le \bbP\left(Z\left(\frac{\sqrt{d}}{e^{j-1}}\right) \ge 3CC_2 \frac{\sqrt{d}}{e^{j-1}}\log{d}\sqrt{\frac{\Psi(p)}{n}}  + C_3\frac{\sqrt{d}}{e^{j-1}} \log{d} \sqrt{\frac{\Psi(p)}{n}}\right) \\
    & \le \exp{\left(-\frac{8C_3^2 d \log^2{d}\Psi(p)}{e^{2j - 2}\tau^4}\right)} + 2\exp{\left(-(c_1 -1)\log{d}\right)} + 2\exp{\left(\log{d} - c_2 n\right)} \\
    & \le  \exp{\left(-c_4 \log^2{d}\Psi(p)\right)} + 2\exp{\left(-(c_1 -1)\log{d}\right)} + 2\exp{\left(\log{d} - c_2 n\right)} 
\end{align*}
Hence: 
$$
\bbP(\cE) \le  \exp{\left(\frac12 \log{d} + 1-c_4 \log^2{d}\Psi(p)\right)} + 2\exp{\left(1-(c_1 -3/2)\log{d}\right)} + 2\exp{\left(\frac{3}{2}\log{d} + 1 - c_2 n\right)}  \,.
$$
On the event $\cE^c$ (which is a high probability event):  
$$
Z(t) \le 3eCC_2 t\log{d}\sqrt{\frac{\Psi(p)}{n}}  + C_3et \log{d} \sqrt{\frac{\Psi(p)}{n}}\ \text{ for all }t \in [1, \sqrt{d}] \,.
$$
Now let us conclude with the entire roadmap of the proof. First, following the same line of argument as of \cite{negahban2009unified} we show that 
\begin{align*}
\delta L_n(u) & \ge L_\psi(T) \frac1n \sum_i \phi_\tau\left((u^\top {\bZ}_i)^2\mathds{1}_{|{\bZ}_i^\top\gamma_0| \le T}\right) \\
& = L_\psi(T) \|u\|^2 \frac1n \sum_i \phi_\tau\left(\left(u^\top {\bZ}_i/\|u\|\right)^2\mathds{1}_{|{\bZ}_i^\top\gamma_0| \le T}\right) \\
& = L_\psi(T) \|u\|^2 \bbP_n(g_{u/\|u\|}(Z)) \\
& = L_\psi(T) \|u\|_2^2 \left\{P(g_{u/\|u\|}(Z)) + (\bbP_n - P)g_{u/\|u\|}(Z)\right\} 
\end{align*}
We have proved in Modification 1 that $P(g_{u/\|u\|}(Z)) \ge \kappa_l$. Therefore, 
$$
\delta L_n(u) \ge  L_\psi(T) \|u\|_2^2 \left\{\kappa_l + (\bbP_n - P)g_{u/\|u\|}(Z)\right\} 
$$
Now for any $u$, 
\begin{align*}
(\bbP_n - P)g_{u/\|u\|}(Z) & \le Z\left(\left\|\frac{u}{\|u\|_2}\right\|_1\right)  \\
& \le \left(3eCC_2 \log{d}\sqrt{\frac{\Psi(p)}{n}}  + C_3e \log{d} \sqrt{\frac{\Psi(p)}{n}}\right)\frac{\|u\|_1}{\|u\|_2} \,.
\end{align*}
Hence, we conclude that: 
$$
\delta L_n(u) \ge L_\psi(T) \|u\|_2^2 \left\{\kappa_l -  \left(C_4 \log{d} \sqrt{\frac{\Psi(p)}{n}}\right)\frac{\|u\|_1}{\|u\|_2}\right\} \, ,
$$
where $L_\psi(T)$ is a constant. Using the fact $\delta L_n(u) = u^\top \bZ^\top \bZ u /n$, we conclude
\begin{equation}
	\label{eq:rsc}
	\frac{u^\top \bZ^\top \bZ u}{n} \geqslant \kappa \|u\|_2^2 - C_5 \log{d} \sqrt{\frac{\Psi(p)}{n}} \|u\|_1 \|u\|_2.
\end{equation}

We next derive the upper bound on $(u^\top \hat \bZ^\top \hat \bZ u)/n$. Recall that the difference between $\hat \bZ$ and $\bZ$ is that in the former, $A_{ij}$ is scaled by $\sqrt{(n-1)\hat p}$, whereas $A_{ij}$ is scaled by $\sqrt{(n-1)p}$ in the later. Expanding the quadratic form yields:
\allowdisplaybreaks
\begin{align*}
    \frac{u^\top \hat \bZ^\top \hat \bZ u}{n} = \frac{1}{n}\sum_i (\hat Z_i^\top u)^2 & = \frac{1}{n}\sum_i (u_1^\top X_i + u_2^\top \bX^\top A_{i,*}/\sqrt{(n-1)\hat p})^2 \\
    & = \frac1n \sum_i \left(u_1^\top X_i + \sqrt{\frac{p}{\hat p}}\frac{1}{\sqrt{(n-1)p}}u_2^\top \bX^\top A_{i, *}\right)^2 \\
     & = \frac1n \sum_i \left(\sqrt{\frac{p}{\hat p}} u^\top Z_i  + \left(1- \sqrt{\frac{p}{\hat p}}\right)u_1^\top X_i\right)^2 \\
     & \ge \frac{p}{2\hat p}\frac{1}{n}\sum_i (u^\top Z_i)^2 - \left(1- \sqrt{\frac{p}{\hat p}}\right)^2\frac{1}{n} \sum_i (u_1^\top X_i)^2 \,.
\end{align*}
Here the last inequality follows from the fact that $(a+b)^2 \ge (a^2/2) - b^2$. Now, in Step 2.1 of the proof of Theorem \ref{thm: general model upper bound without transfer learning}, we show that with probability going to one, $|\hat p - p| \le p/\sqrt{n}$. This implies $(1/2) p \le \hat p \le 2p$ with high probability, which further implies $p/\hat p \ge 1/2$. 
On the other hand, we have: 
\begin{align*}
    \left|1 - \sqrt{\frac{\hat p}{p}}\right| = \frac{|\sqrt{\hat p} - \sqrt{p}|}{\sqrt{\hat p}} = \frac{|\hat p - p|}{\sqrt{\hat p}(\sqrt{\hat p} + \sqrt{p})} \le \frac{1}{\sqrt{n}}\frac{p}{\sqrt{\hat p p}} \le \sqrt{\frac{2}{n}} \,.
\end{align*}
Therefore, we conclude: 
\begin{align*}
\frac{u^\top \hat \bZ^\top \hat \bZ u}{n} & \ge \frac{u^\top\bZ^\top \bZ u}{n} - 3\sqrt{\frac{2}{n}} \frac{u_1^\top \bX^\top \bX u_1}{n} \\
& = \frac{u^\top\bZ^\top \bZ u}{n} - 3\sqrt{\frac{2}{n}} \left(u_1^\top \Sigma_X u_1 + u_1^\top \left(\frac{\bX^\top \bX }{n} - \Sigma_X\right)u_1\right) \\
& \ge \frac{u^\top\bZ^\top \bZ u}{n} - 3\sqrt{\frac{2}{n}} \left(u_1^\top \Sigma_X u_1 + \left\|\frac{\bX^\top \bX }{n} - \Sigma_X\right\|_{\infty, \infty} \|u_1\|_1^2\right) 
\end{align*}
Now, to complete proof, we use the following facts: i) $\lambda_{\min}(\Sigma_X) \ge \kappa$ (Assumption (C1) and ii) by an application of Hoeffding's inequality along with a union bound (Lemma 1 of \cite{ravikumar2011high}): 
\begin{align*}
\bbP\left(\left\|\frac{\bX^\top \bX }{n} - \Sigma_X\right\|_{\infty, \infty} \ge t\right) & \le \sum_{j, k} \bbP\left(\left|\frac1n \sum_i X_{ij}X_{ik} - \Sigma_{X, jk
}\right| \ge t\right) \\
& \le 4\exp{\left(2\log{d} - \frac{Cnt^2}{\max{\Sigma^2_{X, ii}}}\right)} 
\end{align*}
for all $t \le C_1 \max_i \Sigma_{X, ii}$. As we have assumed $\max_i \Sigma_{X, ii}$ is uniformly upper bounded (Assumption (C1)), choosing $t = K \sqrt{\log{d}/n}$ (for a suitable constant $K$ so that $CK^2 > 2\max_i \Sigma_{X, ii}^2$, we have with probability $1 - d^{-\alpha}$ 
$$
\left\|\frac{\bX^\top \bX }{n} - \Sigma_X\right\|_{\infty, \infty} \le K\sqrt{\frac{\log{d}}{n}} \,.
$$
Therefore, we conclude: 
$$
\frac{u^\top \hat \bZ^\top \hat \bZ u}{n}  \ge \frac{u^\top\bZ^\top \bZ u}{n} - 3\sqrt{\frac{2}{n}}\kappa - 3K \frac{\sqrt{2\log{d}}}{n}\|u_1\|_1^2 \,.
$$
\end{proof}

\subsection{Proof of lemma \ref{lemma:ubforA}}
\begin{proof}
{\bf Part 1: Upper bound for $\|A^*\|_F^2$.} To obtain a bound for $\| A^* \|_F$, we use Chebychev inequality. 
First, observe that: 
$$
\bbE[\|A^* \|_F^2]  = \frac{1}{(n-1)p} \bbE[  \|A\|_F^2] =\frac{1}{(n-1)p} \sum_{i \neq j}\bbE[A_{ij}^2] = n \,.
$$
For the variance of the Frobenious norm: 
\begin{align*}
\var(\| A^* \|_F^2) = \frac{1}{((n-1)p)^2} \var (\| A \|_F^2) & = \frac{1}{((n-1)p)^2} \var (\sum_{i \neq j} A^2_{ij}) \\
& = \frac{n(n-1)}{(n-1)^2 p^2} \var(A_{11}^2) \\
& \le \frac{n}{n-1} \frac{1}{p^2} \bbE[A_{11}^4] \le \frac{2}{p} \,.
\end{align*}
Therefore, by Chebychev's inequality, we have: 
$$
\bbP(|\| A^* \|_F^2 - \bbE[\| A^* \|_F^2]| \geqslant n) \leqslant \var(\| A^* \|_F^2)/n^2 \leqslant \frac{2}{n^2p} \,.
$$
Therefore, we have $\| A^* \|_F^2 \leqslant \bbE\| A^* \|_F^2 + n \leqslant 2n$ with probability $1 - 2(n^2 p)^{-1}$, thus formula \eqref{eq:A_frob_bound} could be obtained.
\\
\noindent
{\bf Part 2: Upper bound for $\| A^* \|_2$.} 
To establish a bound for $\| A^* \|_2$, first we have $
\| A^* \|_2 \leqslant \|A^* - \bbE A^*\|_2 + \|\bbE A^* \|_2$.
A bound on $\|\bbE A^*\|_2$ directly follows from the definition:  
$$
\|\bbE A^*\|_2 = \{(n-1)p\}^{-1/2} \|\bbE A\|_2 = \{(n-1)p\}^{-1/2} \|p(\mathbf{1_n}\mathbf{1_n}^\top - I_n)\|_2 \leqslant \sqrt{np} \,.
$$
Next we bound $ \|A^* - \bbE A^*\|_2 = \|A - \bbE A\|_2/\sqrt{(n-1)p}$. Using Corollary 3.12 and Remark 3.13 of \cite{bandeira2016sharp} (with $\eps = 1/2$), we have 
$$
\bbP(\|A - \bbE A\|_2 \geqslant 3\sqrt{2}\tilde \sigma + t ) \leqslant \exp(\log{n} - t^2 /c\sigma_*^2) \,,
$$
where $\tilde \sigma = \max_i \sqrt{\sum_j \var A_{ij}} = \sqrt{(n-1)p(1-p)} \leqslant \sqrt{np}$ and $\sigma_* = \max_{i, j} |A_{ij}| \le 1$. Therefore, we obtain: 
$$
\bbP (\|A - \bbE A\|_2 \geqslant 3\sqrt{2}\sqrt{np} + t) \leqslant \exp(\log n -t^2/c) \,.
$$
Taking $t = \sqrt{np}$, we have 
$$\bbP(\|A - \bbE A\|_2 \geqslant (1 + 3\sqrt{2})\sqrt{np}) \leqslant \exp(\log n -np/c)
$$
which implies
$$
\bbP(\|A^* - \bbE A^*\|_2 \geqslant (1 + 3\sqrt{2})) \leqslant \exp(\log{n} - np/c) \,.
$$
Combining these bounds, we have $\|A^*\|_2 \leqslant \sqrt{np} + (1 + 3\sqrt{2}) \leqslant 2 \sqrt{np}$ with probability $1 - \exp(\log{n} - np/c)$. With $np \gg \log n$ (Assumption (C3) ), we have $1 - \exp(\log{n} - np/c) \to 1$.
\\
\noindent
{\bf Part 3: Upper bound for $\| {A^*}^\top A^* \|_2$.} We have $\| {A^*}^\top A^* \|_2 = \|A^*\|_2^2\leqslant 4 np$ with probability $1 - \exp(\log{n} - np/c)$.
\\
\noindent
{\bf Part 4: Upper bound for $\|{A^*}^\top A^*\|_F^2$.} 
We divide the entire proof into three steps: in the first step, we calculate the expected value of \(\|{A^*}^\top A^*\|_F^2\); in the second step, we provide an upper bound on its variance; in the third step, we summarize the results from the first two steps and use Chebychev's inequality to complete the proof.
\\
\indent
{\bf Step 1: Computing Expectation.} 
For any $1 \leqslant i\leqslant n$, we have: 
\allowdisplaybreaks
\begin{align*}
    \bbE {({A^*}^\top A^*)_{ij}}^2 & = \{(n-1) p\}^{-2} \bbE \{\left(\sum_{k = 1}^n A_{ki}A_{kj}\right)^2\} \\
    & = \{(n-1) p\}^{-2} [ \sum_{k = 1}^n \bbE\left\lbrace (A_{ki}A_{kj})^2\right\rbrace  + \sum_{k \neq l} \bbE\left\lbrace (A_{ki}A_{kj})(A_{li}A_{lj})\right\rbrace ] \\
    & \le \{(n-1) p\}^{-2} ( np^2 + n^2p^4)  \lesssim n^{-1} + p^2 \\
    \bbE\{({A^*}^\top A^*)_{ii}^2\} & = \{(n-1)p\}^{-2} \bbE \{(\sum_{k = 1}^n A^2_{ki})^2\} \\
    & = \{(n-1)p\}^{-2} \{\sum_k \bbE(A^4_{ki}) + \sum_{k \neq l}\bbE (A_{ki}^2 A_{li}^2)\}  \\
    & \le \{(n-1)p\}^{-2} (np + n^2 p^2 ) \lesssim (np)^{-1} + 1 \,.
\end{align*}
Therefore, we have 
$$
\bbE\{\|{A^*}^\top A^*\|_F^2\} = \sum_{i=1}^n \bbE\{({A^*}^\top A^*)_{ii}^2\} + \sum_{i \neq j}\bbE\{({A^*}^\top A^*)_{ij}^2\} \lesssim n\{(np)^{-1} + 1\} + n^2 (n^{-1} + p^2 ) \lesssim n + n^2p^2 \,,
$$
where the last inequality follows from $p \geqslant n^{-1}$.

{\bf Step 2: Computing Variance: }
For the variance part, we have: 
\begin{align*}
    \var(\|{A^*}^\top A^*\|_F^2) & = \{(n-1)p\}^{-4} \var(\|A^\top A\|_F^2) \\
    & \lesssim (np)^{-4} \var\{\sum_{i, j} (A^\top A)^2_{i,j}\} \\
    & = (np)^{-4} [\sum_{i, j}\var\{(A^\top A)^2_{i,j}\}+ \sum_{(i, j) \neq (k, l)} \cov\{(A^\top A)^2_{i,j}, (A^\top A)^2_{k,l}\}] \\
    & \triangleq (np)^{-4} (T_1 + T_2) \,.
\end{align*}
We next bound $T_1$ and $T_2$ separately. For that, we use some basic bounds on the moments of a binomial random variable: if $X \sim \operatorname{Bernoulli}(n, p)$, then $\bbE X^k \leqslant C n^kp^k$ for all $k \in \{1, 2, 3, 4\}$, for some universal constant $C$ as long as $np \rightarrow \infty$ (e.g., see \cite{rohe2011spectral, lei2015consistency}). 
Observe that $(A^\top A)_{ii} \sim \operatorname{Bernoulli}(n-1, p)$ and $(A^\top A)_{ij} \sim \operatorname{Bernoulli}(n-2, p^2)$ for $i \neq j$. 
For $T_1$, we have: 
\begin{align*}
\sum_{i, j}\var\{(A^\top A)^2_{ij}\} & = \sum_{i = 1}^n \var\{(A^\top A)^2_{ii}\} + \sum_{i \neq j }\var\{(A^\top A)^2_{ij}\}  \\
& \leqslant \sum_{i = 1}^n \bbE\{(A^\top A)^4_{ii}\} + \sum_{i \neq j }\bbE\{(A^\top A)^4_{ij}\} \lesssim n^5 p^4 + n^6p^8 \,.
\end{align*}

For $T_2$, note that if 
$(i, j, k, l)$ all are different, then covariance is $0$ as the terms are independent. Therefore, we only consider the cases when there are three or two distinct indices. We first deal with the terms of two distinct indices, i.e., $\cov\{(A^\top A)^2_{ii}, (A^\top A)^2_{ij}\}$ where $ i \neq j$. There are almost $n^2$ terms are of this form. For each of these type of terms: 
\begin{align*}
    \cov\{(A^\top A)^2_{ii}, (A^\top A)^2_{ij}\} & \leqslant \bbE \{(A^\top A)^2_{ii} (A^\top A)^2_{ij}\} \\
    & =  \bbE \{(\sum_{k, k' = 1}^n A^2_{ki}A_{k'i}A_{k'j})^2\} \\
    & = \bbE\{(\sum_k A_{ki}A_{kj} + \sum_{k \neq k'} A_{ki} A_{k'i}A_{k'j})^2\} \\
    & \le  2 [  \{\bbE(\sum_k A_{ki}A_{kj})^2\} + \bbE \{(\sum_{k \neq k'} A_{ki} A_{k'i}A_{k'j})^2\} ] \lesssim n^2 p^4 + n^4p^6 \,.
\end{align*}
Therefore, we have 
$$
\sum_{i \neq j}  \cov\{(A^\top A)^2_{ii}, (A^\top A)^2_{ij}\} \leqslant n^4 p^4 + n^6 p^6 \,. 
$$
Next, we bound the covariance terms of the form $\cov((A^\top A)^2_{ij}, (A^\top A)^2_{jk})$, i.e. two terms share an index with $i \neq j \neq k$. There are almost $n^3$ such terms. For each term, we have 
\begin{align*}
    \cov\{(A^\top A)^2_{ij}, (A^\top A)^2_{jk}\} & \leqslant \bbE \{(\sum_{l, l'} A_{li}A_{lj}A_{l'i}A_{l'k})^2\} \\
    & = \bbE \{(\sum_l A^2_{li}A_{lj}A_{lk} + \sum_{l \neq l'} A_{li}A_{lj}A_{l'i}A_{l'k})^2\} \\
    & \le 2 [\bbE \{(\sum_l A_{li}A_{lj}A_{lk})^2\} + \bbE \{(\sum_{l \neq l'} A_{li}A_{lj}A_{l'i}A_{l'k})^2\}]  \lesssim n^2p^6 + n^4p^8 \,.
\end{align*}
As there are almost $n^3$ terms in this form, we have 
$$
\sum_{i \neq j \neq k} \cov\{(A^\top A)^2_{ij}, (A^\top A)^2_{jk}\} \lesssim n^5p^6 + n^7p^8 \,.
$$ 
This implies $T_2 \lesssim n^4p^4 + n^6p^6 + n^5p^6 + n^7p^8$. Combining the bounds on $T_1$ and $T_2$, we conclude 
$$
\var(\|A^\top A\|_F^2) \lesssim n^5 p^4 + n^6p^8 + n^4p^4 + n^6p^6 + n^5p^6 + n^7p^8 \lesssim n^5 p^4 + n^6p^6 + n^7p^8 \,,
$$
where the last inequality follows from the fact that $n^5p^4 \ge n^4 p^4$, $n^6p^6 \ge n^6p^8$ and $n^6p^6 \ge n^4p^4$ as $np \rightarrow \infty$. As a consequence, we have 
$$\var(\|{A^*}^\top A^*\|_F^2) \lesssim (np)^{-4} \var(\|A^\top A\|_F^2) \lesssim n + n^2p^2 + n^3p^4 \,.
$$

{\bf Step 3: Chebychev's inequality: }The last step involves an application of Chebychev's inequality: 
\begin{align*}
    \bbP(\|{A^*}^\top A^*\|_F^2 - \bbE\{\|{A^*}^\top A^*\|_F^2\} \geqslant n + n^2p^2) & \leqslant \var(\|{A^*}^\top A^*\|_F^2)/(n + n^2p^2)^2 \\
    & \lesssim n^{-1} \,.
\end{align*}
Therefore, we have $\|{A^*}^\top A^*\|_F^2 \leqslant \bbE (\| {A^*}^\top A^* \|_F^2) + n + n^2p^2 \lesssim n + n^2p^2$ with probability $1 - n^{-1}$.
\end{proof}

\subsection{Proof of lemma \ref{lemma: expectationZTZ}}
\begin{proof}
Note that the matrix $\bZ$ can be written as: 
$$
\bZ = \begin{bmatrix}
    A^* & I
\end{bmatrix} \begin{pmatrix}
    \bX & \mathbf{0}_{n \times p} \\
    \mathbf{0}_{n \times p} & \bX 
\end{pmatrix} \,.
$$
Therefore we have: 
$$
\bZ^\top \bZ = \begin{pmatrix}
    \bX^\top {A^*}^\top  {A^*} \bX & \bX^\top {A^*}^\top \bX \\
    \bX^\top A^* \bX & \bX^\top \bX \,
\end{pmatrix}
$$
Recall that $A_{ii} = 0$ and $ A^*_{ij} \sim \{(n-1)p\}^{-1/2}\ber(p)$. First we show that $\bbE[\bX^\top A^* \bX] = 0$. Towards that end, as $A_{ii} = 0$, 
\begin{align*}
    \bbE[\bX^\top A \bX] = \bbE\left[\bX^\top \left(\sum_{i \neq j}A_{ij} e_ie_j^\top\right)\bX \right] & = \bbE\left[\sum_{i \neq j} A_{ij} (\bX^\top e_i)(e_j^\top \bX)\right] \\
    & = \bbE\left[\sum_{i \neq j} A_{ij} \bx_i \bx_j^\top \right]
\end{align*}
Now as rows of $\bX$ are independent and have mean 0 and $A \indep \bX$, we have: 
$$
\bbE[A_{ij}\bx_i\bx_j^\top] = \bbE[A_{ij}]\bbE[\bx_i] \bbE[\bx_j^\top] = 0 \,.
$$
This establishes $\bbE[\bX^\top A \bX] = 0$. For the bottom right term of $\bZ^\top \bZ$, we have: 
\begin{align*}
    \frac{1}{(n-1)p}\bbE\left[\bX^\top A^\top A \bX\right] & = \frac{1}{(n-1)p}\sum_{i} \bbE[(A^\top A)_{ii}\bx_i \bx_i^\top] + \frac{1}{(n-1)p}\sum_{i \neq j} \bbE[(A^\top A)_{ij}\bx_i \bx_j^\top] \\
    & = \frac{1}{(n-1)p}\sum_{i} \bbE[(A^\top A)_{ii}\bx_i \bx_i^\top] \hspace{0.1in} [\because \bx_i \indep \bx_j \ \text{ for } i \neq j \text{ and have mean }0] \\
    & = \Sigma_X \left(\frac{1}{(n-1)p}\sum_{i} \bbE[(A^\top A)_{ii}]\right) \\
    & = \Sigma_X \left(\frac{1}{(n-1)p}\sum_{i} \bbE\left[\sum_{j = 1}^n A_{ji}^2\right]\right) = \Sigma_X n \,.
\end{align*}
The above calculation implies: 
$$
\frac{1}{n}\bbE\left( \bZ^\top \bZ\right)  = \begin{pmatrix}
    \Sigma_X & \b0 \\
    \b0 & \Sigma_X 
\end{pmatrix} = \begin{pmatrix}
        1 &  0 \\
        0 & 1
    \end{pmatrix} \otimes \Sigma_X \triangleq \Sigma_Z \,.
$$
\end{proof}



\section{Trans-NCR Algorithm}

\begin{algorithm}
\caption{Trans-NCR Algorithm}
\begin{algorithmic}
\STATE \textbf{Input}: Target data ($\mathbf{y}_0,\mathbf{X}_0,A_0$), source data $\{\mathbf{y}_k,\mathbf{X}_k,A_k\}_{k=1}^K$.
\STATE \underline{\textbf{Step 1.}} Let \(\mathcal{I}\) be a random subset of \(\{1, \ldots, n_{0}\}\) such that \(|\mathcal{I}| \approx c_0 n_{0}\) with some constant $0 < c_0 < 1$. Let \(\mathcal{I}^{c}=\{1, \ldots, n_{0}\} \backslash \mathcal{I}\).
\STATE \underline{\textbf{Step 2.}} Construct \(L+1\) candidate sets \(\mathcal{A},\left\{\widehat{G}_{0}, \widehat{G}_{1}, \ldots, \widehat{G}_{L}\right\}\) such that \(\widehat{G}_{0}=\emptyset\) and \(\widehat{G}_{1}, \ldots, \widehat{G}_{L}\) are based on (\ref{eq:def_Gl}) using \(\left(\mathbf{Z}_{0, \mathcal{I}}, \mathbf{y}_{0, \mathcal{I}}\right)\) and \(\left\{\mathbf{Z}_{k}, \mathbf{y}_{k}\right\}_{k=1}^{K}\).
\STATE \underline{Step 2.1.} For \(1 \leq k \leq K\), compute the marginal statistics $\hat{R}_k=\|\widehat\Delta_k\|_2^2$. For each \(k \in\{1, \ldots, K\}\), let \(\widehat{T}_{k}\) be obtained by SURE screening such that
\[
\widehat{T}_{k}=\left\{1 \leq j \leq 2d:\left|\widehat\Delta_{kj}\right| \text{ is among the first } t_{*} \text{ largest of all }\right\}
\]
for a fixed \(t_{*}=n_{*}^{\alpha}, 0 \leq \alpha<1\).
\STATE \underline{Step 2.2.} Define the estimated sparse index for the \(k\)-th auxiliary sample as 
$\widehat{R}_k=\left\|\widehat{\Delta}_{k\widehat{T}_{k}}\right\|_{2}^{2}$.
\STATE \underline{Step 2.3.} Compute \(\widehat{G}_{l}\)  for \(l=1, \ldots, L\).
\STATE \underline{\textbf{Step 3.}} For each \(0 \leq l \leq L\), run the Oracle Trans-Lasso algorithm with primary sample \(\left(\mathbf{Z}_{0, \mathcal{I}}, \mathbf{y}_{0, \mathcal{I}}\right)\) and auxiliary samples \(\left\{\mathbf{Z}_{k}, \mathbf{y}_{k}\right\}_{k \in \widehat{G}_{l}}\). Denote the output as \(\hat{\gamma}\left(\widehat{G}_{l}\right)\) for \(0 \leq l \leq L\).
\STATE \underline{\textbf{Step 4.}} Compute $\hat{\theta}$ as in \eqref{eq:thetahat} for some \(\lambda_{\theta}>0\).
\STATE \underline{\textbf{Step 5.}} Calculate
\[
\hat{\gamma}^{\hat{\theta}}=\sum_{l=0}^{L} \hat{\theta}_{l} \hat{\gamma}\left(\widehat{G}_{l}\right).
\]
\STATE \textbf{Output}: $\hat\gamma_{\hat\theta}$.
\end{algorithmic}
\label{al:source_selection_alg}
\end{algorithm}

\section{Additional Figures}

\begin{figure}[H]
\centering
\includegraphics[width=.25\textwidth]{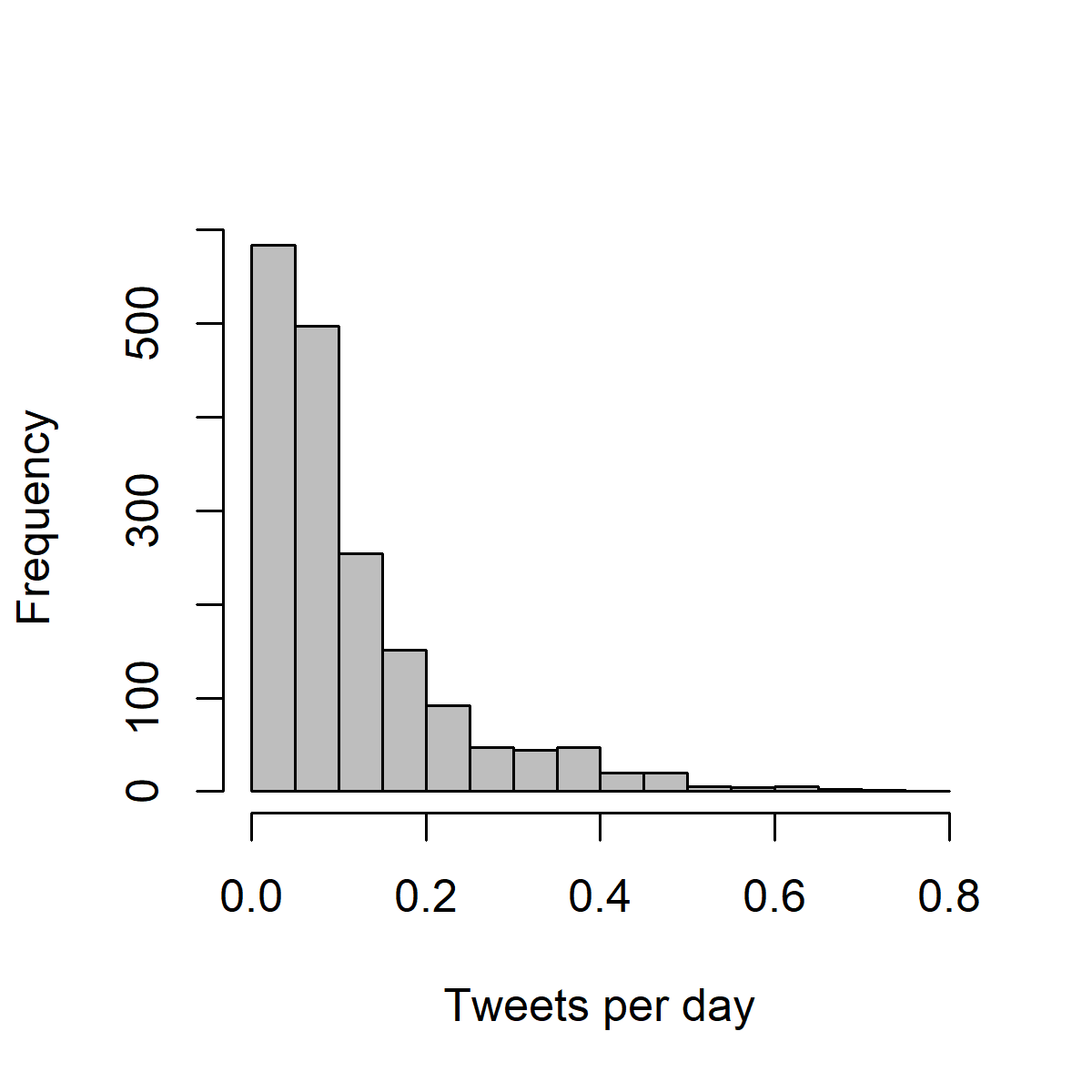}\hfill
\includegraphics[width=.25\textwidth]{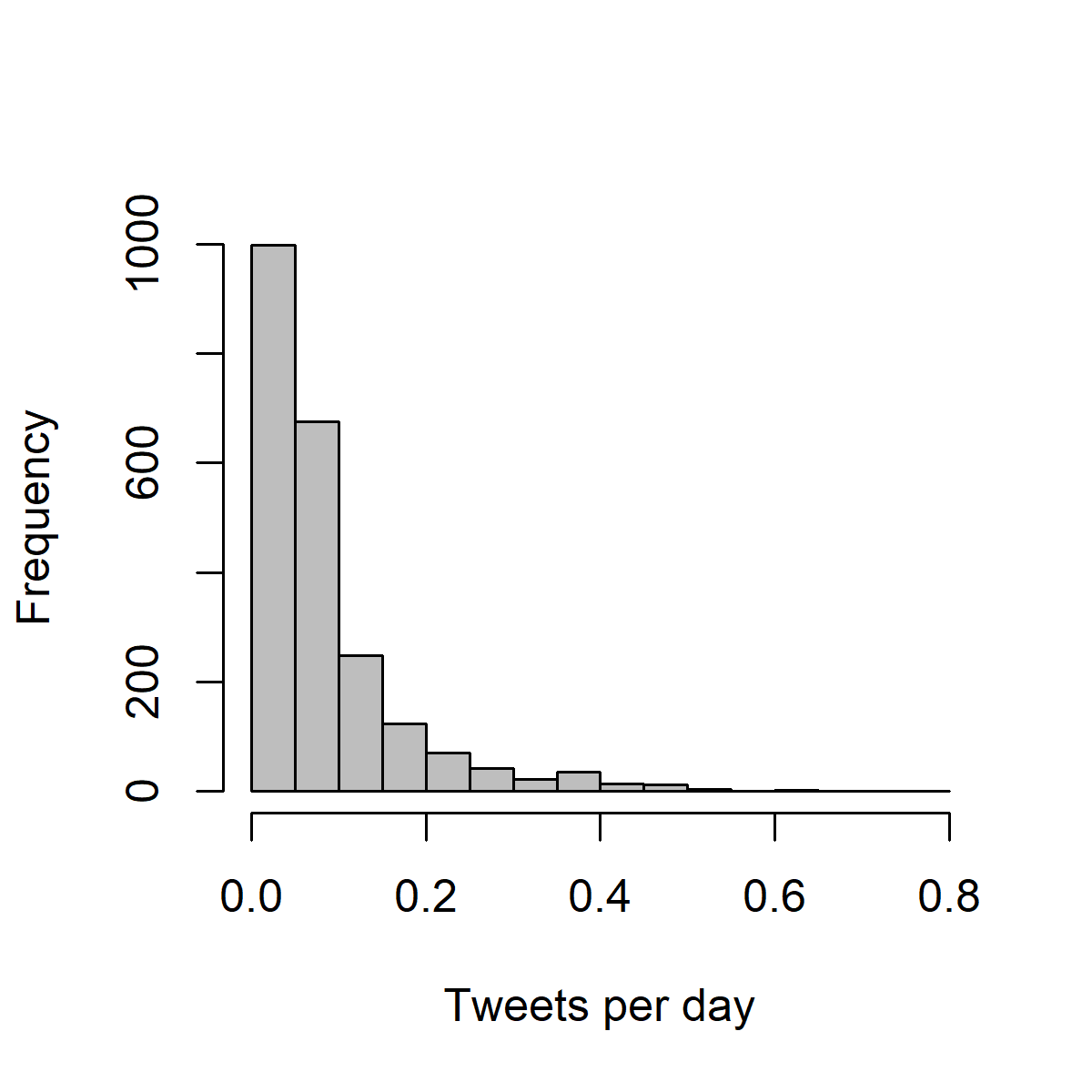}\hfill
\includegraphics[width=.25\textwidth]{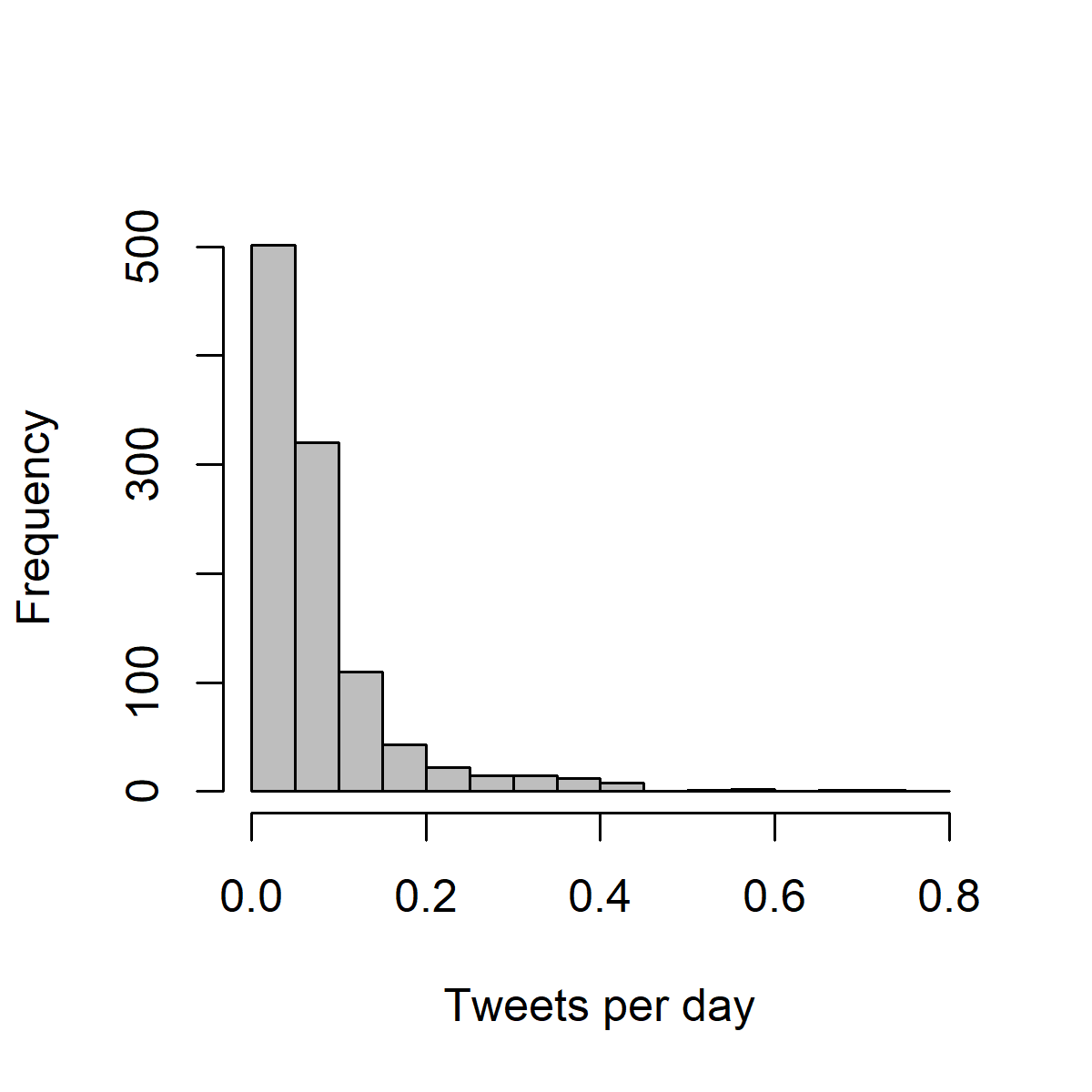}\hfill
\includegraphics[width=.25\textwidth]{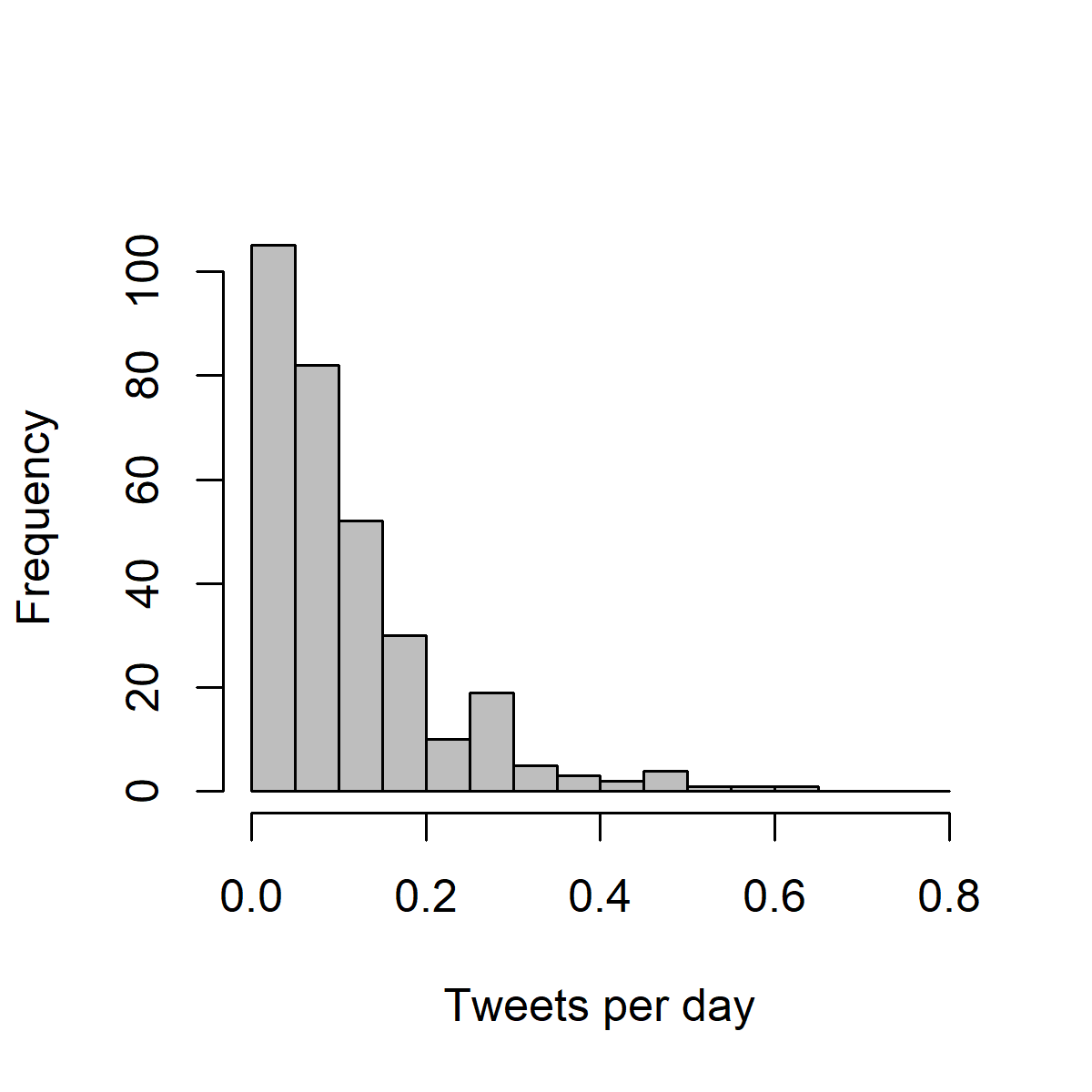}\hfill
\includegraphics[width=.25\textwidth]{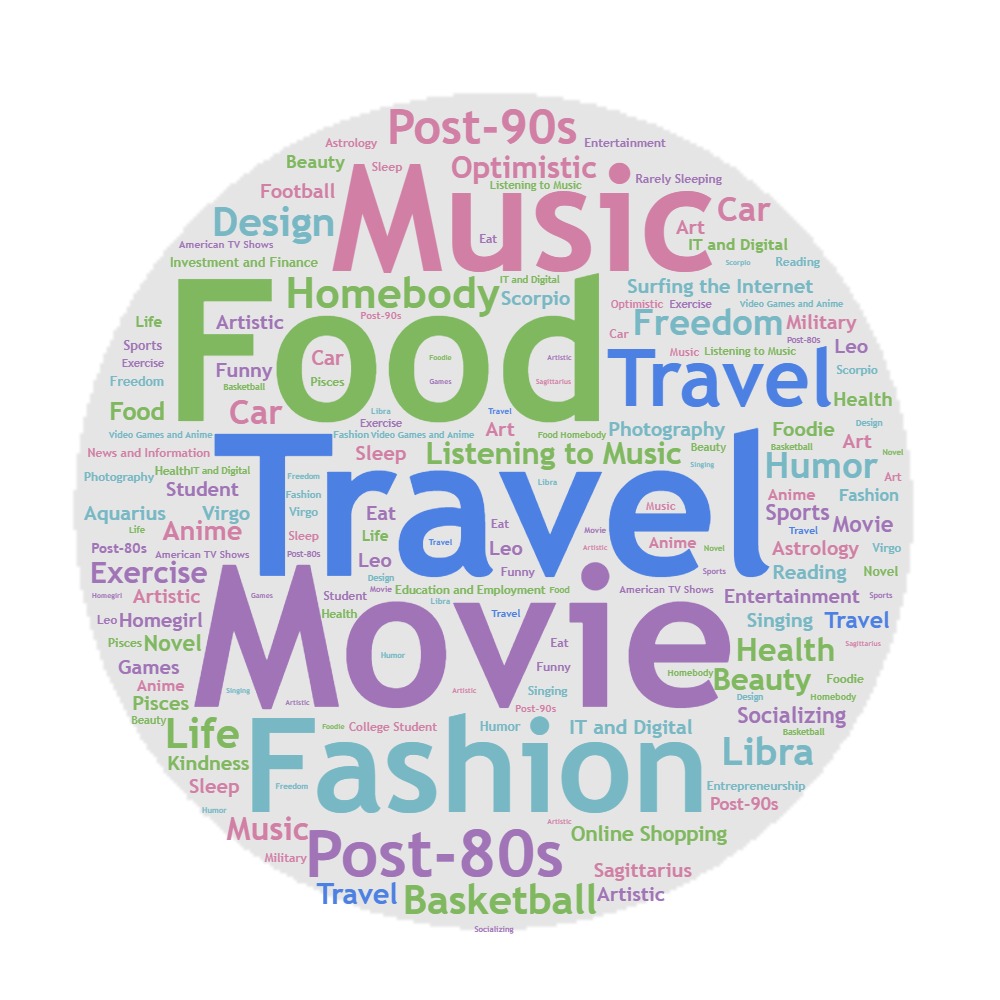}\hfill
\includegraphics[width=.25\textwidth]{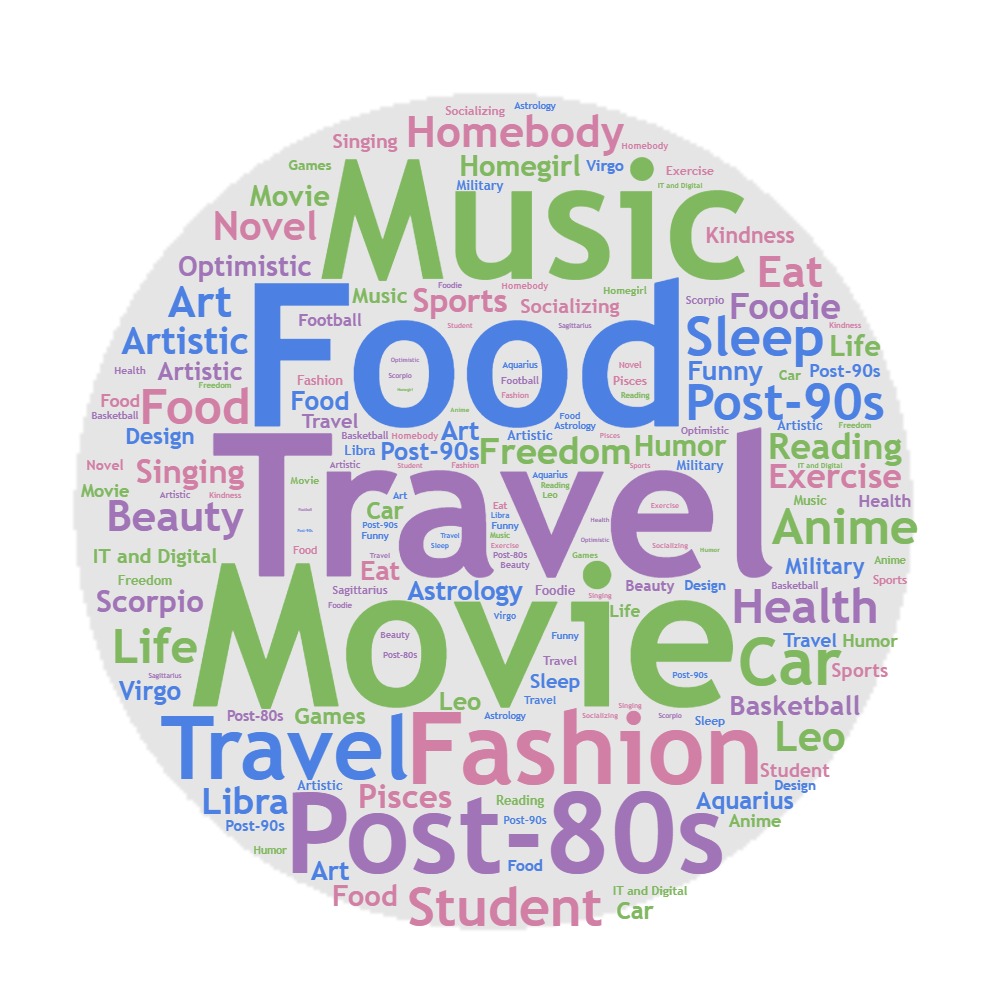}\hfill
\includegraphics[width=.25\textwidth]{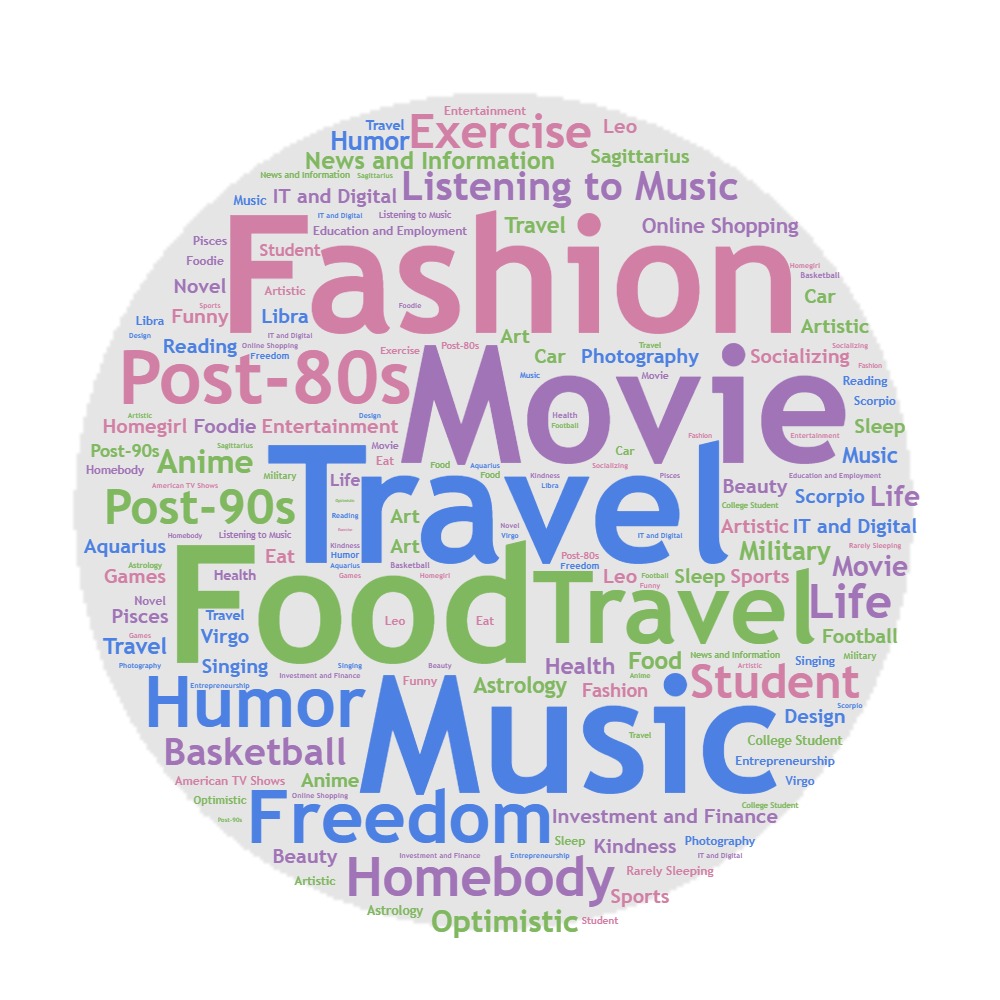}\hfill
\includegraphics[width=.25\textwidth]{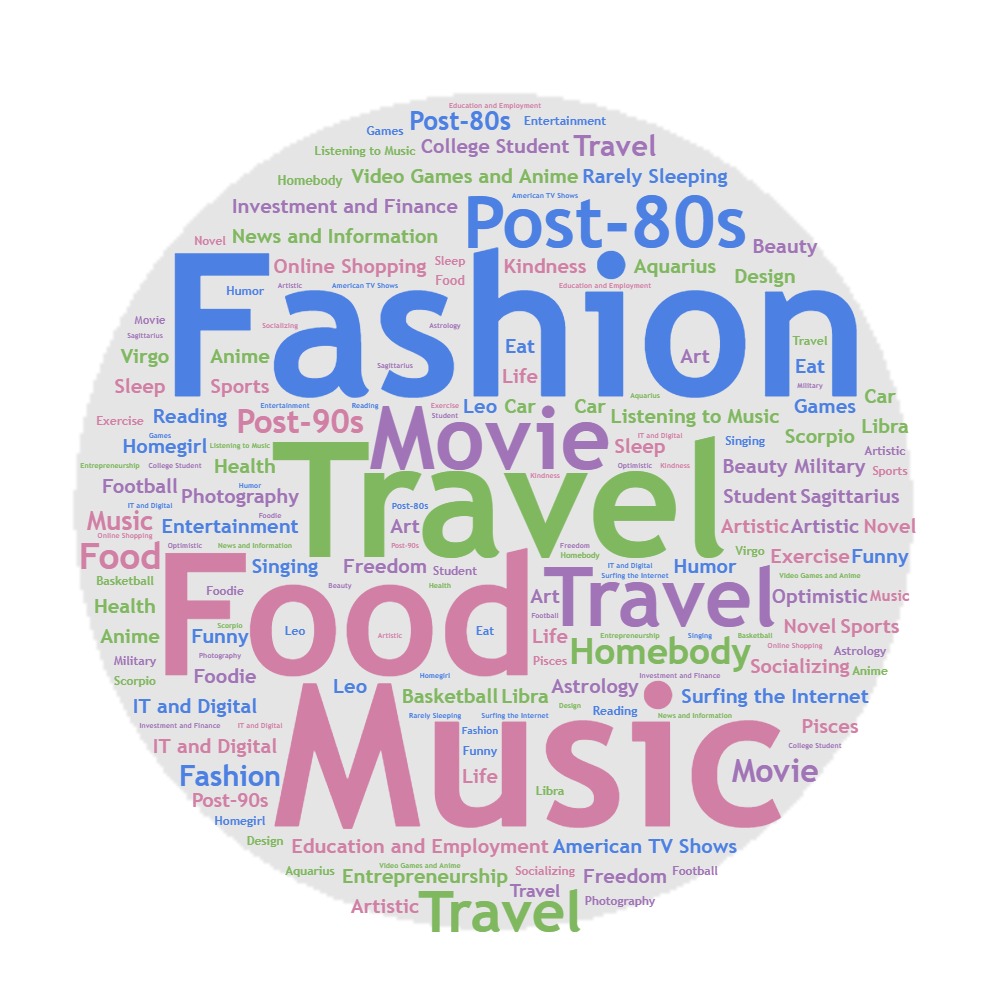}\hfill
\centering
\caption{\color{black}The upper panel displays histograms of tweets per day across different provinces, illustrating the frequency distribution. The lower panel presents word clouds representing user interests in each province, with word size indicating the relative frequency of each tag. From left to right: Beijing, Shanghai, Fujian, Liaoning.}\label{fig:hist}
\end{figure}

\bibliographystyle{apalike}
\bibliography{Mybib}

\end{document}



\def\spacingset#1{\renewcommand{\baselinestretch}%
{#1}\small\normalsize} \spacingset{1}


\if1\blind
{
  \title{\bf Title}
  \author{Author 1\thanks{
    The authors gratefully acknowledge \textit{please remember to list all relevant funding sources in the unblinded version}}\hspace{.2cm}\\
    Department of YYY, University of XXX\\
    and \\
    Author 2 \\
    Department of ZZZ, University of WWW}
  \maketitle
} \fi

\if0\blind
{
  \bigskip
  \bigskip
  \bigskip
  \begin{center}
    {\LARGE\bf  Supplementary Material of "Transfer Learning Under High-Dimensional  Network Convolutional Regression Model"}
\end{center}
  \medskip
} \fi
\spacingset{1.9} 

\setcounter{section}{0}
\setcounter{table}{0}
\setcounter{subsection}{0}
\renewcommand{\theequation}{\arabic{equation}}
\renewcommand{\thetheorem}{S.\arabic{theorem}}
\renewcommand{\thelemma}{S.\arabic{lemma}}
\renewcommand{\thesection}{S.\arabic{section}}
\setcounter{figure}{0}
\renewcommand{\thefigure}{S.\arabic{figure}}
\renewcommand{\thetable}{S.\arabic{table}}
\renewcommand{\thealgorithm}{S.\arabic{algorithm}}

\section{Notation}
\label{sec:notation_table}
We present the detailed expressions of notations frequently referenced in the proposed model and algorithm in Table \ref{tab:notation}.

\renewcommand{\arraystretch}{0.8}
\begin{table}[!htb]
\caption{Notations}\label{tab:notation}
\centering
\begin{tabular}{ll}
\hline \hline
Notations  & Description                        \\
\hline
$n_0$ & number of nodes in target network    \\
$A_0 \in \{0,1\}^{n_0 \times n_0}$  & adjacency matrix of target task without self-loops  \\
$A^*_0$  & normalized adjacency matrix of target task \\
$p_0$ & parameter of Erd\H{o}s--R\'enyi (ER) random graph model for target network \\
$ d $ & feature dimension \\
$\bX_0 \in \mathbb{R}^{n_0 \times d}$      & covariate matrix of target task \\
$\bY_0 \in \mathbb{R}^{n_0}$  & response vector of target task \\
$n_k$ & number of nodes in $k$-th source network \\
$A_k \in \{0,1\}^{n_k \times n_k}$  & adjacency matrix of $k$-th source task \\
$A^*_k$  & normalized adjacency matrix of $k$-th source task \\
$p_k$ & parameter of Erd\H{o}s--R\'enyi (ER) random graph model for $k$-th source network \\
$\bX_k \in \mathbb{R}^{n_k \times d}$ &  covariate matrix of $k$-th source task \\
$\bY_k \in \mathbb{R}^{n_k}$ & response vector of $k$-th source task\\
$\beta_{00} \in \mathbb{R}^{d}$ & true network effect of target task\\
$\beta_{01} \in \mathbb{R}^{d}$ & true individual effect of target task\\
$\gamma_0 \in \mathbb{R}^{2d}$ & true regression coefficient of target task\\
$\beta_{k0} \in \mathbb{R}^{d}$ & true network effect of $k$-th source task\\
$\beta_{k1} \in \mathbb{R}^{d}$ & true individual effect of $k$-th source task\\
$\gamma_k \in \mathbb{R}^{2d}$ & true regression coefficient of $k$-th source task\\
$\delta_k$  &  difference between the  $k$-th source and the target task's coefficient       \\
$\mathcal{A}$  & the set of $h-$level transferable source data\\
$n$  & total number of nodes in target network
and source sample  \\
$\gamma^\mathcal{A}$  & true underlying coefficient related to the pooled dataset  \\
$\hat{\gamma}_0$  &  estimate for $\gamma_{0}$ obtained using our algorithms \\
\hline \hline
\end{tabular}
\end{table}

\renewcommand{\arraystretch}{1.0}
\section{Auxiliary lemmas}

In this subsection, we present four lemmas to support the proof of the theorems. To characterize the deviation of \(\hat{p}\) from \(p\), we utilize Chernoff's inequality, stated in Lemma \ref{lemma:chernoff inequality}, which is derived from Exercise 2.3.5 in \cite{vershynin2018high}. Next, we introduce the Hanson-Wright inequality in Lemma \ref{lemma:hwinequality} and \ref{lemma:hwinequalityforXY}, as provided in Theorem 6.2.1 of \cite{vershynin2018high}. 
In Lemma \ref{lemma:ubforA}, we establish upper bounds on the norms of \(A^*\) and \({A^*}^\top A^*\) to support subsequent proofs, and in Lemma \ref{lemma: expectationZTZ}, we provide an identity related to the covariance matrix of the dependent random matrix $\bZ$. 

\begin{lemma}[Chernoff's inequality for small deviations]
    \label{lemma:chernoff inequality}
	Let $X_i$ be independent Bernoulli random variables with parameters $p_i$. Consider their sum $S_N=\sum_{i=1}^N X_i$ and denote its mean by $\mu = \mathbb{E}S_N$. Then for $\delta \in \left(0, 1 \right] $ we have 
    $$
    \bbP(\left|S_N-\mu\right| \geqslant \delta \mu) \leqslant 2\exp\left\lbrace -c\mu\delta^2\right\rbrace \,.
    $$
\end{lemma}

\begin{lemma}[Hanson-Wright inequality]
	\label{lemma:hwinequality}
    Let $X=\left(X_1, \ldots, X_n\right) \in \mathbb{R}^n$ be a random vector with independent, mean zero, sub-gaussian coordinates. Let $B$ be an $n\times n$ matrix. Then, for every $t \geqslant 0$, we have
    $$
    \mathbb{P}\left\{\left|X^{\top} B X-\mathbb{E} X^{\top} B X\right| \geq t\right\} \leq 2 \exp \left[-c \min \left(\frac{t^2}{K^4\|B\|_F^2}, \frac{t}{K^2\|B\|}\right)\right], 
    $$
    where  $K=\max _i\left\|X_i\right\|_{\psi_2}$.
\end{lemma}

The following is a straightforward corollary of Hanson-Wright inequality for quadratic forms of two independent random vectors.
\begin{lemma}[Hanson-Wright inequality for two independent random vector]
	\label{lemma:hwinequalityforXY}
    Let $X=\left(X_1, \ldots, X_{n_1}\right) \in \mathbb{R}^{n_1}$ and $Y=\left(Y_1, \ldots, Y_{n_2}\right) \in \mathbb{R}^{n_2}$ be two independent random vectors with independent, mean zero, sub-gaussian coordinates. Let $B$ be an $n_1\times n_2$ matrix. Then, for every $t \geqslant 0$, we have
    $$
    \mathbb{P}\left\{\left|X^{\top} B Y\right| \geq t\right\} \leq 2 \exp \left[-c \min \left(\frac{t^2}{K^4\|B\|_F^2}, \frac{t}{K^2\|B\|}\right)\right], 
    $$
    where  $K=\max (\max _i\left\|X_i\right\|_{\psi_2},\max_j\left\|Y_j\right\|_{\psi_2})$.
\end{lemma}



\begin{lemma}[Upper bound for $A^*$ and ${A^*}^\top A^*$]
	\label{lemma:ubforA}
	(\romannumeral 1) For the $\ell_2$ norm and Frobenius norm of $A^*$, we have:
	\begin{equation*}
		\label{eq:A_op_bound}
		\Pr(\|A^*\|_2 \leqslant 2 \sqrt{np})=1 - \exp ( \log{n} - np/c),
	\end{equation*}
	\begin{equation*}
		\label{eq:A_frob_bound}
			\Pr(\| A^* \|_F^2 \leqslant 2n)=1 - 2( n^2 p)^{-1}.
	\end{equation*} 
	For the operator norm and Frobenius norm of ${A^*}^\top A^*$, we have
	\begin{equation*}
		\label{eq:ATA_op_bound}
		\Pr(\|{A^*}^\top A^*\|_2 \leqslant 4 np)=1 - \exp ( \log{n} - np/c),
	\end{equation*}
	\begin{equation*}
		\label{eq:ATA_frob_bound}
		\Pr(\|{A^*}^\top A^*\|_F^2 \leqslant 2n + 2n^2p^2)=1 - n^{-1}.
	\end{equation*} 
\end{lemma}

\begin{lemma}[Expectation of $\bZ^\top \bZ$]
	\label{lemma: expectationZTZ}
    $$
	\frac{1}{n}\bbE\left( \bZ^\top \bZ\right)  = \begin{pmatrix}
    \Sigma_X & \b0 \\
    \b0 & \Sigma_X 
    \end{pmatrix} = \begin{pmatrix}
        1 &  0 \\
        0 & 1
    \end{pmatrix} \otimes \Sigma_X \triangleq \Sigma_Z \,.
$$
\end{lemma}

\section{Proof of Theorems}

\subsection{Proof of Theorem \ref{thm: general model upper bound without transfer learning}}

To prove Theorem \ref{thm: general model upper bound without transfer learning}, we divide the proof into two steps. In the first step, we utilize the fact that \(\hat{\gamma}\) minimizes the objective function, obtaining an upper bound on the deviation of \(\hat{\gamma}\) from its true value \(\gamma\). However, this upper bound is related to \(\lambda\). Therefore, in the second step, we will provide the relationship between \(\lambda\), the sample size \(n\), and the network parameters \(p\), thereby completing the proof.

{\bf Step 1:} Recall that, we model the response variable as: $\by = A^* \bX \beta_0 + \bX \beta_1 + \epsilon = \bZ \gamma + \epsilon$. Furthermore, as we do not know $p$, we replace it by $\hat p$, i.e., we replace $\bZ$ by $\hat \bZ$. Therefore, our estimator $\hat \gamma$ is defined as: 
$$
\hat \gamma = \argmin_{\gamma} \left\{\frac{1}{2n}\|\by - \hat \bZ \gamma  \|_{2}^{2} + \lambda \|\gamma \|_{1}\right\}
$$
Denote $\hat{\mu} = \hat{\gamma} - \gamma$. Considering that $\hat{\gamma}$ is the minimizer of the objective function, we have the following inequalities:
\begin{equation}
	\label{eq:minimizer of obj func without transfer}
	\frac{1}{2n}\left\| \by - \hat{\bZ} \hat{\gamma}  \right\|_{2}^{2} + \lambda \left\| \hat{\gamma} \right\|_{1}
	\leqslant \frac{1}{2n}\left\|  \by - \hat{\bZ} \gamma \right\|_{2}^{2} + \lambda \left\| \gamma \right\|_{1}.
\end{equation}
Let $S$ denote the support set of $\gamma$ with cardinality $\left| S\right| = s_1 + s_2$. Because of $\by = \bZ \gamma + \epsilon$ and $\hat{\gamma} = \hat{\mu} + \gamma$, formula \eqref{eq:minimizer of obj func without transfer} can be simplifies to 
\begin{equation}
	\label{eq:obj func without transfer 1}
\begin{aligned}
\frac{1}{2n} \left\| \hat{\bZ} \hat{\mu} \right\|_{2}^{2} &\leqslant \frac{1}{n} \left\langle \epsilon + \left(\bZ-\hat{\bZ}\right) \gamma , \hat{\bZ} \hat{\mu} \right\rangle + \lambda \left\| \gamma \right\|_{1} - \lambda \left\| \hat{\gamma} \right\|_{1} \\
& = \frac{1}{n} \left\langle \hat{\bZ}^{\top} \epsilon + \hat{\bZ}^{\top} \left(\bZ-\hat{\bZ}\right) \gamma, \hat{\mu} \right\rangle + \lambda \left\| \gamma \right\|_{1} - \lambda \left\| \hat{\gamma} \right\|_{1},
\end{aligned}
\end{equation}
where the second equation is due to the duality property of the inner product.
Let the penalty parameter $\lambda$ satisfies $n^{-1}\|\hat \bZ^{\top} \epsilon + \hat{\bZ}^{\top} (\bZ-\hat{\bZ}) \gamma \|_{\infty} \leqslant \lambda/2$ with high probability. 
In this case, using H{\"o}lder inequality, the right-hand side of formula \eqref{eq:obj func without transfer 1} will take the form of ${\lambda} \| \hat{\mu}\|_1 / 2  + \lambda \| \gamma \|_{1} - \lambda \|\hat{\gamma} \|_{1}$. Since $\gamma_{S^{c}}=0$, we have $\|\gamma\|_{1}=\|\gamma_{S}\|_{1}$, $\hat{\mu} + \gamma = \hat{\gamma}$ and $\| \hat{\gamma} \|_{1} = \|\gamma + \widehat{\mu}\|_{1} = \| \gamma_{S}+\widehat{\mu}_{S} \|_{1} + \|\widehat{\mu}_{S^{c}}\|_{1} \geqslant \|\gamma_{S}\|_{1} -\|\widehat{\mu}_{S}\|_{1} + \|\widehat{\mu}_{S^{c}}\|_{1}$.
Substituting these relations into formula \eqref{eq:obj func without transfer 1} yields $(2n)^{-1} \| \hat{\bZ} \hat{\mu} \|_{2}^{2} \leqslant {3\lambda}\|\widehat{\mu}_{S}\|_{1} / 2 - {\lambda} \|\hat{\mu}_{S^{c}}\|_{1} / 2$. Using the fact $\|\widehat{\mu}_{S}\|_{1} \leqslant \sqrt{s_1 + s_0}\|\widehat{\mu}\|_{2}$ concluded from Cauchy inequality, we have 
\begin{equation}
	\label{eq: upper bound without trans befor RSC}
\frac{1}{2n} \left\| \hat{\bZ} \hat{\mu} \right\|_{2}^{2} \leqslant 3 \sqrt{s_0 + s_1} \lambda \left\|\hat{\mu} \right\|_{2} / 2.
\end{equation}
From the cone constraint \(0 \leqslant \frac{3}{2} \|\hat{\mu}_{S}\|_{1} - \frac{1}{2} \|\hat{\mu}_{S^{c}}\|_{1}\), 
we have $\|\hat{\mu}_{S^{c}}\|_{1} \leqslant 3\|\hat{\mu}_{S}\|_{1}$ and $\|\hat{\mu}\|_{1} = \|\hat{\mu}_{S}\|_{1} + \|\hat{\mu}_{S^{c}}\|_{1} \leqslant 4\|\hat{\mu}_{S}\|_{1} \leqslant 4 \sqrt{s}\|\hat{\mu}\|_{2}$. From Theorem \ref{thm:RSC}, we have
$$\kappa \|\hat{\mu}\|_2^2 - C_1 \log{d} \|\hat{\mu}\|_1 \|\hat{\mu}\|_2  \sqrt{\frac{\Psi(p)}{n}} - C_2 \frac{\sqrt{\log d}}{n} \|\hat{\mu}\|_1^2  \lesssim \sqrt{s_0 + s_1} \lambda_{\gamma} \|\hat{\mu}\|_2 \, .
$$
Denote $s = s_0+ s_1$.  Using the fact $\|\hat{\mu}\|_1 \le 4\sqrt{s}\|\hat{\mu}\|_{2}$
we have: 
\begin{align*}
& \kappa \|\hat{\mu}\|_2^2 - C_1 \log{d} \|\hat{\mu}\|_1 \|\hat{\mu}\|_2 \sqrt{\frac{\Psi(p)}{n}} - C_2 \frac{\sqrt{\log{d}}}{n}\|\hat{\mu}\|_1^2 \\
& \ge \left(\kappa - C_3 \log{d} \sqrt{\frac{s\Psi(p)}{n}} - C_4 \frac{s\sqrt{\log{d}}}{n}\right)\|\hat \mu\|_2^2 \\
& \ge \frac{\kappa}{2}\|\hat \mu\|_2^2 \,,
\end{align*}
where we use the assumption (C3) $s\log^2(d)\Psi(p)/n \leqslant s\log^2(d) / np = o(1)$. 
Combining this lower bound with formula \eqref{eq: upper bound without trans befor RSC} yields 
$\kappa \|\widehat{\mu}\|_{2}^{2}/2 \leqslant 3  \lambda \sqrt{s_0 + s_1} \|\widehat{\mu}\|_{2}$, so we have
\begin{equation}
	\label{eq:non-transfer result with lambda}
	\left\|\hat{\gamma} - \gamma \right\|_{2}^2 = \left\|\widehat{\mu}\right\|_{2}^2 \lesssim (s_1 + s_0) \lambda^2.
\end{equation}

{\bf Step 2: }In Step 1, we choose the penalty parameter $\lambda$ such that $n^{-1}\|\hat \bZ^{\top} \epsilon + \hat{\bZ}^{\top} (\bZ-\hat{\bZ}) \gamma \|_{\infty} \leqslant \lambda/2$ with high probability. To obtain the order of this $\lambda$, we need to find an upper bound on $n^{-1}\|\hat \bZ^{\top} \epsilon + \hat{\bZ}^{\top} (\bZ-\hat{\bZ}) \gamma \|_{\infty}$. Application of a traingle inequality along with observation $\hat \bZ - \bZ = [(\hat A - A^*)\bX \quad \mathbf{0}]$ yields:  
\begin{align}
\label{eq:lambda_upper_bound}
    & n^{-1}\|\hat \bZ^{\top} \epsilon + \hat{\bZ}^{\top} (\bZ-\hat{\bZ}) \gamma \|_{\infty} \notag \\
    \leqslant & n^{-1}\left\{\|\hat \bZ^{\top} \epsilon\|_\infty + \| \hat{\bZ}^{\top} (\bZ-\hat{\bZ}) \gamma\|_\infty\right\} \notag \\
    \leqslant & n^{-1}\left\{\|\hat A^\top \bX^\top \eps\|_\infty + \|\bX^\top \eps\|_\infty + \|\bX^\top \hat A^\top(A^* - \hat A)\bX\beta_0\|_\infty + \|\bX^\top (A^* - \hat A)\bX\beta_0\|_\infty\right\} \,. 
\end{align}
We now show that all of these terms can be upper bounded by $C\sqrt{\log{d}/n}$ with high probability for some constant $C > 0$, which will conclude that one may take $\lambda = 2C\sqrt{\log{d}/n}$. 
We first present some bounds on $|\hat{p}-p|$ and norms of $\hat{A}$. 

\textbf{Step 2.1: Bound for $|\hat{p} - p|$ and $\hat{A}$.}
Recall that in the definition of $\hat A$, we scale $A_{ij}$ by $\sqrt{(n-1)\hat p}$ where $\hat p = D/n(n-1)$, $D = \sum_{i \neq j} A_{ij}$. It is immediate that $\bbE[\hat p] = p$. A simple application of Chernoff's inequality (Lemma \ref{lemma:chernoff inequality}) yields: 
$$
\bbP ( |\hat{p} - p| \geqslant \delta p )  = \bbP \{|D - n(n-1) p| \geqslant \delta n(n-1) p \}  \leqslant \exp \{-cn(n-1)p\delta^2\} \,.
$$
Taking $\delta = n^{-1/2}$, we conclude that with probability greater than $1 - \exp{(-c(n-1)p)}$ we have $|\hat p - p| \le p/\sqrt{n}$. Using this, we obtain the following: 
\begin{align*}
    \left|\hat A_{ij} - A^*_{ij}\right| \le \frac{|A_{ij}|}{\sqrt{(n-1)}}\left|\frac{1}{\sqrt{\hat p}} - \frac{1}{\sqrt{p}}\right| & \le \frac{|A_{ij}|}{\sqrt{(n-1)}} \frac{|p - \hat p|}{\sqrt{p\hat p}(\sqrt{p} + \sqrt{\hat p})} \\
    & \le \frac{|A_{ij}|}{\sqrt{n-1}}\frac{\sqrt{2}|\hat p - p|}{p^{3/2}} \le \frac{\sqrt{2}|A_{ij}|}{\sqrt{n(n-1)p}} = \frac{\sqrt{2}|A^*_{ij}|}{\sqrt{n}}\,.
\end{align*}
We now use the above inequality and Lemma \ref{lemma:ubforA} to bound the operator and Frobenious norm of $\hat A - A$. In particular, we have  
\begin{equation}
	\label{eq:2 and F norm for hat A - A}
	\begin{aligned}
		&\left\| \hat{A} - A^* \right\|_2 \leqslant \left\| A^*\right\|_2/\sqrt{n} \lesssim \sqrt{p} , \\
		&\left\|\hat{A} - A^*\right\|_F^2 \leqslant \left\| A^*\right\|_F^2/n \lesssim 1,
	\end{aligned}
\end{equation}
and 
\begin{equation}
	\label{eq:2 and F norm for hat A}
	\begin{aligned}
		&\left\|\hat{A}\right\|_2 \leqslant \left\| A^*\right\|_2 + \left\| \hat{A} - A^*\right\|_2 \leqslant 2 \left\| A^*\right\|_2 \lesssim \sqrt{np} , \\
		&\left\|\hat{A}\right\|_F^2 \leqslant 2\left\| A^*\right\|_F^2 + 2\left\| \hat{A} - A^*\right\|_F^2 \leqslant 3 \left\| A^*\right\|_F^2 \lesssim n,
	\end{aligned}
\end{equation}
with high probability.

\textbf{Step 2.2: Bounding first two terms of Equation \eqref{eq:lambda_upper_bound}}:  
We start with bounding $n^{-1}\|\bX^\top \hat A^\top \eps\|_\infty$, for which we use Hanson-Wright inequality (Lemma \ref{lemma:hwinequalityforXY}). Note that the $j^{th}$ element of $\bX^\top \hat A^\top \eps$ is $\bX_{*j}^\top \hat A^\top \eps$, we $\bX_{*j}$ is the $j^{th}$ column of $\bX$. 
Since $\bX_{*, j}$ and $\epsilon$ are both independent sub-gaussian random variables, we can use the Lemma \ref{lemma:hwinequalityforXY} and formula \eqref{eq:2 and F norm for hat A} to obtain:  
\begin{align*}
    \bbP\left(n^{-1} |{\bX_{*j}}^{T} \hat{A}^{T} \epsilon| \geqslant t \mid \hat{A}\right)  & \leqslant 2 \exp \{-c \min ( n^2 t^2/\| \hat{A}\|_{\mathrm{F}}^2, nt/\|{\hat{A}}\|_{2}) \} \\
    & \le 2 \exp \{-c\min(n t^2, t\sqrt{n/p})\}
\end{align*}
with high probability. 
This implies, via a union bound: 
$$
 \bbP\left(n^{-1} \|\bX^{T} \hat{A}^{T} \epsilon \|_{\infty} \geqslant t \mid \hat{A}\right) \le \exp \{\log d - c\min(n t^2, t\sqrt{n/ p})\} \,.
$$
Choosing $t = c \max\{\sqrt{\log{d}/n}, \log d \sqrt{p/n}\} = c \sqrt{\log{d}/n}$, with assumption (C3) $p \log d \to 0$, we conclude $n^{-1} \| (\hat{A}\bX )^{\top}  \epsilon \|_{\infty} \lesssim \sqrt{\log d/n}$, with high probability converge to 1. 
Using a similar application of Lemma \ref{lemma:hwinequalityforXY} establishes that $n^{-1} \|  \bX^{\top}  \epsilon \|_{\infty} \lesssim \sqrt{\log d/n}$. Therefore, we can bound $ n^{-1} \|[(\hat{A}\bX )^{\top}, \bX^{\top}]^{\top} \epsilon \|_{\infty} \lesssim \sqrt{\log d/n}$.

\textbf{Step 2.3: Bounding last two terms of Equation \eqref{eq:lambda_upper_bound}: } 
For notational convenience, define: 
\begin{align*}
    M_1 & = n^{-1}\|\bX^\top (A^* - \hat A)\bX\|_{\infty, \infty}\,,\\
    M_2 & = n^{-1}\|\bX^\top \hat A^\top(A^* - \hat A)\bX\|_{\infty, \infty}  \,.
\end{align*}
An application of Lemma \ref{lemma:hwinequalityforXY} along with Equation \eqref{eq:2 and F norm for hat A} yields: 
\begin{align}
	\label{eq: M1 upperbound}
\bbP(M_1 \geqslant t) & = \bbP\left(n^{-1} \max_{i, j} |\bX_{*, i}^\top (A^* - \hat A)\bX_{*, j}| \ge t\right) \notag \\
& \le \sum_{i, j} \bbP\left(n^{-1}|\bX_{*, i}^\top (A^* - \hat A)\bX_{*, j}| \ge t\right) \notag \\
& \leqslant \exp \left\{2\log d - \min(n^2 t^2, nt/\sqrt{p} )\right\} \,.
\end{align}
Taking $ t = c\sqrt{\log d} / n $, we have $\bbP(M_1 \geqslant c\sqrt{\log d} / n) \leqslant 1/d $ with assumption (C3) $p \log d \to 0$. Under assumption (C4) $\| \beta_0\|_1 \leqslant \sqrt{n/(1+np^2)}$, we have $M_1 \| \beta_0 \|_1 \leqslant \sqrt{\log d / n}$ with probability $\ge 1 - d^{-1}$.
For $M_2$, we use Equation \eqref{eq:2 and F norm for hat A - A} to obtain the following bounds: 
\begin{equation}
	\label{eq:2 and F norm for hat A * A}
	\begin{aligned}
		& \left| \hat{A}^{\top} A^* - \hat{A}^{\top} \hat{A}\right|  \leqslant  \frac{1}{(n-1) p^2}\left| {p}-{\hat{p}}\right|  A^{\top} A \leqslant  \frac{1}{\sqrt{n}} {A^*}^{\top} A^* ,\\
		&\left\| \hat{A}^{\top} A^* - \hat{A}^{\top} \hat{A}\right\|_2  \leqslant \frac{1}{\sqrt{n}} np = \frac{p}{\sqrt{n}} , \\
		&\left\| \hat{A}^{\top} A^* - \hat{A}^{\top} \hat{A}\right\|_F^2  \leqslant  \frac{1}{n} (n + n^2 p^2) = 1 + np^2 , \\
	\end{aligned}
\end{equation}
Using these bounds, along with applying Lemma \ref{lemma:hwinequalityforXY}, we conclude that for any $t > 0$: 
\begin{equation}
	\label{eq:M2 upper bound}
\Pr\left( M_2 \geqslant t\right)  \leqslant \exp\left\lbrace 2\log d - \min\left(\frac{n^2 t^2 }{1 + np^2} , \frac{n^{3/2}t}{p} \right) \right\rbrace.
\end{equation}
Taking $t = c\sqrt{(1+np^2) \log d}/n$, we obtain $\bbP(M_2 \geqslant c\sqrt{(1+np^2) \log d}/n ) \leqslant 1/d $ as $n^2 \gg \log{d}$ (Assumption (C3)). 
Under Assumption (C4), $\| \beta_0\|_1 \leqslant \sqrt{n/(1+np^2)}$, which implies
$M_2 \| \beta_0 \|_1 \leqslant \sqrt{\log d / n}$ with probability $\ge 1 - d^{-1}$.

Combining the results from Step 2.2 and Step 2.3, we establish that it is possible to choose $\lambda$ such that $\lambda \leqslant C\sqrt{\log d/n}$ for some constant $C > 0$. Substituting this into formula \eqref{eq:non-transfer result with lambda}, we conclude that for some suitably chosen constant $K > 0$, 
$$
\bbP\left(\|\widehat{\gamma}-\gamma\|^2_{2} \ge K(s_0 + s_1)\frac{\log{d}}{n}\right)  \le \frac{1}{d} + \frac{1}{n} + e^{\left(\log{n} -\frac{np}{c}\right)} \,.
$$
The right-hand side goes to 0 as $(n \wedge d) \uparrow \infty$ along with the assumption (C3) $np \gg \log{n}$. This completes the proof.

\subsection{Proof of Theorem \ref{thm: general model upper bound with transfer learning}}

We will prove Theorem \ref{thm: general model upper bound with transfer learning} in three steps. In the first step, we consider the estimation error of the Transferring Step in Algorithm \ref{al:trans_alg}, obtaining an upper bound that includes $\lambda_{\gamma}$. In the second step, we will present the relationship between $\lambda_{\gamma}$, $n_k$, and $p_k$, and then substitute the conclusions into the results obtained in the first step. In the third step, we consider the estimation error of the Debiasing Step in Algorithm \ref{al:trans_alg}.
\\
\noindent
{\bf Step 1:} Recall the definition of $\hat \gamma_A$: 
$$
\hat \gamma_A = \argmin_{\gamma} \left\{\frac{1}{2n} \sum_{k} \|( \by_k - \hat{\bZ}_k \gamma) \|_{2}^{2} + \lambda_{\gamma}\|{\gamma}\|_{1}\right\}
$$
As elaborated in Section 3.2, $\hat \gamma_A$ approximates $\gamma_A$ (defined in Equation \eqref{eq:transferstepobjectfunction}). 
Set $\hat{u} = \hat{\gamma}^\mathcal{A} - \gamma^\mathcal{A}$.
Following the same approach as in formula \eqref{eq:minimizer of obj func without transfer} and \eqref{eq:obj func without transfer 1}, we can derive the following inequality: 
$$
\begin{aligned}
	\frac{1}{2n} \sum_{k} \left\|\hat{\bZ}_k \hat{u}\right\|_2^2  & \leqslant \lambda_{\gamma} \left\|\gamma^\mathcal{A} \right\|_{1}-\lambda_{\gamma}\left\|\hat{\gamma}^\mathcal{A}\right\|_{1}+\frac{1}{n}\left|\hat{u}^{\top} \sum_{k} \hat{\bZ}_k^{\top}\left(y_k - \hat{\bZ}_k \gamma^\mathcal{A} \right)\right| \\
	& \leqslant \lambda_{\gamma}\left\| \gamma^\mathcal{A} \right\|_{1}-\lambda_{\gamma}\left\|\hat{\gamma}^\mathcal{A}\right\|_{1}+\frac{\lambda_{\gamma}}{2}\left\|\hat{u}\right\|_{1}\\
	& = \lambda_{\gamma}\left\|\gamma^\mathcal{A}_S \right\|_{1} + \lambda_{\gamma}\left\|\gamma^\mathcal{A}_{S^c} \right\|_{1} - \lambda_{\gamma}\left\|\hat{\gamma}_S^\mathcal{A}\right\|_{1} - \lambda_{\gamma}\left\|\hat{\gamma}_{S^c}^\mathcal{A}\right\|_{1} + \frac{\lambda_{\gamma}}{2}\left\|\hat{u}_S\right\|_{1} + \frac{\lambda_{\gamma}}{2}\left\|\hat{u}_{S^c}\right\|_{1},
\end{aligned}
$$
where the second inequality is obtained by choosing $\lambda_{\gamma} \geqslant 2 n^{-1} \| \sum_{k} \hat{\bZ}_k^{\top} (\by_k - \hat{\bZ}_k \gamma^\mathcal{A}) \|_{\infty}$ and applying Hölder's inequality, while the third equality is derived from the properties of the \(\ell_1\) norm of vectors. 
The subscript $S$ denotes the support set of sparse vector $\gamma_0$, and $S^c$ is the complement set of $S$. It is worth noting that since the $\gamma_k$ of each source task is different $\gamma_0$, $S$ is \emph{not} the supporting set of $\gamma^\cA$, so $\gamma_{S^c}^\mathcal{A} \neq 0$. By reorganizing the terms on the right-hand side of the inequality, we obtain:  
\allowdisplaybreaks
\begin{align}
\label{eq:transfer_bound_1}
	\frac{1}{2n} \sum_{k}  \left\|\hat{\bZ}_k \hat{u}\right\|_2^2 & \leqslant \lambda_{\gamma}\left\|\gamma_S^\mathcal{A} \right\|_{1} - \lambda_{\gamma}\left\| \hat{\gamma}_S^\mathcal{A} \right\|_{1} + \frac{\lambda_{\gamma}}{2}\left\|\hat{u}_S\right\|_{1} + \lambda_{\gamma}\left\|\gamma_{S^c}^\mathcal{A} \right\|_{1} - \lambda_{\gamma}\left\|\hat{\gamma}_{S^c}^\mathcal{A} \right\|_{1} + \frac{\lambda_{\gamma}}{2}\left\|\hat{u}_{S^c}\right\|_{1} \notag \\
	& \leqslant \frac{3}{2}\lambda_{\gamma}\left\|\hat{u}_S\right\|_{1} + \lambda_{\gamma}\left\|\gamma_{S^c}^\mathcal{A} \right\|_{1} - \lambda_{\gamma}\left\|\hat{\gamma}_{S^c}^\mathcal{A} \right\|_{1} + \frac{\lambda_{\gamma}}{2}\left\|\hat{u}_{S^c}\right\|_{1} \notag \\
	& \leqslant \frac{3}{2}\lambda_{\gamma}\left\|\hat{u}_S\right\|_{1} + 2 \lambda_{\gamma}\left\|\gamma_{S^c}^\mathcal{A} \right\|_{1} - \frac{1}{2}\lambda_{\gamma}\left\|\hat{u}_{S^c}\right\|_{1} \,.
\end{align}
Here, the second inequality follows from a triangle inequality $\|\gamma^\mathcal{A}_S \|_{1} - \|\hat{\gamma}_S^\mathcal{A} \|_{1} \leqslant \|\hat{u}_S\|_{1}$, and the third inequality is derived from $\|\hat{\gamma}_{S^c}^\mathcal{A} \|_{1} \geqslant \|\hat{u}_{S^c}\|_{1} - \|\gamma^\mathcal{A}_{S^c}\|_{1}$. 
We now divide the proof into two parts, depending on whether $3\|\hat{u}_S\|_{1}/2 \geqslant 2\|\gamma_{S^c}^\mathcal{A} \|_{1}$ or not. 
\\\\
\noindent
{\bf Situation 1:} Consider the case when $3\|\hat{u}_S\|_{1}/2 \geqslant 2\|\gamma_{S^c}^\mathcal{A} \|_{1}$. From Equation \eqref{eq:transfer_bound_1}, we have: 
$$
\frac{1}{2n} \sum_{k}  \| \hat{\bZ}_k \hat{u}\|_2^2 \leqslant 3 \lambda_{\gamma}\|\hat{u}_S\|_{1} - \lambda_{\gamma}\|\hat{u}_{S^c}\|_{1}/2 \leqslant 3 \lambda_{\gamma} \sqrt{s} \|\hat{u} \|_2
$$
and the first inequality concludes $\lambda_{\gamma}\|\hat{u}_{S^c}\|_{1} \leqslant 6\|\hat{u}_S\|_{1}$, which implies $\|\hat{u}\|_{1} = \|\hat{u}_S\|_{1} + \|\hat{u}_{S^c}\|_{1} \leqslant 7 \|\hat{u}_S\|_{1} \lesssim  \sqrt{s} \|\hat{u}_S\|_{2}$. 
Using RSC condition for $\hat{\bZ}_k$, we arrive 
$$
\kappa \|\hat{u}\|_2^2 - C_1 \log{d} \|\hat{u}\|_1 \|\hat{u}\|_2  \frac{\sum_{k} \sqrt{\Psi(p_k) n_k}}{n} - C_2 \frac{\sqrt{\log{d}}}{n} \|\hat{u}\|_1^2 \lesssim \sqrt{s} \lambda_{\gamma} \|\hat{u}\|_2 \,.
$$
Under assumptions (C3) $\sqrt{\log d} s / n = o(1)$ and $\log{d} \sqrt{s} \sum_{k} \sqrt{\Psi(p_k) n_k}/n = o(1)$ which concluded by assumption (C3), we conclude $\|\hat{u}\|_2^2 \lesssim s \lambda^2_{\gamma}$.
\\\\
\noindent
{\bf Situation 2:} Now, consider the case when $3\|\hat{u}_S\|_{1}/2 < 2\|\gamma_{S^c}^\mathcal{A} \|_{1}$. In that case, we have from Equation \eqref{eq:transfer_bound_1}: 
\begin{equation}
\label{eq:transfer_bound_2}
0 \leqslant \frac{1}{2n} \sum_{k}  \| \hat{\bZ}_k \hat{u}\|_2^2 \le 4 \lambda_{\gamma} \|\gamma_{S^c}^\mathcal{A} \|_{1}- \frac{\lambda_{\gamma}}{2}\|\hat{u}_{S^c}\|_{1} \,. 
\end{equation}
which immediately implies $\|\hat{u}_{S^c}\|_{1} \leqslant 8 \|\gamma_{S^c}^\mathcal{A} \|_{1}$. Note that we have already assumed $3\|\hat{u}_S\|_{1}/2 < 2\|\gamma_{S^c}^\mathcal{A} \|_{1}$, which is equivalent to $\|\hat{u}_S\|_{1} \leqslant (4/3)\|\gamma_{S^c}^\mathcal{A} \|_{1}$. Therefore, we can conclude: 
$$
\|\hat u\|_1 = \|\hat{u}_S\|_{1} + \|\hat{u}_{S^c}\|_{1} \leqslant 9.34 \|\gamma_{S^c}^\mathcal{A} \|_{1} = 9.34 \|\delta_{S^c}^\mathcal{A} \|_{1} \leqslant 9.34 h\,.
$$
In the above display, the second last equality follows from the fact that  $\gamma^\mathcal{A} = \gamma_0 + \delta^\mathcal{A}$ and ${\gamma_0}_{S^c} = 0$, and the last inequality follows from the fact $\|\delta_{S^c}^\mathcal{A} \|_{1} \leqslant h$. 
Therefore, a direct upper bound can be obtained by $\|\hat{u} \|_2 \leqslant \|\hat{u} \|_1 \leqslant 9.34 h$. Furthermore, as $\|\hat u\|_2 \leqslant \|\hat u\|_1$, we have $\|\hat u\|_2^2 \leqslant \|\hat u\|_1^2 \leqslant 100 h^2$. 
\\
\indent
On the other hand, we apply Theorem \ref{thm:RSC} in Equation \eqref{eq:transfer_bound_2}, we obtain with high probability:   
\begin{align*}
\kappa\|\hat{u}\|_2^2 - C_1 \log d \|\hat{u}\|_1 \|\hat{u}\|_2 \frac{\sum_{k} \sqrt{\Psi(p_k) n_k}}{n} - C_2 \frac{\sqrt{\log d}}{n} \|\hat{u}\|_1^2 & \leqslant \frac{1}{2n}\sum_{k} \| \hat{\bZ}_k \hat{u}\|_2^2 \\
& \leqslant 4\lambda_{\gamma}\|\gamma_{S^c}^\mathcal{A} \|_{1} \leqslant 4  \lambda_{\gamma}h  \, ,
\end{align*}
From $\|\hat u\|_2 \leqslant \|\hat u\|_1 \lesssim h$, we have now, 
$$
\|\hat u\|_2^2 \lesssim \left(h^2 \frac{\log{d}}{n}  \sum_k \sqrt{\Psi(p_k) n_k} + h^2 \frac{\sqrt{\log d}}{n} + \lambda_{\gamma}h
\right) \wedge h^2 \,.
$$
As $\lambda_{\gamma} \leqslant \sqrt{\log d}/n$, and $h \sqrt{\log d \Psi(p_k)} = o(1)$ (which we assume it in theorem), we can demonstrate the first and second terms in the parentheses is negligible. Then we have: 
$$
\|\hat u\|_2^2 \lesssim \lambda_{\gamma}h \wedge h^2 \,.
$$
Therefore, whether it is situation 1 or situation 2, we can provide an upper bound
\begin{equation}
	\label{eq: upper gammaA with lambda}
	\left\|\hat{u} \right\|_2^2 \leqslant C\left[{s \lambda_{\gamma}^2} + \left(h^2 \wedge {\lambda_{\gamma} h} \right)\right].
\end{equation}

\noindent
{\bf Step 2:} Next, we provide an upper bound on $\lambda_\gamma$. Recall that in Step 1, we have already 
mentioned that we need to choose $\lambda_\gamma$ such that  $\lambda_{\gamma} \geqslant 2 n^{-1} \| \sum_{k} \hat{\bZ}_k^{\top} (\by_k - \hat{\bZ}_k \gamma^\mathcal{A}) \|_{\infty}$. 
As $\by_k = \bZ_k \gamma_k + \epsilon_k$, we have: 
\begin{align*}
\frac1n \| \sum_{k} \hat{\bZ}_k^{\top}(\bZ_k \gamma_k -\hat{\bZ}_k \gamma^\mathcal{A} + \epsilon_k ) \|_{\infty} & \leqslant  \frac1n \| \sum_{k} \hat{\bZ}_k^{\top} \epsilon_k \|_{\infty} +  \frac1n \| \sum_{k} \hat{\bZ}_k^{\top} \bZ_k ( \gamma_k - \gamma^\mathcal{A} ) \|_{\infty}  \\
& \qquad \qquad \qquad \qquad \qquad + \frac1n \| \sum_{k} \hat{\bZ}_k^{\top}(\bZ_k -\hat{\bZ}_k ) \gamma^\mathcal{A} \|_{\infty} \,.
\end{align*}
Therefore, we need to analyze these three terms.

\textbf{Step 2.1: Bound for the first term  $n^{-1} \| \sum_{k}\hat{\bZ}_k^{\top} \epsilon_k \|_{\infty}$.} 
By definition of $\hat \bZ$, this term can be expressed as $n^{-1} \| [ \sum_k \bX_k^{\top} \epsilon_k,   \sum_k (\hat{A}_k \bX_k )^{\top} \epsilon_k ]^{\top} \|_{\infty} $, hence it is the maximum of $n^{-1} \| \sum_k \bX_k^{\top}  \epsilon_k \|_{\infty}$ and $n^{-1} \| \sum_k (\hat{A}_k\bX_k )^{\top}  \epsilon_k \|_{\infty}$. 
Define a diagonal block matrix $D_{\hat{A}}$, where the $k$-th block on the diagonal is $\hat{A}_k$, and all other blocks are zero matrices. Similarly, define $D_{{A_k^*}^\top A_k^*}$, where the $k$-th block on the diagonal is ${A_k^*}^\top A_k^*$:
$$
\begin{array}{c}
	D_{\hat{A}} = \begin{pmatrix}
		\hat{A}_1 & 0 & \cdots & 0 \\
		0 & \hat{A}_2 & \cdots & 0 \\
		\vdots & \vdots & \ddots & \vdots \\
		0 & 0 & \cdots &\hat{A}_K \\
	\end{pmatrix}
	,\qquad
	D_{{A_k^*}^\top A_k^*} = \begin{pmatrix}
		{{A}_1^*}^{\top} {A}_1^* & 0 & \cdots & 0 \\
		0 & {{A}_2^*}^{\top} {{A}_2^*} & \cdots & 0 \\
		\vdots & \vdots & \ddots & \vdots \\
		0 & 0 & \cdots & {{A}_K^*}^{\top} {{A}_K^*} \\
	\end{pmatrix}.
\end{array}
$$
Similarly, define $\bX_{\operatorname{full}} \in \mathbb{R}^{n \times d}$ as the feature matrix for all individuals, and $\epsilon_{\operatorname{full}} \in \mathbb{R}^{n}$ as noise vector for all individuals. It is easy to observe that $ n^{-1} \| \sum_{k}( \hat{A}_k \bX_k )^{\top} \epsilon_k\|_{\infty} = n^{-1} \| \bX_{\operatorname{full}}^{\top} D_{\hat{A}}^{\top} \epsilon_{\operatorname{full}} \|_{\infty}$. 
Conditioning on $\cA = \{A_1, \dots, A_k\}$, we can upper bound the $j$-th element of $n^{-1} \bX_{\operatorname{full}}^{\top} D_{\hat{A}}^{\top} \epsilon_{\operatorname{full}}$ using Lemma \ref{lemma:hwinequalityforXY} as follows: 
$$
\begin{aligned}
	\Pr \left( \left|\frac1n X_{\operatorname{full},j}^{\top} D_{\hat{A}}^{\top} \epsilon \right| \geq t \mid \cA \right)  \leqslant 2\exp{\left(-c\min\left(\frac{n^2 t^2}{\|D_{\hat{A}}\|_F^2}, \frac{nt}{\| D_{\hat{A}}\|_{2}} \right)\right)} .
\end{aligned}
$$ From lemma \ref{lemma:ubforA}, we have $
\|\hat{A}_k \|_F^2 \leqslant 2 n_k$ with probability $1 - {n_k}^{-1}$ and $\|\hat{A}_k \|_2 \leqslant 2 \sqrt{n_k p_k}$ with probability $1 - \exp(\log{n_k} - n_k p_k/c)$. By applying simple matrix algebra, we can verify that the Frobenius norm and $\ell_2$ norm of $D_{\hat{A}}$ satisfying that $ \|D_{\hat{A}}\|_F^2 = \sum_k \|\hat{A}_k\|_F^2 \leqslant 2n$ with probability $1 - \sum_{k=1}^{K} {n_k}^{-1}$ and $\|D_{\hat{A}}\|_2 = \max_k\|\hat{A}_k \|_2 \leqslant  2 \max_k(\sqrt{n_k p_k})$ with probability $1 - \sum_{k=1}^{K} \exp(\log{n_k} - n_k p_k/c)$. Then $$
\begin{aligned}
	\Pr \left( \frac1n  \left\|\bX_{\operatorname{full}}^{\top} D_{\hat{A}}^{\top} \epsilon \right\|_{\infty} \geq t \mid \hat{A}_k \right) & \leqslant 2\exp{\left(\log d -c\min\left(\frac{n^2 t^2}{\|D_{\hat{A}}\|_F^2}, \frac{nt}{\| D_{\hat{A}}\|_{2}} \right)\right)}\\
	& \leqslant 2\exp{\left(\log d -c\min\left(\frac{n^2 t^2}{n}, \frac{nt}{\max_k\left\lbrace \sqrt{n_k p_k}\right\rbrace} \right)\right)}.
\end{aligned}
$$ Choosing $
t = c \sqrt{\log{d}/n}$, under assumption (C3) $p_k \log d = o(1)$, we conclude: 
\begin{align*}
& \bbP \left(n^{-1} \| \sum_k (\hat{A}_k \bX_k )^{\top}  \epsilon_k \|_{\infty} \lesssim \sqrt{{\log{d}}/{n}}\right) \\
& \qquad \qquad \geqslant 1- d^{-1} - \sum_{k=1}^{K} \exp(\log{n_k} - n_k p_k/c) -  \sum_{k=1}^{K} {n_k}^{-1} \,.
\end{align*}
For the second part $n^{-1} \| \sum_k \bX_k^{\top} \epsilon_k \|_{\infty}$, the same method can be used to derive the same conclusion. So $n^{-1}\|\sum_{k}\hat{\bZ}_k^{\top} \epsilon_k \|_{\infty} \lesssim \sqrt{{\log{d}}/{n}}$, with high probability  converge to 1.

\textbf{Step 2.2: Bound for the second term $\|\sum_{k} \hat{\bZ}_k^{\top} \bZ_k \left( \gamma_k - \gamma^{\mathcal{A}}\right)\|_{\infty}$.} 
This can be upper bounded by: 
\begin{align*}
    & \|\sum_{k} \hat{\bZ}_k^{\top} \bZ_k \left( \gamma_k - \gamma^{\mathcal{A}}\right)\|_{\infty} \\
    & \qquad \leqslant \|\sum_{k} X_k^\top \hat{A}_k^\top A^*_k X_k  (\delta_{k,0} - \delta_{\cdot,0}) \|_{\infty} +  \|\sum_{k} X_k^\top \hat{A}_k^\top X_k (\delta_{k,1} - \delta_{\cdot,1})\|_{\infty} \\
    & \qquad \qquad + \|\sum_{k} X_k^\top {A}_k^* X_k (\delta_{k,0} - \delta_{\cdot,0})\|_{\infty} + \|\sum_{k} X_k^\top X_k (\delta_{k,1} - \delta_{\cdot,1})\|_{\infty} \,,
\end{align*}
where $\delta_k = \gamma_k - \gamma_0$ and $\delta = \gamma^\mathcal{A} - \gamma_0$, vector $\delta_{k,0}$ and $\delta_{\cdot,0}$ is the first $d$ components of the vector $\delta_k$ and $\delta$, $\delta_{k,1}$ and $\delta_{\cdot,1}$ is the last $d$ components of the $\delta_k$ and $\delta$. 
To bound each of the terms of the right-hand side of the above equation, we use similar techniques as before. 
For the first term, we can further decompose it as: 
\begin{align*}
    & \frac1n \left\| \sum_{k} \bX_k^\top \hat{A}_k^\top A^*_k X_k  (\delta_{k,0} - \delta_{\cdot,0}) \right\|_{\infty} \\
    \leqslant & \frac1n \left\| \sum_{k} X_k^\top (\hat{A}_k - A^*_k )^\top A^*_k X_k  (\delta_{k,0} - \delta_{\cdot,0}) \right\|_{\infty} + \frac1n \left\| \sum_{k} ( X_k^\top {A^*_k}^\top A^*_k X_k - n_k \Sigma_X)  (\delta_{k,0} - \delta_{\cdot,0}) \right\|_{\infty} \\
    \leqslant & 2h (M_1 + M_2) \,,
\end{align*}
where $M_1$ and $M_2$ are the maximum elements of the matrices $n^{-1} \sum_{k} X_k^\top (\hat{A}_k - A^*_k)^\top A^*_k X_k$ and $ n^{-1}\sum_{k} (X_k^\top {A^*_k}^\top A^*_k X_k - n_k \Sigma_X)$, respectively, and the first inequality holds as $\sum_{k} n_k \delta_k = \sum_{k} n_k \delta$. 
It is worth noting that \(\mathrm{E} X_k^\top (A^*_k)^\top A^*_k X_k = n_k \Sigma_X\), and its verification can be found in the proof of Lemma \ref{lemma: expectationZTZ}. The upper bound of \(M_1\) can be obtained by following the approach used in formula \eqref{eq:2 and F norm for hat A * A}, so we omit it here. The upper bound of $M_2$ can be concluded by applying Lemma \ref{lemma:hwinequalityforXY} on each element of the matrix; the $i,j$-th element of $n^{-1}\sum_{k} \bX_k^\top (A^*_k)^\top A^*_k \bX_k$ can be written in the form of a quadratic form as $(\bX_{\operatorname{full},i}^{\top} D_{{A^*_k}^\top A^*_k} \bX_{\operatorname{full},j}) /n$, so
\begin{align*}
& \bbP\left(\frac1n \left| \left(\bX_{\operatorname{full}}^\top D_{{A^*_k}^\top A^*_k} \bX_{\operatorname{full}}\right)_{ij} - \Sigma_{X,ij}\right| \geqslant t\right) \\
& \qquad \leqslant \exp  \left\{-c \min \left({n^2 t^2}/{\|D_{{A^*_k}^\top A^*_k}\|_F^2}, {nt}/{\| D_{{A^*_k}^\top A^*_k} \|_{2}}\right)\right\} 
\end{align*}
and 
$$
\bbP \{| M_2 | \geqslant t\} \leqslant \exp  \left[2\log d - c\min \{n^2 t^2/\sum_k(n_k + n_k^2 p_k^2) , {nt}/{\max_k (n_k p_k)}\}\right]. $$
We take $t = C \max \{\sqrt{\log d \sum_k (n_k + n_k^2 p_k^2) }/n , {\log d \max_k (n_k p_k) }/{n}\}$. 
Combining the above conclusions, we obtain: 
$$
\left\| \frac1n\sum_{k} \bX_k^\top \hat{A}_k^\top A^*_k X_k  \left(\delta_{k,0} - \delta_{\cdot,0}\right) \right\|_{\infty} \lesssim \frac{h}{n} \max \left\{\sqrt{\log d \sum_k (n_k + n_k^2 p_k^2) } , {\log{d} \, \max_k (n_k p_k) } \right\}.
$$ 
Applying similar techniques to the remaining parts, we can obtain the second part of \(\lambda_{\gamma}\) will be bound by $h \max \{\sqrt{\log d \sum_k (n_k + n_k^2 p_k^2)}, {\log d \max_k (n_k p_k) }\} / n$.
\\\\
\noindent
\textbf{Step 2.3: Bound for the third term $n^{-1} \| \sum_{k} \hat{\bZ}_k^{\top}(\bZ_k -\hat{\bZ}_k ) \gamma^\mathcal{A} \|_{\infty}$.} The third term of $\lambda_\gamma$ can be written as $\max \{\| \sum_k X_k^{\top} (A_k^* - \hat{A}_k) X_k \beta_0^\mathcal{A} \|_{\infty} ,  \| \sum_k X_k^{\top} \hat{A}_k^{\top} (A_k^* - \hat{A}_k) X_k \beta_0^\mathcal{A} \|_{\infty}\}$. Applying similar techniques, we could obtain
$$\bbP  \left[\max_{i,j} n^{-1} |  \{\sum_k X_k^{\top} (A_k^* - \hat{A}_k) X_k\}_{ij} | \geqslant t\right] \leqslant \exp \left\{2\log d - c \min ( n^2 t^2/K, nt/\max_k \sqrt{p_k} ) \right\}$$ and 
\begin{align*}
& \bbP \left[\max_{i,j} n^{-1} |\{\sum_k X_k^{\top} \hat{A}_k^{\top} (A_k^* - \hat{A}_k) X_k\}_{ij} | \geqslant t\right] \\
& \qquad \le \exp \left[2\log d - c\min  \left\{{n^2 t^2}/{\sum_k ( 1 + n_k p_k^2) }, nt/\max_k (p_k n_k^{-1/2})\right\}\right] \,.
\end{align*}
We could choose $t = C\sqrt{\log d \sum_{k}(1+n_k p_k^2)}/n$ to ensure that the probability above tends to zero. 
Under the assumption (C4) $\|\beta_{0k}\|_1 \leqslant \sqrt{n/\sum_k ( 1+ n_k p_k^2)}$, we can conclude that $\| \sum_{k} \hat{\bZ}_k^{\top} (\bZ_k -\hat{\bZ}_k) \gamma^\mathcal{A} \|_{\infty} \lesssim \sqrt{\log{d}/n}$.
\\\\
\noindent
\textbf{Step 2.4: Upper bound of $\lambda_\gamma$.} Combining the above, we obtain the upper bound for $\lambda_{\gamma}$ as $\sqrt{\log d / n} + h \max \{\sqrt{\log d \sum_k (n_k + n_k^2 p_k^2)}, \log d \max_k (n_k p_k)\} / n$.
\\\\
\noindent
\textbf{Step 3 (Debiasing):} Here, we will focus on the Debiasing Step of the algorithm \ref{al:trans_alg}, and provide its error bound, $\| \hat{\delta} - \delta\|_2$. 
The estimator $\hat \delta$ is obtained by 
minimizing $(2n_0)^{-1} \|\by_0 - \hat{\bZ}_0 ( \hat{\gamma}^\mathcal{A} - \delta) \|_{2}^{2} + \lambda_{\delta} \|{\delta}\|_{1}$. 
Define $\hat{v} = \hat{\delta} - \delta$. We have:  
$$ 
\frac{1}{2n_0}\left\|y_0 - \hat{\bZ}_0 \left( \hat{\gamma}^\mathcal{A} -\delta - \hat{v} \right)  \right\|_{2}^{2}+\lambda_{\delta}\left\|{\hat{\delta}}\right\|_{1} \leqslant \frac{1}{2n_0}\left\|y_0 - \hat{\bZ}_0 \left( \hat{\gamma}^\mathcal{A} -\delta \right)  \right\|_{2}^{2}+\lambda_{\delta}\left\|{{\delta}}\right\|_{1} \,.
$$ 
This, after some simple algebraic manipulation, yields: 
\begin{equation}
\label{eq:debias_1}
\frac{1}{2n_0}\left\| \hat{\bZ}_0 \hat{v}  \right\|_{2}^{2} \leqslant \lambda_{\delta}\left\|{{\delta}}\right\|_{1} - \lambda_{\delta}\left\|{\hat{\delta}}\right\|_{1} + \frac{1}{n_0}\left| \left\langle \epsilon_0 + (\bZ_0 -\hat{\bZ}_0 ) \beta_0 -\hat{\bZ}_0 \hat{u} ,  \hat{\bZ}_0 \hat{v} \right\rangle \right| \,.
\end{equation}
Further analysis of the inner product term yields
\begin{equation}
\label{eq:debias_2}
\begin{aligned}
	\frac{1}{n_0}\left| \left\langle \epsilon_0 + (\bZ_0 -\hat{\bZ}_0 ) \beta_0 -\hat{\bZ}_0 \hat{u} ,  \hat{\bZ}_0 \hat{v} \right\rangle\right| \leqslant & \frac{1}{n_0}\left| \left\langle \epsilon_0 + (\bZ_0 -\hat{\bZ}_0 ) \beta_0 ,  \hat{\bZ}_0 \hat{v} \right\rangle\right| + \frac{1}{n_0}\left| \left\langle  \hat{\bZ}_0 \hat{u} ,  \hat{\bZ}_0 \hat{v} \right\rangle\right|\\
	\leqslant & \frac{\lambda_{\delta}}{2} \left\|\hat{v} \right\|_1 + \frac{1}{n_0}\left\|\hat{\bZ}_0 \hat{u} \right\|_2^2 + \frac{1}{4 n_0}\left\|\hat{\bZ}_0 \hat{v} \right\|_2^2 ,
\end{aligned}
\end{equation}
where the first term comes from H{\"o}lder inequality by denoting $\lambda_\delta = 2\{\hat{\bZ}_0^\top \epsilon_0 + \hat{\bZ}_0^\top(\bZ_0 -\hat{\bZ}_0 ) \beta_0\}$, while the second term comes from the inequality $|ab | \leqslant c a^2/2 + b^2/2c $ and let $c=2$. 
Combining the bounds in Equation \eqref{eq:debias_1} and \eqref{eq:debias_2} and using the fact $\|{\hat{\delta}}\|_{1} \geqslant \|\hat{v}\|_1 - \| \delta \|_{1}$, we conclude: 
$$
\frac{1}{4n_0}\left\| \hat{\bZ}_0 \hat{v}   \right\|_{2}^{2} \leqslant 2\lambda_{\delta}\left\|{{\delta}}\right\|_{1} - \frac{\lambda_{\delta}}{2} \left\|\hat{v} \right\|_1 + \frac{1}{n_0}\left\|\hat{\bZ}_0 \hat{u} \right\|_2^2
$$
Next, we consider two different situations:

\noindent
{\bf Situation 1:} If $2 \lambda_{\delta} \|\delta \|_{1} \geqslant n_0^{-1} \|\hat{\bZ}_0 \hat{u} \|_2^2$, we have:  
$$
0 \leqslant \frac{1}{4n_0} \| \hat{\bZ}_0 \hat{v}\|_{2}^{2} \leqslant  4 \lambda_{\delta} \|\delta\|_{1} - \frac{\lambda_{\delta}}{2} \|\hat{v}\|_1 \,.
$$
The above inequality immediately implies: 
\begin{equation}
\label{eq:upper_bound_h_debiased}
 4 \lambda_{\delta} \|\delta\|_{1} - \frac{\lambda_{\delta}}{2} \|\hat{v}\|_1 \ge 0 \implies \| \hat{v} \|_2 \leqslant \|\hat{v} \|_1 \leqslant 8\|\delta \|_1 \leqslant 8h \,.
\end{equation}
Furthermore, using Theorem \ref{thm:RSC} we obtain: 
$$
\kappa \| \hat{v}\|_{2}^{2} - C_1 \log d \|\hat{v}\|_{1} \|\hat{v}\|_{2} \sqrt{\frac{\Psi(p_0)}{n_0}} - C_2 \frac{\sqrt{\log d}}{n_0} \|\hat{v}\|_{1}^2 \leqslant \frac{1}{4n_0} \|  \hat{\bZ}_0 \hat{v} \|_{2}^{2} \leqslant 4 \lambda_{\delta} \|\delta\|_{1}
$$
Hence, the above inequality implies: 
\begin{align}
\label{eq:first_bound}
    \kappa \| \hat{v}\|_{2}^{2} & \leqslant C_1 \log d \|\hat{v}\|_{1} \|\hat{v}\|_{2} \sqrt{\frac{\Psi(p_0)}{n_0}} + C_2 \frac{\sqrt{\log d}}{n_0} \|\hat{v}\|_{1}^2 + 4 \lambda_{\delta} \|\delta\|_{1} \notag \\
    & \leqslant C_1 h\log{d} \|\hat{v}\|_{2} \sqrt{\frac{\Psi(p_0)}{n_0}} + C_2 \frac{\sqrt{\log d}}{n_0}h^2 + 4 \lambda_{\delta} h \notag \\
    & \leqslant \frac{\kappa}{2}\|\hat v\|_2^2 +  h^2 \left(C_3 \log^2{d}\frac{\Psi(p_0)}{n_0} + C_2 \frac{\sqrt{\log d}}{n_0}\right)  + 4\lambda_\delta h \hspace{.5in}  \left[ ab \leqslant \frac{a^2}{2} + \frac{b^2}{2} \right]  \notag \\
    \implies \|\hat v\|_2^2 & \lesssim h^2 \left( \log^2{d}\frac{\Psi(p_0)}{n_0} + \frac{\sqrt{\log d}}{n_0} \right)  + \lambda_\delta h
\end{align}
We next claim that $h\lambda_\delta$ is of the larger order. This would be true if i) $h\sqrt{\log{d}}/n_0 \ll \lambda_\delta$ and ii) $h\log^2{d}\Psi(p_0)/n_0 \ll \lambda_\delta$. Given that $\lambda_\delta = C\sqrt{\log{d}/n_0}$, the first condition is satisfied as soon as $h \ll \sqrt{n_0}$ which is trivially true given our assumptions. For the second condition to be satisfied, we need $h \log^{3/2}{d} \Psi(p_0) \ll \sqrt{n_0}$, which is equivalent to the condition $h\sqrt{\log{d}\Psi(p_0)} \ll \sqrt{(n_0/\Psi(p_0))}/\log{d} \approx \sqrt{n_0p_0}/\log{d}$. As we have assumed $h\sqrt{\log{d} \Psi(p_0)}= o(1)$ and $\sqrt{n_0p_0}/\log{d} = \Omega(1)$ (Assumption (C3)), this condition is also easily satisfied. 
Therefore, we conclude from Equation \eqref{eq:first_bound} that $\|\hat v\|_2^2 \lesssim h\lambda_\delta$, which, in combination with Equation \eqref{eq:upper_bound_h_debiased}, yields $\|\hat v\|_2^2 \lesssim (h\lambda_\delta \wedge h^2)$. 

Since $\lambda_{\delta}$ has the same form as \(\lambda\) in Appendix C.1, the same argument can be used to show that $\lambda_{\delta} \lesssim \sqrt{\log d/n_0}$ with high probability. So under this situation, $\|\hat{v}\|_{2}^2 \lesssim  h \sqrt{\log d / n_0} \wedge h^2$.
\\\\
\noindent
{\bf Situation 2:} If $ 2 \lambda_{\delta} \|\delta \|_{1} \leqslant n_0^{-1} \|\hat{\bZ}_0 \hat{u} \|_2^2$, then we have: 
$$
0 \leqslant \frac{1}{4n_0}\|\hat{\bZ}_0 \hat{v} \|_{2}^{2} \leqslant \frac{2}{n_0}\|\hat{\bZ}_0 \hat{u} \|_2^2 - \frac{\lambda_{\delta}}{2} \|\hat{v} \|_1 
$$
An immediate conclusion from the above inequality is that $\|\hat v\|_1 \leqslant (1/2\lambda_\delta)n_0^{-1}\|\hat \bZ_0 \hat u\|_2^2$. 
Furthermore, Theorem \ref{thm:RSC} implies: 
\begin{equation}
\label{eq:RSC_debiased_2}
\kappa \|\hat v\|_2^2 -  C_1 \log d \|\hat{v}\|_{1} \|\hat{v}\|_{2} \sqrt{\frac{\Psi(p_0)}{n_0}} - C_2 \frac{\sqrt{\log d}}{n_0} \|\hat{v}\|_{1}^2  \leqslant \frac{2}{n_0}\|\hat{\bZ}_0 \hat{u} \|_2^2 - \frac{\lambda_{\delta}}{2} \|\hat{v} \|_1 \,.
\end{equation}
From Equation \eqref{eq:RSC_debiased_2}, we have, upon applying Young's inequality: 
\begin{equation*}
\begin{aligned}
\label{eq:RSC_debiased_1}
\kappa \|\hat v\|_2^2 & \leqslant \frac{2}{n_0}\|\hat{\bZ}_0 \hat{u} \|_2^2 + C \log d \|\hat{v}\|_{1} \|\hat{v}\|_{2} \sqrt{\frac{\Psi(p_0)}{n_0}} + \frac{\sqrt{\log d}}{n_0}\|\hat{v}\|_{1}^2  - \frac{\lambda_{\delta}}{2} \|\hat{v} \|_1 \\
& \leqslant \frac{2}{n_0}\|\hat{\bZ}_0 \hat{u} \|_2^2 + \frac{\kappa}{2}\|\hat v\|_2^2 + C_1 \frac{\log^2{d} \ \Psi(p_0)}{n_0}\|\hat v\|_1^2 + C_2\frac{\sqrt{\log d}}{n_0}\|\hat{v}\|_{1}^2 - \frac{\lambda_{\delta}}{2} \|\hat{v} \|_1 \,.
\end{aligned}
\end{equation*}
Easy to verify that the forth term is negligible compared to the third term. Therefore, we can conclude $\|\hat v\|_2^2 \lesssim n_0^{-1}\|\hat \bZ_0 \hat u\|_2^2$ when
\begin{equation}
\label{eq:want to proof in debais}
\frac{\log^2{d} \ \Psi(p_0)}{n_0}\|\hat v\|_1 \asymp \lambda_\delta^2 \log{d} \ \Psi(p_0)\|\hat v\|_1 \ll  \frac{\lambda_{\delta}}{2} \Longleftarrow \lambda_\delta \|\hat v\|_1 \ll (\log{d} \ \Psi(p_0))^{-1} \,.
\end{equation}
As we have already pointed out $\|\hat v\|_1 \lesssim \lambda_\delta^{-1} n_0^{-1}\|\hat \bZ_0 \hat u\|_2^2$, and using $n_0^{-1} \|\hat{\bZ}_0 \hat{u} \|_2^2 \leqslant 2 \lambda_{\max}(\Sigma_Z)\|\hat u\|_2^2$, we have: 
$$
\lambda_\delta \|\hat v\|_1 \lesssim \|\hat u\|_2^2 \lesssim s\lambda_\gamma^2 + (h\lambda_\gamma \wedge h^2) \, .
$$
Under assumptions (in theroem)
$$
h\sqrt{\log{d} \ \Psi(p_0)} = o(1) \ \ \text{ and }\ \ s\lambda_\gamma^2 \log{d} \ \Psi(p_0) = o(1) \,.
$$
we have \eqref{eq:want to proof in debais}, which implying $\|\hat v\|_2^2 \leqslant \|\hat u\|_2^2$.

In summary, we control $\|\hat{v}\|_{2}^2$ by $\|\hat{v}\|_{2}^2 \lesssim  ( h \sqrt{\log d/n_0} \wedge h^2) \vee \|\hat{u} \|_2^2 $.
Combining the above inequalities, we obtain,
\begin{equation}
	\left\|\hat{\gamma_0} - \gamma_0 \right\|_{2}^2 \leqslant \left\|\hat{u} \right\|_2^2 + \left\|\hat{v} \right\|_2^2 \lesssim s \lambda_{\gamma}^2 + \left(  h^2 \wedge \lambda_{\gamma}  h \right)  + \left(  h^2 \wedge \lambda_{\delta}  h \right) ,
\end{equation}
where $\lambda_{\gamma} = \sqrt{\log d/n} + h \max \{\sqrt{\log d \sum_k (n_k + n_k^2 p_k^2) } , \log d \max_k (n_k p_k)\}/n $ and $\lambda_{\delta} = \sqrt{\log d/n_0}$.

\color{black}

\section{Proof of auxiliary lemmas and RSC condition}

\subsection{Proof of lemma \ref{lemma:hwinequalityforXY}}
\begin{proof}
Define $Z = (X^{\top}, Y^{\top})^{\top} \in \mathbb{R}^{n_1 + n_2}$ as the new random vector. It is easy to observe that $Z$ is a random vector with independent, mean-zero, sub-Gaussian coordinates, and its $\psi_2$-norm is $K$. Define the anti-diagonal matrix $\mathbb{B} \in \mathbb{R}^{(n_1 + n_2)\times (n_1 + n_2)}  $, where the upper-right corner is $B^{\top}$, the lower-left corner is $B$, and the other elements are 0. It can be verified that $Z^{\top} \mathbb{B} Z = 2 X^{\top} B Y$, $\mathbb{E}Z^{\top} \mathbb{B} Z = 0$, and that $\|\mathbb{B}\|_F^2 \lesssim \|B\|_F^2$ and $\|\mathbb{B}\|_2 \lesssim \|B\|_2$. By applying the Hanson-Wright inequality to $Z^{\top} \mathbb{B} Z$, we can obtain the result stated in Lemma \ref{lemma:hwinequalityforXY}. 
\end{proof}

\subsection{Proof of Theorem \ref{thm:RSC}}
\begin{proof}
    The proof of the RE condition is similar to the proof of Proposition 2 of \cite{negahban2009unified}. However, the technicalities are different due to the dependence among observations. We use a similar truncation argument; following the proof of Proposition 2 of \cite{negahban2009unified}, we define a function $\phi_\tau(x)$ which takes value $x^2$ in $[-\tau/2, \tau/2]$, $(\tau - x)^2$ in $[-\tau, -\tau/2] \cup [\tau/2, \tau]$ and $0$ outside $[-\tau, \tau]$. Then for some fixed $0 < \tau \le T$ (to be chosen later), we have: 
\begin{align}
\label{eq:RSC_bound_1}
    \frac1n \sum_i (u^\top \bZ_i)^2 \ge \frac1n \sum_i \phi_\tau\left((u^\top \bZ_i)^2\mathds{1}_{|\gamma_0^\top Z_i| \le T}\right) \triangleq \frac1n \sum_i g_{\tau, T}(u^\top Z_i)
\end{align}
Our first target is to show that the expected value of the right-hand side of Equation \ref{eq:RSC_bound_1} is further lower bounded by some constant with high probability. Toward that end, we establish some moment bounds: 
\\\\
\noindent
{\bf Step 1: }We have already established that: 
\begin{align*}
    \bbE\left[\frac{u^\top \bZ^\top \bZ u}{n}\right] & = u^\top (I_2 \otimes \Sigma_X) u \ge \kappa
\end{align*}
where $\kappa$ is a lower bound on the minimum eigenvalue of $\Sigma_X$. However, we are performing truncation here; note that, $g_{\tau, T}(u^\top Z_i) \neq (u^\top Z_i)^2$ only if either $|u^\top Z_i| > \tau/2$ or $|\gamma_0^\top Z_i| > T$. Therefore, we have the following upper bound: 
\begin{align}
\label{eq:RSC_moment_bound_1}
    & \frac1n \sum_i\bbE\left[(u^\top Z_i)^2 - g_{\tau, T}(Z_i)\right] \notag \\
    & \le \frac1n \sum_i\bbE\left[(u^\top Z_i)^2\mathds{1}_{|u\top Z_i| > \tau/2}\right] + \frac1n \sum_i\bbE\left[(u^\top Z_i)^2 \mathds{1}_{|\gamma_0^\top Z_i| > T}\right] \notag \\
    & \le \frac1n \sum_i\sqrt{\bbE\left[(u^\top Z_i)^4\right]\bbP\left(|u\top Z_i| > \tau/2\right)}+ \frac1n \sum_i\sqrt{\bbE\left[(u^\top Z_i)^4\right] \bbP\left(|\gamma_0^\top Z_i| > T\right)} \notag \\
    & \le \sqrt{\frac1n \sum_i\bbE\left[(u^\top Z_i)^4\right]\bbP\left(|u\top Z_i| > \tau/2\right)} + \sqrt{\frac1n \sum_i\bbE\left[(u^\top Z_i)^4\right] \bbP\left(|\gamma_0^\top Z_i| > T\right)}
\end{align}
To bound the right hand side of Equation \eqref{eq:RSC_moment_bound_1}, we need to bound the moments of $(u^\top Z_i)$ and $(\gamma_0^\top Z_i)$, which is proved in the following lemma: 
\begin{lemma}
    \label{lem:moment_bounds}
    Under the problem setup and assumption (C1) , there exists constants $\kappa_1$ and $\kappa_2$ such that: 
    $$
    \bbE[(v^\top Z_i)^2] \le \kappa_1  \ \ \& \ \ \bbE[(v^\top Z_i)^4] \le \kappa_2 \,,
    $$
    for any vector $v \in \reals^p$ with $\|v\|_2 \le 1$. 
\end{lemma}
\begin{proof}
    We first bound the fourth moment. Note that: 
    \begin{align*}
    & \bbE[(v^\top {\bZ}_i)^4] \\
    &= \bbE\left[\left(\frac{1}{\sqrt{(n-1)p}}\sum_{j = 1}^n a_{ij}({\bX}_j^\top v)\right)^4\right] \\
    & = \frac{1}{(n-1)^2p^2}\left\{\sum_j \bbE[A_{ij}^4(X_j^\top v)^4] + \sum_{j_1 \neq j_2} \bbE[A_{ij}^2(X_j^\top v)^2]\bbE[A_{ij}^2(X_j^\top v)^2]\right\} \\
    & =  \frac{1}{(n-1)^2p^2}\left\{np\bbE[(X^\top v)^4] + n(n-1)p^2 (\bbE[(X^\top v)^2])^2\right\} \\
    & = \frac{n}{n-1}\left\{\frac{1}{n-1}\bbE[(X^\top v)^4] + (\bbE[(X^\top v)^2])^2\right\} \\
    & \le \kappa_2 \,.
\end{align*}
for some constant $\kappa_2$ as both $\bbE[(X^\top v)^2]$ and $\bbE[(X^\top v)^4]$ are finite, since $X$ is a subgaussian random vector. The analysis for the second moment is similar: 
\begin{align*}
    \bbE[(v^\top {\bZ}_i)^2] & = \bbE\left[\left(\frac{1}{\sqrt{(n-1)p}}\sum_{j = 1}^n A_{ij}(X_j^\top v)\right)^2\right] \\
    & = \frac{1}{(n-1)p}\sum_{j = 1}^p \bbE\left[A^2_{ij}(X_j^\top v)^2\right] \\
    & = \frac{n}{n-1} \bbE[(X^\top v)^2] \le \kappa_1 \,.
\end{align*}
\end{proof}
Now, using this lemma and Chebychev's inequality, we conclude that for a large enough choice of $(T, \gamma)$, we have: 
$$
\bbE\left[\frac1n \sum_i g_{\tau, T}(u^\top Z_i)\right] \ge \frac{\kappa}{2} \,.
$$
{\bf Step 2: }
Now that we have proved that the expected value of the right-hand side of Equation \eqref{eq:RSC_bound_1} is lower bounded by some constant, 
We next define an empirical process, namely $\bZ(t)$, which is defined as: 
\begin{equation}
    \label{eq:def_emp_proc_RSC}
    Z(t) \triangleq Z(t, Z_1, \dots, Z_n) = \sup_{\|u\|_2 = 1, \|u\|_1 = t} \left|\frac1n \sum_i g_{\tau, T}(u^\top Z_i) - \bbE\left[\frac1n \sum_i g_{\tau, T}(u^\top Z_i)\right]\right|
\end{equation}
As the function $g_{\tau, T}$ is bounded by $\tau^2/4$, we start with Mcdiarmid's inequality/bounded difference inequality; for any $Z'_i \neq {\bZ}_i$, 
\begin{align*}
    & Z(t, Z_1, \dots, Z_{i-1}, {\bZ}_i, \dots, Z_n) - Z(t, Z_1, \dots, Z_{i-1}, Z'_i, \dots, Z_n) \\
    & \le \frac{1}{n}\sup_{u \in \bbS_2(1) \cap \bbS_1(t)} \left|g_u(Z'_i) - \bbE[g_u(Z'_i)]\right| \le \frac{\tau^2}{2n} \hspace{.1in} [\because g_u(\cdot) \le \tau^2/4]. 
\end{align*}
Therefore, by bounded difference inequality: 
$$
\bbP\left(Z(t) \ge \bbE[Z(t) \mid \bX] + t \mid {\bX}\right) \le \exp{\left(-\frac{8nt^2}{\tau^4}\right)}
$$
As the right-hand side does not depend on the value of $\bX$, we can further conclude the following by taking expectations with respect to $\bX$ on both sides: 
\begin{equation}
\label{eq:Zt_conc_bound_1}
\bbP\left(Z(t) \ge \bbE[Z(t) \mid \bX] + t \right) \le \exp{\left(-\frac{8nt^2}{\tau^4}\right)} \,.
\end{equation}
Next, using symmetrization and Rademacher complexity bounds, we bound $\bbE[Z(t) \mid \bX]$. For notational simplicity let us define: 
\begin{align*}
V_n & = \max_{1 \le j \le p} \left|\frac{1}{\sqrt{n}}\sum_{k = 1}^n {\bX}_{kj}\right| \\
\Gamma_n & = \max_{1 \le j \le d} \frac{1}{n}\sum_{k = 1}^n {\bX}_{kj}^2 \,.
\end{align*}
Now, as we have already pointed out, conditional on $\bX$, ${\bZ}_i$'s are i.i.d. random vectors. Therefore, the symmetrization argument holds, and following the same line of argument as of \cite{negahban2009unified}, we can conclude an analog of their equation (78): 
\begin{align*}
\bbE[Z(t) \mid \bX] & \le 8K_3t \bbE_{\eps, Z}\left[\max_{1 \le j \le 2d}\left|\frac{1}{n}\sum_{i=1}^n \eps_i {\bZ}_{ij}\mathds{1}_{|{\bZ}_i^\top \gamma_0| \le T}\right| \mid \bX\right] \\
& = \frac{8K_3t}{\sqrt{n}} \bbE_{\bZ \mid \bX}\left[\bbE_{\eps\mid \bZ, \bX}\left[\max_{1 \le j \le 2d}\left|\frac{1}{\sqrt{n}}\sum_{i=1}^n \eps_i {\bZ}_{ij}\mathds{1}_{|{\bZ}_i^\top \gamma_0| \le T}\right|\right]\right]
\end{align*}
First, observe that $\{\eps_1, \dots, \eps_n\}$ are Rademacher random variables (which are also subgaussian with sub-gaussian constant being 1), and therefore, conditionally on $\bZ$, 
$$
\frac{1}{\sqrt{n}}\sum_{i=1}^n \eps_i {\bZ}_{ij}\mathds{1}_{|{\bZ}_i^\top \gamma_0| \le T} \text{ is subgaussian with norm } \sqrt{\frac{1}{n}\sum_{i=1}^n {\bZ}^2_{ij}\mathds{1}_{|{\bZ}_i^\top \gamma_0| \le T}} \le \sqrt{\frac1n \sum_{i=1}^n {\bZ}^2_{ij}} \,.
$$
Therefore, from standard probability tail bound calculation, we have: 
$$
\bbE_{\eps\mid \bZ, \bX}\left[\max_{1 \le j \le 2d}\left|\frac{1}{\sqrt{n}}\sum_{i=1}^n \eps_i {\bZ}_{ij}\mathds{1}_{|{\bZ}_i^\top \gamma_0| \le T}\right|\right] \le 
\sqrt{2\log{2d}}\max_{1 \le j  \le 2d} \sqrt{\frac1n \sum_{i=1}^n {\bZ}^2_{ij}} \,.
$$
Therefore, we have: 
\begin{equation}
\label{eq:Zt_bound_1}
\bbE[Z(t) \mid \bX] \le 8\sqrt{2}K_3t \sqrt{\frac{\log{2d}}{n}} \bbE\left[\max_{1 \le j  \le 2d} \sqrt{\frac1n \sum_{i=1}^n {\bZ}^2_{ij}} \mid \bX\right]
\end{equation}
We next analyze the first $d$ co-odinates of $\bZ_i$ (which is $\{X_{ij}\}_{1 \le j \le d}$) and the last $d$ coordinates of $\bZ_i$, which is $X^\top A^*_{i*}$ separately. For the first $d$ coordinates, conditional on $\bX$, we have: 
$$
\max_{1 \le j \le d} \sqrt{\frac{1}{n}\sum_{i = 1}^n \bZ_{ij}^2} =  \max_{1 \le j \le d} \sqrt{\frac{1}{n}\sum_{i = 1}^n \bX_{ij}^2} = \sqrt{\Gamma_n} \,.
$$
Now, for any $d+1 \le j \le 2d$, we have:  
$$
{Z}_{ij} = \frac{1}{\sqrt{np}}\sum_k A_{ik}{\bX}_{kj} = \frac{1}{\sqrt{np}}\sum_k (A_{ik} - p){\bX}_{kj} + \sqrt{\frac{p}{n}}\sum_k {\bX}_{kj} \triangleq \bar {\bZ}_{ij} + \sqrt{\frac{p}{n}}\sum_k {\bX}_{kj} \,.
$$
Using this we have: 
\begin{align}
\bbE\left[\max_{d+1 \le j  \le 2d} \sqrt{\frac1n \sum_{i=1}^n {\bZ}^2_{ij}} \mid \bX\right] & \le \bbE\left[\max_{d+1 \le j  \le 2d} \sqrt{\frac1n \sum_{i=1}^n \bar {\bZ}^2_{ij}} \mid \bX\right] + \sqrt{p} \max_{1 \le j  \le d} \left|\frac{1}{\sqrt{n}}\sum_k {\bX}_{kj}\right|  \notag \\
&= \bbE\left[\max_{d+1 \le j  \le 2d} \sqrt{\frac1n \sum_{i=1}^n \bar {\bZ}^2_{ij}} \mid \bX\right] + \sqrt{p}V_n \notag \\
\label{eq:Z_bound_1} & \le \sqrt{\bbE\left[\max_{d+1 \le j \le 2d} \frac1n \sum_{i=1}^n \bar {\bZ}^2_{ij} \mid \bX\right]}+ \sqrt{p}V_n \,.
\end{align}
We next establish an upper bound on the conditional expectation of the maximum of the mean of $\bar {\bZ}_{ij}^2$. We first claim that $\bar {\bZ}_{ij}$ is a SG($\sigma_j)$ random variable with the value of $\sigma_j$ defined in equation \eqref{eq:Z_tilde_sg_norm} below. To see this, first note that, from Theorem 2.1 of \cite{ostrovsky2014exact}, we know $(A_{ik} - p)$ is SG($\sqrt{2}Q(p)$). Therefore, we have: 
\begin{equation}
\label{eq:Z_tilde_sg_norm}
\bar {Z}_{ij} = \frac{1}{\sqrt{np}}\sum_k (A_{ik} - p){\bX}_{kj} \in \mathrm{SG}\left(\sqrt{\frac{2Q^2(p)}{p} \frac1n \sum_k {\bX}_{kj}^2}\right) \triangleq \mathrm{SG}(\sigma_j) \,.
\end{equation}
Let $\mu_j = \bbE[\bar {\bZ}^2_{ij} | \bX]$. Then, by equation (37) of \cite{honorio2014tight}, we know $\bar {\bZ}^2_{ij} - \mu_j$ is a sub-exponential random variable, in particular: 
$$
\bar {\bZ}^2_{ij} - \mu_j \in \mathrm{SE}\left(\sqrt{32}\sigma_j, 4\sigma_j^2\right) \,.
$$
Hence we have, by equation (2.18) of \cite{wainwright2019high} (we use the version for the two-sided bound here): 
\begin{equation}
\label{eq:Z_bound_concentration_1}
\bbP\left(\left|\frac1n \sum_{i = 1}^n\left(\bar {\bZ}_{ij}^2 - \mu_j\right)\right| \ge t\right) \le \exp{\left(-\frac{1}{8\sigma_j^2}\min\left\{\frac{nt^2}{8}, nt\right\}\right)} \,.
\end{equation}
Going back to \eqref{eq:Z_bound_1}, we have: 
\begin{align*}
    \bbE\left[\max_{1 \le j \le d} \frac1n \sum_{i=1}^n \bar {\bZ}^2_{ij} \mid \bX\right] & = \bbE\left[\max_{1 \le j \le d} \left\{\left(\frac1n \sum_{i=1}^n (\bar {\bZ}^2_{ij} - \mu_j)\right) + \mu_j\right\} \mid \bX\right] \\
    & \le \bbE\left[\max_{1 \le j \le d} \left|\frac1n \sum_{i=1}^n (\bar {\bZ}^2_{ij} - \mu_j)\right| \mid \bX\right] + \max_{1 \le j \le d} \mu_j
\end{align*}
Now, bound the first term using the concentration inequality \eqref{eq:Z_bound_concentration_1}. Towards that end, define $\sigma_* = \max_j \sigma_j$ and observe that $\sigma_* = \sqrt{2Q^2(p)/p}\sqrt{\Gamma_n}$. 
\begin{align*}
    & \bbE\left[\max_{1 \le j \le d} \left|\frac1n \sum_{i=1}^n (\bar {\bZ}^2_{ij} - \mu_j)\right| \mid \bX\right] \le 8 \max\{\sigma_* \sqrt{\log{d}}, \sigma_*^2 \log{d}\} \,.
\end{align*}
Furthermore, observe that: 
\begin{align*}
    \mu_j = \bbE[\bar {\bZ}_{ij}^2 \mid \bX] & = \bbE\left[\left(\frac{1}{\sqrt{np}}\sum_k (A_{ik} - p)X_{kj}\right)^2 \mid \bX\right] = (1-p) \frac1n \sum_k X^2_{kj} \,,
\end{align*} 
which implies, $\max_{1 \le j \le d} \mu_j = (1-p) \Gamma_n$. Using these bounds in equation \eqref{eq:Z_bound_1}, we have: 
\begin{equation}
    \label{eq:Z_bound_2}
    \bbE\left[\max_{1 \le j  \le d} \sqrt{\frac1n \sum_{i=1}^n {\bZ}^2_{ij}} \mid \bX\right] \le \sqrt{\max\{\sigma_* \sqrt{\log{d}}, \sigma_*^2 \log{d}\} + (1-p) \Gamma_n} + \sqrt{p}V_n
\end{equation}
This, along, with equation \eqref{eq:Zt_bound_1},yields: 
\begin{align}
\label{eq:Zt_bound_2}
\bbE[Z(t) \mid \bX] & \le Ct\sqrt{\frac{\log{d}}{n}}\left(\sqrt{\max\{\sigma_* \sqrt{\log{d}}, \sigma_*^2 \log{d}\} + (1-p) \Gamma_n} + \sqrt{p}V_n\right) \notag \\
& \triangleq Ct\sqrt{\frac{\log{d}}{n}} g(\bX, p, d) \,.
\end{align}
Using this in the inequality \eqref{eq:Zt_conc_bound_1} yields: 
\begin{equation}
    \label{eq:Zt_conc_bound_2}
    \bbP\left(Z(t) \ge Ct\sqrt{\frac{\log{d}}{n}} g(\bX, p, d) + y \right) \le \exp{\left(-\frac{8ny^2}{\tau^4}\right)} \,.
\end{equation}
We next provide an upper bound for $g(\bX, p, d)$ term. Note that in the expression of $g(\bX, p, d)$, there are two key terms: $\Gamma_n, V_n$. Therefore, if we can obtain an upper bound on them individually, we can obtain an upper bound on $g(\bX, p, d)$. We start with $V_n$; for any fixed $j$, ${\bX}_{kj}$'s are i.i.d sub-gaussian random variable with constant $\sigma^2_X$. Therefore, we have: 
$$
\bbP\left(\left|\frac{1}{\sqrt{n}}\sum_{k = 1}^n {\bX}_{kj}\right| \ge t\right) \le 2e^{-\frac{t^2}{2\sigma^2_X}} 
$$
As a consequence, by union bound: 
$$
\bbP(V_n \ge t) = \bbP\left(\max_j \left|\frac{1}{\sqrt{n}}\sum_{k = 1}^n {\bX}_{kj}\right| \ge t\right) \le 2e^{\log{d} -\frac{t^2}{2\sigma^2_X}}
$$
Therefore, choosing $t = \sigma_X \sqrt{2c_1 \log{d}}$ (where $c_1 \ge 2$), we have: 
\begin{equation}
\label{eq:bound_Vn}
V_n \le  \sigma_X \sqrt{2c_1 \log{d}}) \ \ \text{with probability } \ge 1 - 2\exp{\left(-(c_1 -1)\log{d}\right)} \,.
\end{equation}
Call this event $\Omega_{n, {\bX}, 1}$. Our next target is $\Gamma_n$ which can be further upper bounded by: 
$$
\Gamma_n = \max_j \frac1n \sum_{k = 1}^n {\bX}_{kj}^2 \le \max_j \frac1n \sum_{k = 1}^n ({\bX}_{kj}^2 - \Sigma_{X, jj}) + \max_{j} \Sigma_{X, jj} \triangleq \bar \Gamma_n + \max_{j} \Sigma_{X, jj} \,.
$$
As we have assumed $\max_j \Sigma_{X, jj} \le C_1$ for some constant $C_1$, we need to bound $\bar \Gamma_n$. Here, we also use the fact that ${\bX}_{jk}^2 - \Sigma_{X, jj} \in \mathrm{SE}(\sqrt{32}\sigma_X, 4\sigma_X^2)$. Therefore, by equation (2.18) of \cite{wainwright2019high} we have: 
$$
\bbP\left(\left|\frac1n \sum_{k = 1}^n ({\bX}_{kj}^2 - \Sigma_{X, jj})\right| \ge t\right) \le 2\exp{\left(-\frac{n}{8\sigma^2_X}\min\left\{\frac{t^2}{8}, t\right\}\right)}
$$
Therefore, by union bound: 
$$
\bbP\left(\max_{1 \le j\le d}\left|\frac1n \sum_{k = 1}^n ({\bX}_{kj}^2 - \Sigma_{X, jj})\right| \ge t\right) \le 2\exp{\left(\log{d} -\frac{n}{8\sigma^2_X}\min\left\{\frac{t^2}{8}, t\right\}\right)}
$$
Choosing $t = \max_j \Sigma_{X, jj}$, we have: 
\begin{equation}
\label{eq:bound_Gamma_n}
\Gamma_n \le 2\max_j \Sigma_{X, jj} \le 2C_1 \ \ \text{with probability } \ge 1 -  2\exp{\left(\log{d} - c_2 n\right)} \,.
\end{equation}
Call this event $\Omega_{n, {\bX}, 2}$. 
Now, going back to the definition of $g(\bX, p, d)$ in equation \eqref{eq:Zt_bound_2}, we first note that, on the event $\Omega_{n,{\bX},1} \cap \Omega_{n, {\bX}, 2}$:  
$$
\sigma_* = \sqrt{\frac{2Q^2(p)}{p}\Gamma_n} \le 2\sqrt{\frac{C_1 Q^2(p)}{p}} \triangleq 2\sqrt{C_1 \Psi(p)}\,.
$$
It is immediate from the definition of $Q(p)$ that $\Psi(p) \sim 1/(-4p\log{p})$ for $p$ close to $0$. Therefore for all small $p$ and large $d$, $\sigma_*^2 \log{d} \ge 1$ and consequently $\max\{\sigma_* \sqrt{\log{d}}, \sigma_*^2 \log{d}\} = \sigma_*^2 \log{d}$. Hence, we have on the event $\Omega_{n,{\bX},1} \cap \Omega_{n, {\bX}, 2}$:
$$
g(\bX, p, d) \le C_2 \sqrt{\Psi(p) \log{d}} + 2C_1 + \sigma_X \sqrt{2c_1p \log{d}} \,.
$$
It is immediate that the dominating term is the first term, which implies: 
$$
g(\bX, p, d) \le 3C_2 \sqrt{\Psi(p) \log{d}} \,.
$$
We now use this bound in equation \eqref{eq:Zt_conc_bound_1}. Note that: 
\begin{align*}
    & \bbP\left(Z(t) \ge \bbE[Z(t) \mid \bX] + y \right) \\
    & \ge  \bbP\left(Z(t) \ge \bbE[Z(t) \mid \bX] + y, \Omega_{n,{\bX}, 1} \cap \Omega_{n, {\bX}, 2} \right)  \\
    & \ge \bbP\left(Z(t) \ge  3CC_2 t\log{d}\sqrt{\frac{\Psi(p)}{n}}  + y, \Omega_{n,{\bX}, 1} \cap \Omega_{n, {\bX}, 2} \right) \\
    & \ge \bbP\left(Z(t) \ge  3CC_2 t\log{d}\sqrt{\frac{\Psi(p)}{n}}  + y\right) + \bbP(\Omega_{n,{\bX}, 1} \cap \Omega_{n, {\bX}, 2} ) - 1
\end{align*}
Therefore, 
\begin{align}
    \label{eq:Zt_conc_bound_4}
     \bbP\left(Z(t) \ge  3CC_2 t\log{d}\sqrt{\frac{\Psi(p)}{n}}  + y\right)  & \le \exp{\left(-\frac{8ny^2}{\tau^4}\right)} + \bbP((\Omega_{n,{\bX}, 1} \cap \Omega_{n, {\bX}, 2})^c) \notag \\
     & \le \exp{\left(-\frac{8ny^2}{\tau^4}\right)} + 2\exp{\left(-(c_1 -1)\log{d}\right)} + 2\exp{\left(\log{d} - c_2 n\right)} \,.
\end{align}
Choosing $y = C_3 t \log{d}\sqrt{\Psi(p)/n}$, we have: 
\begin{align}
    \label{eq:Zt_conc_bound_3}
    & \bbP\left(Z(t) \ge  3CC_2 t\log{d}\sqrt{\frac{\Psi(p)}{n}}  + C_3t \log{d} \sqrt{\frac{\Psi(p)}{n}}\right) \notag \\
    & \le \exp{\left(-\frac{8C_3^2 t^2 \log^2{d}\Psi(p)}{\tau^4}\right)} + 2\exp{\left(-(c_1 -1)\log{d}\right)} + 2\exp{\left(\log{d} - c_2 n\right)} \,.
\end{align}

{\bf Step 4: }Our last modification, not modification per se, but an application of peeling argument. Infact we want an upper bound on the event $\cE$ defined as: 
$$
\cE = \left\{Z(t) \ge 3eCC_2 t\log{d}\sqrt{\frac{\Psi(p)}{n}}  + C_3et \log{d} \sqrt{\frac{\Psi(p)}{n}}\ \text{ for some }t \in [1, \sqrt{d}] \right\} \,.
$$
Note that $t$ denotes the $\ell_1$ norm of a a vector $u$ such that $\|u\|_2 = 1$. Therefore, $t \in [1, \sqrt{d}]$. Also recall that $Z(t)$ is the suprema of the empirical process over all vectors $u$ such that $\|u\|_2 = 1$ and $\|u\|_1 \le t$. In peeling, we write $\cE$ as the union of disjoint events. Define $\cE_j$ as: 
$$
\cE_j = \left\{Z(t) \ge 3eCC_2 t\log{d}\sqrt{\frac{\Psi(p)}{n}}  + C_3et \log{d} \sqrt{\frac{\Psi(p)}{n}}\ \text{ for some }t \in [\sqrt{d}/e^{j}, \sqrt{d}/e^{j-1}] \right\} \,.
$$
Therefore, 
$$
\cE \subseteq \cup_{j = 1}^{\lceil \frac12\log{d}\rceil} \cE_j \implies \bbP(\cE) \le \sum_{j = 1}^{\lceil \frac12\log{d}\rceil} \bbP(\cE_j) \,.
$$
Now observe that, for any $t \in [\sqrt{d}/e^{j}, \sqrt{d}/e^{j-1}]$, we have $Z(t) \le Z(\sqrt{d}/e^{j-1})$ and also 
\begin{align*}
3eCC_2 t\log{d}\sqrt{\frac{\Psi(p)}{n}}  + C_3et \log{d} \sqrt{\frac{\Psi(p)}{n}} & \ge 3eCC_2 \frac{\sqrt{d}}{e^{j}}\log{d}\sqrt{\frac{\Psi(p)}{n}}  + C_3e \frac{\sqrt{d}}{e^{j}} \log{d} \sqrt{\frac{\Psi(p)}{n}} \\
 & \ge 3CC_2 \frac{\sqrt{d}}{e^{j-1}}\log{d}\sqrt{\frac{\Psi(p)}{n}}  + C_3\frac{\sqrt{d}}{e^{j-1}} \log{d} \sqrt{\frac{\Psi(p)}{n}} \,.
\end{align*}
Therefore: 
\begin{align*}
    \bbP(\cE_j) & \le \bbP\left(Z\left(\frac{\sqrt{d}}{e^{j-1}}\right) \ge 3CC_2 \frac{\sqrt{d}}{e^{j-1}}\log{d}\sqrt{\frac{\Psi(p)}{n}}  + C_3\frac{\sqrt{d}}{e^{j-1}} \log{d} \sqrt{\frac{\Psi(p)}{n}}\right) \\
    & \le \exp{\left(-\frac{8C_3^2 d \log^2{d}\Psi(p)}{e^{2j - 2}\tau^4}\right)} + 2\exp{\left(-(c_1 -1)\log{d}\right)} + 2\exp{\left(\log{d} - c_2 n\right)} \\
    & \le  \exp{\left(-c_4 \log^2{d}\Psi(p)\right)} + 2\exp{\left(-(c_1 -1)\log{d}\right)} + 2\exp{\left(\log{d} - c_2 n\right)} 
\end{align*}
Hence: 
$$
\bbP(\cE) \le  \exp{\left(\frac12 \log{d} + 1-c_4 \log^2{d}\Psi(p)\right)} + 2\exp{\left(1-(c_1 -3/2)\log{d}\right)} + 2\exp{\left(\frac{3}{2}\log{d} + 1 - c_2 n\right)}  \,.
$$
On the event $\cE^c$ (which is a high probability event):  
$$
Z(t) \le 3eCC_2 t\log{d}\sqrt{\frac{\Psi(p)}{n}}  + C_3et \log{d} \sqrt{\frac{\Psi(p)}{n}}\ \text{ for all }t \in [1, \sqrt{d}] \,.
$$
Now let us conclude with the entire roadmap of the proof. First, following the same line of argument as of \cite{negahban2009unified} we show that 
\begin{align*}
\delta L_n(u) & \ge L_\psi(T) \frac1n \sum_i \phi_\tau\left((u^\top {\bZ}_i)^2\mathds{1}_{|{\bZ}_i^\top\gamma_0| \le T}\right) \\
& = L_\psi(T) \|u\|^2 \frac1n \sum_i \phi_\tau\left(\left(u^\top {\bZ}_i/\|u\|\right)^2\mathds{1}_{|{\bZ}_i^\top\gamma_0| \le T}\right) \\
& = L_\psi(T) \|u\|^2 \bbP_n(g_{u/\|u\|}(Z)) \\
& = L_\psi(T) \|u\|_2^2 \left\{P(g_{u/\|u\|}(Z)) + (\bbP_n - P)g_{u/\|u\|}(Z)\right\} 
\end{align*}
We have proved in Modification 1 that $P(g_{u/\|u\|}(Z)) \ge \kappa_l$. Therefore, 
$$
\delta L_n(u) \ge  L_\psi(T) \|u\|_2^2 \left\{\kappa_l + (\bbP_n - P)g_{u/\|u\|}(Z)\right\} 
$$
Now for any $u$, 
\begin{align*}
(\bbP_n - P)g_{u/\|u\|}(Z) & \le Z\left(\left\|\frac{u}{\|u\|_2}\right\|_1\right)  \\
& \le \left(3eCC_2 \log{d}\sqrt{\frac{\Psi(p)}{n}}  + C_3e \log{d} \sqrt{\frac{\Psi(p)}{n}}\right)\frac{\|u\|_1}{\|u\|_2} \,.
\end{align*}
Hence, we conclude that: 
$$
\delta L_n(u) \ge L_\psi(T) \|u\|_2^2 \left\{\kappa_l -  \left(C_4 \log{d} \sqrt{\frac{\Psi(p)}{n}}\right)\frac{\|u\|_1}{\|u\|_2}\right\} \, ,
$$
where $L_\psi(T)$ is a constant. Using the fact $\delta L_n(u) = u^\top \bZ^\top \bZ u /n$, we conclude
\begin{equation}
	\label{eq:rsc}
	\frac{u^\top \bZ^\top \bZ u}{n} \geqslant \kappa \|u\|_2^2 - C_5 \log{d} \sqrt{\frac{\Psi(p)}{n}} \|u\|_1 \|u\|_2.
\end{equation}

We next derive the upper bound on $(u^\top \hat \bZ^\top \hat \bZ u)/n$. Recall that the difference between $\hat \bZ$ and $\bZ$ is that in the former, $A_{ij}$ is scaled by $\sqrt{(n-1)\hat p}$, whereas $A_{ij}$ is scaled by $\sqrt{(n-1)p}$ in the later. Expanding the quadratic form yields:
\allowdisplaybreaks
\begin{align*}
    \frac{u^\top \hat \bZ^\top \hat \bZ u}{n} = \frac{1}{n}\sum_i (\hat Z_i^\top u)^2 & = \frac{1}{n}\sum_i (u_1^\top X_i + u_2^\top \bX^\top A_{i,*}/\sqrt{(n-1)\hat p})^2 \\
    & = \frac1n \sum_i \left(u_1^\top X_i + \sqrt{\frac{p}{\hat p}}\frac{1}{\sqrt{(n-1)p}}u_2^\top \bX^\top A_{i, *}\right)^2 \\
     & = \frac1n \sum_i \left(\sqrt{\frac{p}{\hat p}} u^\top Z_i  + \left(1- \sqrt{\frac{p}{\hat p}}\right)u_1^\top X_i\right)^2 \\
     & \ge \frac{p}{2\hat p}\frac{1}{n}\sum_i (u^\top Z_i)^2 - \left(1- \sqrt{\frac{p}{\hat p}}\right)^2\frac{1}{n} \sum_i (u_1^\top X_i)^2 \,.
\end{align*}
Here the last inequality follows from the fact that $(a+b)^2 \ge (a^2/2) - b^2$. Now, in Step 2.1 of the proof of Theorem \ref{thm: general model upper bound without transfer learning}, we show that with probability going to one, $|\hat p - p| \le p/\sqrt{n}$. This implies $(1/2) p \le \hat p \le 2p$ with high probability, which further implies $p/\hat p \ge 1/2$. 
On the other hand, we have: 
\begin{align*}
    \left|1 - \sqrt{\frac{\hat p}{p}}\right| = \frac{|\sqrt{\hat p} - \sqrt{p}|}{\sqrt{\hat p}} = \frac{|\hat p - p|}{\sqrt{\hat p}(\sqrt{\hat p} + \sqrt{p})} \le \frac{1}{\sqrt{n}}\frac{p}{\sqrt{\hat p p}} \le \sqrt{\frac{2}{n}} \,.
\end{align*}
Therefore, we conclude: 
\begin{align*}
\frac{u^\top \hat \bZ^\top \hat \bZ u}{n} & \ge \frac{u^\top\bZ^\top \bZ u}{n} - 3\sqrt{\frac{2}{n}} \frac{u_1^\top \bX^\top \bX u_1}{n} \\
& = \frac{u^\top\bZ^\top \bZ u}{n} - 3\sqrt{\frac{2}{n}} \left(u_1^\top \Sigma_X u_1 + u_1^\top \left(\frac{\bX^\top \bX }{n} - \Sigma_X\right)u_1\right) \\
& \ge \frac{u^\top\bZ^\top \bZ u}{n} - 3\sqrt{\frac{2}{n}} \left(u_1^\top \Sigma_X u_1 + \left\|\frac{\bX^\top \bX }{n} - \Sigma_X\right\|_{\infty, \infty} \|u_1\|_1^2\right) 
\end{align*}
Now, to complete proof, we use the following facts: i) $\lambda_{\min}(\Sigma_X) \ge \kappa$ (Assumption (C1) and ii) by an application of Hoeffding's inequality along with a union bound (Lemma 1 of \cite{ravikumar2011high}): 
\begin{align*}
\bbP\left(\left\|\frac{\bX^\top \bX }{n} - \Sigma_X\right\|_{\infty, \infty} \ge t\right) & \le \sum_{j, k} \bbP\left(\left|\frac1n \sum_i X_{ij}X_{ik} - \Sigma_{X, jk
}\right| \ge t\right) \\
& \le 4\exp{\left(2\log{d} - \frac{Cnt^2}{\max{\Sigma^2_{X, ii}}}\right)} 
\end{align*}
for all $t \le C_1 \max_i \Sigma_{X, ii}$. As we have assumed $\max_i \Sigma_{X, ii}$ is uniformly upper bounded (Assumption (C1)), choosing $t = K \sqrt{\log{d}/n}$ (for a suitable constant $K$ so that $CK^2 > 2\max_i \Sigma_{X, ii}^2$, we have with probability $1 - d^{-\alpha}$ 
$$
\left\|\frac{\bX^\top \bX }{n} - \Sigma_X\right\|_{\infty, \infty} \le K\sqrt{\frac{\log{d}}{n}} \,.
$$
Therefore, we conclude: 
$$
\frac{u^\top \hat \bZ^\top \hat \bZ u}{n}  \ge \frac{u^\top\bZ^\top \bZ u}{n} - 3\sqrt{\frac{2}{n}}\kappa - 3K \frac{\sqrt{2\log{d}}}{n}\|u_1\|_1^2 \,.
$$
\end{proof}

\subsection{Proof of lemma \ref{lemma:ubforA}}
\begin{proof}
{\bf Part 1: Upper bound for $\|A^*\|_F^2$.} To obtain a bound for $\| A^* \|_F$, we use Chebychev inequality. 
First, observe that: 
$$
\bbE[\|A^* \|_F^2]  = \frac{1}{(n-1)p} \bbE[  \|A\|_F^2] =\frac{1}{(n-1)p} \sum_{i \neq j}\bbE[A_{ij}^2] = n \,.
$$
For the variance of the Frobenious norm: 
\begin{align*}
\var(\| A^* \|_F^2) = \frac{1}{((n-1)p)^2} \var (\| A \|_F^2) & = \frac{1}{((n-1)p)^2} \var (\sum_{i \neq j} A^2_{ij}) \\
& = \frac{n(n-1)}{(n-1)^2 p^2} \var(A_{11}^2) \\
& \le \frac{n}{n-1} \frac{1}{p^2} \bbE[A_{11}^4] \le \frac{2}{p} \,.
\end{align*}
Therefore, by Chebychev's inequality, we have: 
$$
\bbP(|\| A^* \|_F^2 - \bbE[\| A^* \|_F^2]| \geqslant n) \leqslant \var(\| A^* \|_F^2)/n^2 \leqslant \frac{2}{n^2p} \,.
$$
Therefore, we have $\| A^* \|_F^2 \leqslant \bbE\| A^* \|_F^2 + n \leqslant 2n$ with probability $1 - 2(n^2 p)^{-1}$, thus formula \eqref{eq:A_frob_bound} could be obtained.
\\
\noindent
{\bf Part 2: Upper bound for $\| A^* \|_2$.} 
To establish a bound for $\| A^* \|_2$, first we have $
\| A^* \|_2 \leqslant \|A^* - \bbE A^*\|_2 + \|\bbE A^* \|_2$.
A bound on $\|\bbE A^*\|_2$ directly follows from the definition:  
$$
\|\bbE A^*\|_2 = \{(n-1)p\}^{-1/2} \|\bbE A\|_2 = \{(n-1)p\}^{-1/2} \|p(\mathbf{1_n}\mathbf{1_n}^\top - I_n)\|_2 \leqslant \sqrt{np} \,.
$$
Next we bound $ \|A^* - \bbE A^*\|_2 = \|A - \bbE A\|_2/\sqrt{(n-1)p}$. Using Corollary 3.12 and Remark 3.13 of \cite{bandeira2016sharp} (with $\eps = 1/2$), we have 
$$
\bbP(\|A - \bbE A\|_2 \geqslant 3\sqrt{2}\tilde \sigma + t ) \leqslant \exp(\log{n} - t^2 /c\sigma_*^2) \,,
$$
where $\tilde \sigma = \max_i \sqrt{\sum_j \var A_{ij}} = \sqrt{(n-1)p(1-p)} \leqslant \sqrt{np}$ and $\sigma_* = \max_{i, j} |A_{ij}| \le 1$. Therefore, we obtain: 
$$
\bbP (\|A - \bbE A\|_2 \geqslant 3\sqrt{2}\sqrt{np} + t) \leqslant \exp(\log n -t^2/c) \,.
$$
Taking $t = \sqrt{np}$, we have 
$$\bbP(\|A - \bbE A\|_2 \geqslant (1 + 3\sqrt{2})\sqrt{np}) \leqslant \exp(\log n -np/c)
$$
which implies
$$
\bbP(\|A^* - \bbE A^*\|_2 \geqslant (1 + 3\sqrt{2})) \leqslant \exp(\log{n} - np/c) \,.
$$
Combining these bounds, we have $\|A^*\|_2 \leqslant \sqrt{np} + (1 + 3\sqrt{2}) \leqslant 2 \sqrt{np}$ with probability $1 - \exp(\log{n} - np/c)$. With $np \gg \log n$ (Assumption (C3) ), we have $1 - \exp(\log{n} - np/c) \to 1$.
\\
\noindent
{\bf Part 3: Upper bound for $\| {A^*}^\top A^* \|_2$.} We have $\| {A^*}^\top A^* \|_2 = \|A^*\|_2^2\leqslant 4 np$ with probability $1 - \exp(\log{n} - np/c)$.
\\
\noindent
{\bf Part 4: Upper bound for $\|{A^*}^\top A^*\|_F^2$.} 
We divide the entire proof into three steps: in the first step, we calculate the expected value of \(\|{A^*}^\top A^*\|_F^2\); in the second step, we provide an upper bound on its variance; in the third step, we summarize the results from the first two steps and use Chebychev's inequality to complete the proof.
\\
\indent
{\bf Step 1: Computing Expectation.} 
For any $1 \leqslant i\leqslant n$, we have: 
\allowdisplaybreaks
\begin{align*}
    \bbE {({A^*}^\top A^*)_{ij}}^2 & = \{(n-1) p\}^{-2} \bbE \{\left(\sum_{k = 1}^n A_{ki}A_{kj}\right)^2\} \\
    & = \{(n-1) p\}^{-2} [ \sum_{k = 1}^n \bbE\left\lbrace (A_{ki}A_{kj})^2\right\rbrace  + \sum_{k \neq l} \bbE\left\lbrace (A_{ki}A_{kj})(A_{li}A_{lj})\right\rbrace ] \\
    & \le \{(n-1) p\}^{-2} ( np^2 + n^2p^4)  \lesssim n^{-1} + p^2 \\
    \bbE\{({A^*}^\top A^*)_{ii}^2\} & = \{(n-1)p\}^{-2} \bbE \{(\sum_{k = 1}^n A^2_{ki})^2\} \\
    & = \{(n-1)p\}^{-2} \{\sum_k \bbE(A^4_{ki}) + \sum_{k \neq l}\bbE (A_{ki}^2 A_{li}^2)\}  \\
    & \le \{(n-1)p\}^{-2} (np + n^2 p^2 ) \lesssim (np)^{-1} + 1 \,.
\end{align*}
Therefore, we have 
$$
\bbE\{\|{A^*}^\top A^*\|_F^2\} = \sum_{i=1}^n \bbE\{({A^*}^\top A^*)_{ii}^2\} + \sum_{i \neq j}\bbE\{({A^*}^\top A^*)_{ij}^2\} \lesssim n\{(np)^{-1} + 1\} + n^2 (n^{-1} + p^2 ) \lesssim n + n^2p^2 \,,
$$
where the last inequality follows from $p \geqslant n^{-1}$.

{\bf Step 2: Computing Variance: }
For the variance part, we have: 
\begin{align*}
    \var(\|{A^*}^\top A^*\|_F^2) & = \{(n-1)p\}^{-4} \var(\|A^\top A\|_F^2) \\
    & \lesssim (np)^{-4} \var\{\sum_{i, j} (A^\top A)^2_{i,j}\} \\
    & = (np)^{-4} [\sum_{i, j}\var\{(A^\top A)^2_{i,j}\}+ \sum_{(i, j) \neq (k, l)} \cov\{(A^\top A)^2_{i,j}, (A^\top A)^2_{k,l}\}] \\
    & \triangleq (np)^{-4} (T_1 + T_2) \,.
\end{align*}
We next bound $T_1$ and $T_2$ separately. For that, we use some basic bounds on the moments of a binomial random variable: if $X \sim \operatorname{Bernoulli}(n, p)$, then $\bbE X^k \leqslant C n^kp^k$ for all $k \in \{1, 2, 3, 4\}$, for some universal constant $C$ as long as $np \rightarrow \infty$ (e.g., see \cite{rohe2011spectral, lei2015consistency}). 
Observe that $(A^\top A)_{ii} \sim \operatorname{Bernoulli}(n-1, p)$ and $(A^\top A)_{ij} \sim \operatorname{Bernoulli}(n-2, p^2)$ for $i \neq j$. 
For $T_1$, we have: 
\begin{align*}
\sum_{i, j}\var\{(A^\top A)^2_{ij}\} & = \sum_{i = 1}^n \var\{(A^\top A)^2_{ii}\} + \sum_{i \neq j }\var\{(A^\top A)^2_{ij}\}  \\
& \leqslant \sum_{i = 1}^n \bbE\{(A^\top A)^4_{ii}\} + \sum_{i \neq j }\bbE\{(A^\top A)^4_{ij}\} \lesssim n^5 p^4 + n^6p^8 \,.
\end{align*}

For $T_2$, note that if 
$(i, j, k, l)$ all are different, then covariance is $0$ as the terms are independent. Therefore, we only consider the cases when there are three or two distinct indices. We first deal with the terms of two distinct indices, i.e., $\cov\{(A^\top A)^2_{ii}, (A^\top A)^2_{ij}\}$ where $ i \neq j$. There are almost $n^2$ terms are of this form. For each of these type of terms: 
\begin{align*}
    \cov\{(A^\top A)^2_{ii}, (A^\top A)^2_{ij}\} & \leqslant \bbE \{(A^\top A)^2_{ii} (A^\top A)^2_{ij}\} \\
    & =  \bbE \{(\sum_{k, k' = 1}^n A^2_{ki}A_{k'i}A_{k'j})^2\} \\
    & = \bbE\{(\sum_k A_{ki}A_{kj} + \sum_{k \neq k'} A_{ki} A_{k'i}A_{k'j})^2\} \\
    & \le  2 [  \{\bbE(\sum_k A_{ki}A_{kj})^2\} + \bbE \{(\sum_{k \neq k'} A_{ki} A_{k'i}A_{k'j})^2\} ] \lesssim n^2 p^4 + n^4p^6 \,.
\end{align*}
Therefore, we have 
$$
\sum_{i \neq j}  \cov\{(A^\top A)^2_{ii}, (A^\top A)^2_{ij}\} \leqslant n^4 p^4 + n^6 p^6 \,. 
$$
Next, we bound the covariance terms of the form $\cov((A^\top A)^2_{ij}, (A^\top A)^2_{jk})$, i.e. two terms share an index with $i \neq j \neq k$. There are almost $n^3$ such terms. For each term, we have 
\begin{align*}
    \cov\{(A^\top A)^2_{ij}, (A^\top A)^2_{jk}\} & \leqslant \bbE \{(\sum_{l, l'} A_{li}A_{lj}A_{l'i}A_{l'k})^2\} \\
    & = \bbE \{(\sum_l A^2_{li}A_{lj}A_{lk} + \sum_{l \neq l'} A_{li}A_{lj}A_{l'i}A_{l'k})^2\} \\
    & \le 2 [\bbE \{(\sum_l A_{li}A_{lj}A_{lk})^2\} + \bbE \{(\sum_{l \neq l'} A_{li}A_{lj}A_{l'i}A_{l'k})^2\}]  \lesssim n^2p^6 + n^4p^8 \,.
\end{align*}
As there are almost $n^3$ terms in this form, we have 
$$
\sum_{i \neq j \neq k} \cov\{(A^\top A)^2_{ij}, (A^\top A)^2_{jk}\} \lesssim n^5p^6 + n^7p^8 \,.
$$ 
This implies $T_2 \lesssim n^4p^4 + n^6p^6 + n^5p^6 + n^7p^8$. Combining the bounds on $T_1$ and $T_2$, we conclude 
$$
\var(\|A^\top A\|_F^2) \lesssim n^5 p^4 + n^6p^8 + n^4p^4 + n^6p^6 + n^5p^6 + n^7p^8 \lesssim n^5 p^4 + n^6p^6 + n^7p^8 \,,
$$
where the last inequality follows from the fact that $n^5p^4 \ge n^4 p^4$, $n^6p^6 \ge n^6p^8$ and $n^6p^6 \ge n^4p^4$ as $np \rightarrow \infty$. As a consequence, we have 
$$\var(\|{A^*}^\top A^*\|_F^2) \lesssim (np)^{-4} \var(\|A^\top A\|_F^2) \lesssim n + n^2p^2 + n^3p^4 \,.
$$

{\bf Step 3: Chebychev's inequality: }The last step involves an application of Chebychev's inequality: 
\begin{align*}
    \bbP(\|{A^*}^\top A^*\|_F^2 - \bbE\{\|{A^*}^\top A^*\|_F^2\} \geqslant n + n^2p^2) & \leqslant \var(\|{A^*}^\top A^*\|_F^2)/(n + n^2p^2)^2 \\
    & \lesssim n^{-1} \,.
\end{align*}
Therefore, we have $\|{A^*}^\top A^*\|_F^2 \leqslant \bbE (\| {A^*}^\top A^* \|_F^2) + n + n^2p^2 \lesssim n + n^2p^2$ with probability $1 - n^{-1}$.
\end{proof}

\subsection{Proof of lemma \ref{lemma: expectationZTZ}}
\begin{proof}
Note that the matrix $\bZ$ can be written as: 
$$
\bZ = \begin{bmatrix}
    A^* & I
\end{bmatrix} \begin{pmatrix}
    \bX & \mathbf{0}_{n \times p} \\
    \mathbf{0}_{n \times p} & \bX 
\end{pmatrix} \,.
$$
Therefore we have: 
$$
\bZ^\top \bZ = \begin{pmatrix}
    \bX^\top {A^*}^\top  {A^*} \bX & \bX^\top {A^*}^\top \bX \\
    \bX^\top A^* \bX & \bX^\top \bX \,
\end{pmatrix}
$$
Recall that $A_{ii} = 0$ and $ A^*_{ij} \sim \{(n-1)p\}^{-1/2}\ber(p)$. First we show that $\bbE[\bX^\top A^* \bX] = 0$. Towards that end, as $A_{ii} = 0$, 
\begin{align*}
    \bbE[\bX^\top A \bX] = \bbE\left[\bX^\top \left(\sum_{i \neq j}A_{ij} e_ie_j^\top\right)\bX \right] & = \bbE\left[\sum_{i \neq j} A_{ij} (\bX^\top e_i)(e_j^\top \bX)\right] \\
    & = \bbE\left[\sum_{i \neq j} A_{ij} \bx_i \bx_j^\top \right]
\end{align*}
Now as rows of $\bX$ are independent and have mean 0 and $A \indep \bX$, we have: 
$$
\bbE[A_{ij}\bx_i\bx_j^\top] = \bbE[A_{ij}]\bbE[\bx_i] \bbE[\bx_j^\top] = 0 \,.
$$
This establishes $\bbE[\bX^\top A \bX] = 0$. For the bottom right term of $\bZ^\top \bZ$, we have: 
\begin{align*}
    \frac{1}{(n-1)p}\bbE\left[\bX^\top A^\top A \bX\right] & = \frac{1}{(n-1)p}\sum_{i} \bbE[(A^\top A)_{ii}\bx_i \bx_i^\top] + \frac{1}{(n-1)p}\sum_{i \neq j} \bbE[(A^\top A)_{ij}\bx_i \bx_j^\top] \\
    & = \frac{1}{(n-1)p}\sum_{i} \bbE[(A^\top A)_{ii}\bx_i \bx_i^\top] \hspace{0.1in} [\because \bx_i \indep \bx_j \ \text{ for } i \neq j \text{ and have mean }0] \\
    & = \Sigma_X \left(\frac{1}{(n-1)p}\sum_{i} \bbE[(A^\top A)_{ii}]\right) \\
    & = \Sigma_X \left(\frac{1}{(n-1)p}\sum_{i} \bbE\left[\sum_{j = 1}^n A_{ji}^2\right]\right) = \Sigma_X n \,.
\end{align*}
The above calculation implies: 
$$
\frac{1}{n}\bbE\left( \bZ^\top \bZ\right)  = \begin{pmatrix}
    \Sigma_X & \b0 \\
    \b0 & \Sigma_X 
\end{pmatrix} = \begin{pmatrix}
        1 &  0 \\
        0 & 1
    \end{pmatrix} \otimes \Sigma_X \triangleq \Sigma_Z \,.
$$
\end{proof}

\section{AUC for transferable source data detection}
\begin{figure}[H]
    \centering
    \includegraphics[scale=0.2]{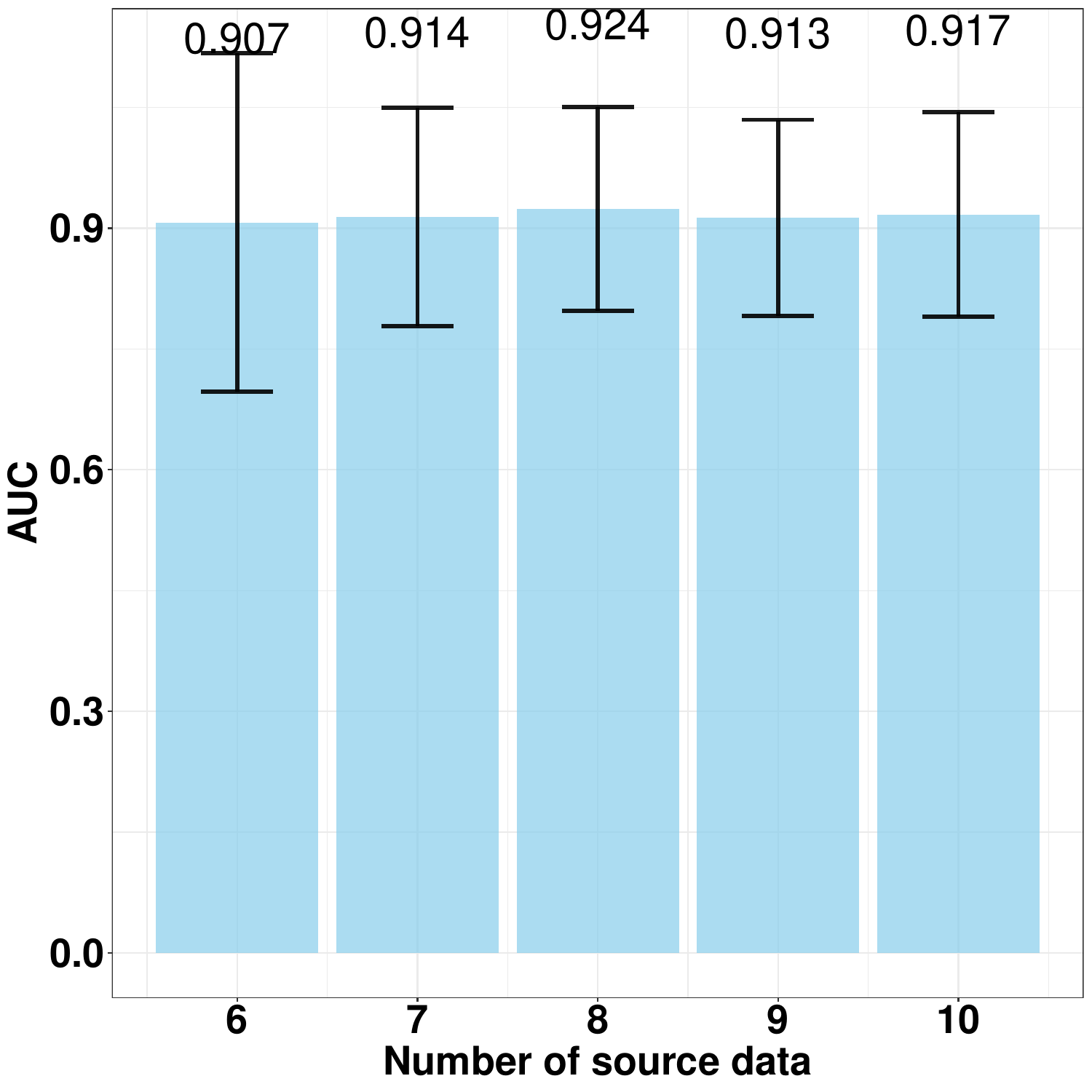}
    \caption{ER}
    \label{fig:detect1}
\end{figure}

\begin{figure}[H]
    \centering
    \includegraphics[scale=0.2]{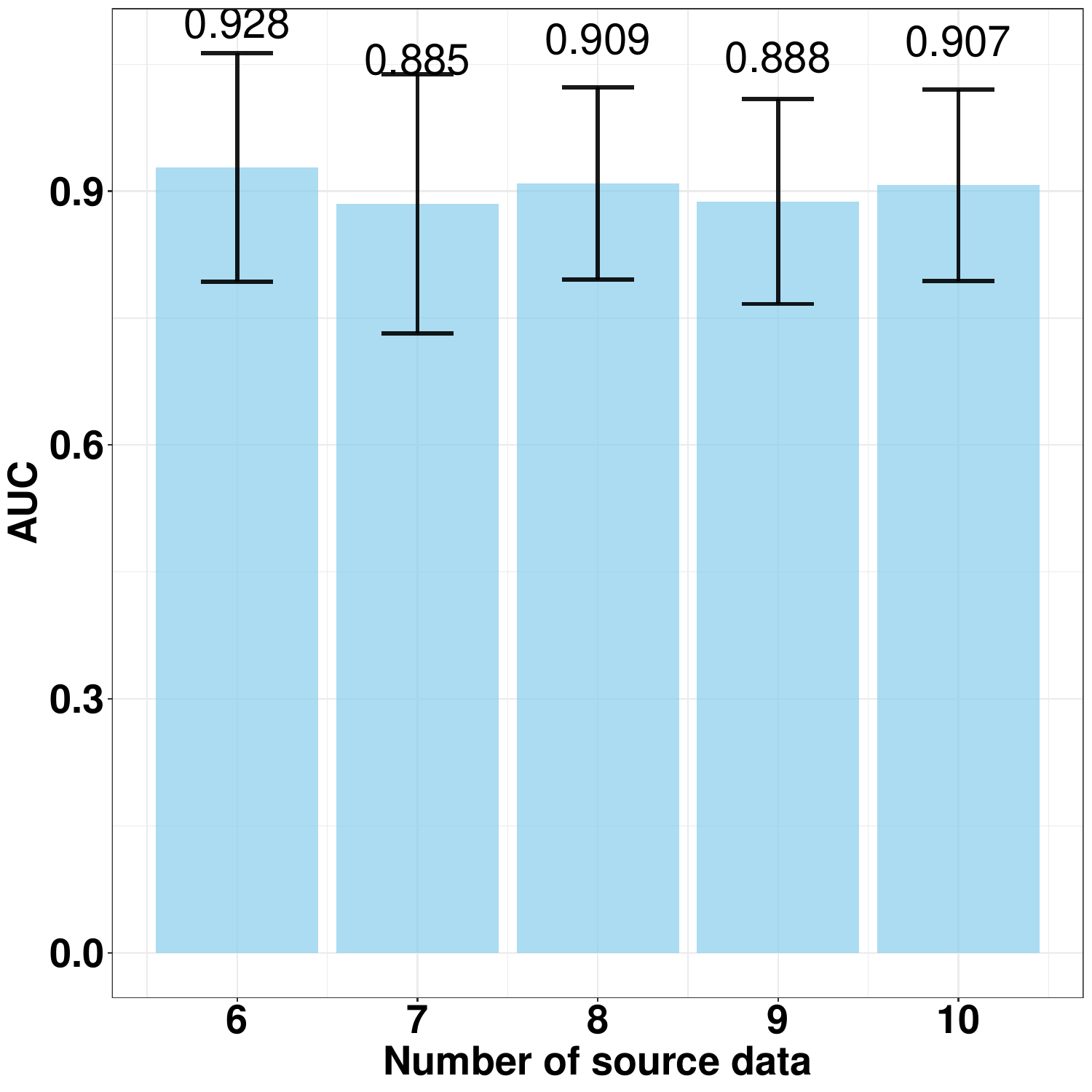}
    \caption{SBM}
    \label{fig:detect2}
\end{figure}

\section{Trans-NCR Algorithm}

\begin{algorithm}
\caption{Trans-NCR Algorithm}
\begin{algorithmic}
\STATE \textbf{Input}: Target data ($\mathbf{y}_0,\mathbf{X}_0,A_0$), source data $\{\mathbf{y}_k,\mathbf{X}_k,A_k\}_{k=1}^K$.
\STATE \underline{\textbf{Step 1.}} Let \(\mathcal{I}\) be a random subset of \(\{1, \ldots, n_{0}\}\) such that \(|\mathcal{I}| \approx c_0 n_{0}\) with some constant $0 < c_0 < 1$. Let \(\mathcal{I}^{c}=\{1, \ldots, n_{0}\} \backslash \mathcal{I}\).
\STATE \underline{\textbf{Step 2.}} Construct \(L+1\) candidate sets \(\mathcal{A},\left\{\widehat{G}_{0}, \widehat{G}_{1}, \ldots, \widehat{G}_{L}\right\}\) such that \(\widehat{G}_{0}=\emptyset\) and \(\widehat{G}_{1}, \ldots, \widehat{G}_{L}\) are based on (\ref{eq:def_Gl}) using \(\left(\mathbf{Z}_{0, \mathcal{I}}, \mathbf{y}_{0, \mathcal{I}}\right)\) and \(\left\{\mathbf{Z}_{k}, \mathbf{y}_{k}\right\}_{k=1}^{K}\).
\STATE \underline{Step 2.1.} For \(1 \leq k \leq K\), compute the marginal statistics $\hat{R}_k=\|\widehat\Delta_k\|_2^2$. For each \(k \in\{1, \ldots, K\}\), let \(\widehat{T}_{k}\) be obtained by SURE screening such that
\[
\widehat{T}_{k}=\left\{1 \leq j \leq 2d:\left|\widehat\Delta_{kj}\right| \text{ is among the first } t_{*} \text{ largest of all }\right\}
\]
for a fixed \(t_{*}=n_{*}^{\alpha}, 0 \leq \alpha<1\).
\STATE \underline{Step 2.2.} Define the estimated sparse index for the \(k\)-th auxiliary sample as 
$\widehat{R}_k=\left\|\widehat{\Delta}_{k\widehat{T}_{k}}\right\|_{2}^{2}$.
\STATE \underline{Step 2.3.} Compute \(\widehat{G}_{l}\)  for \(l=1, \ldots, L\).
\STATE \underline{\textbf{Step 3.}} For each \(0 \leq l \leq L\), run the Oracle Trans-Lasso algorithm with primary sample \(\left(\mathbf{Z}_{0, \mathcal{I}}, \mathbf{y}_{0, \mathcal{I}}\right)\) and auxiliary samples \(\left\{\mathbf{Z}_{k}, \mathbf{y}_{k}\right\}_{k \in \widehat{G}_{l}}\). Denote the output as \(\hat{\gamma}\left(\widehat{G}_{l}\right)\) for \(0 \leq l \leq L\).
\STATE \underline{\textbf{Step 4.}} Compute $\hat{\theta}$ as in \eqref{eq:thetahat} for some \(\lambda_{\theta}>0\).
\STATE \underline{\textbf{Step 5.}} Calculate
\[
\hat{\gamma}^{\hat{\theta}}=\sum_{l=0}^{L} \hat{\theta}_{l} \hat{\gamma}\left(\widehat{G}_{l}\right).
\]
\STATE \textbf{Output}: $\hat\gamma_{\hat\theta}$.
\end{algorithmic}
\label{al:source_selection_alg}
\end{algorithm}

\section{Additional Figures}

\begin{figure}[H]
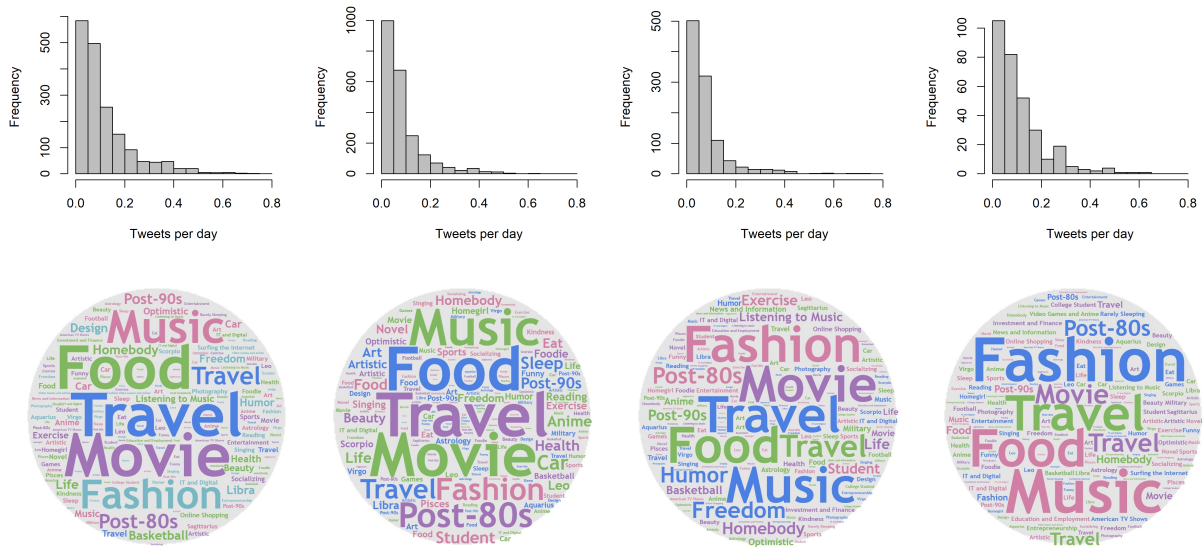

\centering
\includegraphics[width=.25\textwidth]{fig_new/file_realdata/Beijing.png}\hfill
\includegraphics[width=.25\textwidth]{fig_new/file_realdata/Shanghai.png}\hfill
\includegraphics[width=.25\textwidth]{fig_new/file_realdata/Fujian.png}\hfill
\includegraphics[width=.25\textwidth]{fig_new/file_realdata/Liaoning.png}\hfill
\includegraphics[width=.25\textwidth]{fig_new/file_realdata/BJ.jpg}\hfill
\includegraphics[width=.25\textwidth]{fig_new/file_realdata/SH.jpg}\hfill
\includegraphics[width=.25\textwidth]{fig_new/file_realdata/FJ.jpg}\hfill
\includegraphics[width=.25\textwidth]{fig_new/file_realdata/LN.jpg}\hfill
\centering
\caption{\color{black}The upper panel displays histograms of tweets per day across different provinces, illustrating the frequency distribution. The lower panel presents word clouds representing user interests in each province, with word size indicating the relative frequency of each tag. From left to right: Beijing, Shanghai, Fujian, Liaoning.}\label{fig:hist}
\end{figure}

\bibliographystyle{apalike}
\bibliography{Mybib, ref_cheng}